\title{On the Similarity between the Laplace\\ and Neural Tangent Kernels}
\author{Amnon Geifman$^1$ \And Abhay Yadav$^2$ \And Yoni Kasten$^1$ \AND 
Meirav Galun$^1$ \hspace*{1.4cm} David Jacobs$^2$ \hspace*{1.1cm} Ronen Basri$^1$\\~\\
$^1$Department of Computer Science, Weizmann Institute of Science, Rehovot, Israel\\
\scriptsize{\texttt{\{amnon.geifman, yoni.kasten, meirav.galun, ronen.basri\}@weizmann.ac.il}}\\[0.1cm]
$^2$Department of Computer Science, University of Maryland, College Park, MD\\
\scriptsize{\texttt{\{jaiabhay,djacobs\}@cs.umd.edu}}}
\newtheorem{theorem}[]{Theorem}
\newtheorem{lemma}[]{Lemma}
\newtheorem{definition}{Definition}[]
\newtheorem{corollary}{Corollary}[]
\newcommand{\Real}{\mathbb{R}}
\newcommand{\Rd}{\Real^d}
\newcommand{\Sphere}{\mathbb{S}}
\newcommand{\Spdm}{\Sphere^{d-1}}
\newcommand{\norm}[1]{\left\lVert#1\right\rVert}
\newcommand{\abs}[1]{\left\vert#1\right\rvert}
\newcommand{\f}{\mathbf{f}}
\newcommand{\g}{\mathbf{g}}
\newcommand{\tr}{\mathbf{t}}
\newcommand{\w}{\mathbf{w}}
\newcommand{\X}{\mathcal{X}}
\newcommand{\x}{\mathbf{x}}
\newcommand{\y}{\mathbf{y}}
\newcommand{\z}{\mathbf{z}}
\newcommand{\hh}{{\mathcal{H}}}
\newcommand{\HNTKFC}{{\hh^{\mathrm{FC_\beta(2)}}}}
\newcommand{\HNTKFCb}{{\hh^{\mathrm{FC_\beta(L)}}}}
\newcommand{\HNTKFCbl}[1]{\hh^{\mathrm{FC_\beta(#1)}}}
\newcommand{\HExp}{{\hh^{\mathrm{Lap}}}}
\newcommand{\HGauss}{{\hh^{\mathrm{Gauss}}}}
\newcommand{\lExp}{\lambda^{\mathrm{Lap}}}
\newcommand{\lNTKFCbTwo}{\lambda^{\mathrm{FC_\beta(2)}}}
\newcommand{\lNTKFCb}{\lambda^{\mathrm{FC_\beta(L)}}}
\newcommand{\kr}{\boldsymbol{k}}
\newcommand{\krfc}{\kr^{\mathrm{FC_0(L)}}}
\newcommand{\krfcl}[1]{\kr^{\mathrm{FC_0(#1)}}}
\newcommand{\krfcb}{\kr^{\mathrm{FC_\beta(L)}}}
\newcommand{\krfcbl}[1]{\kr^{\mathrm{FC_\beta(#1)}}}
\newcommand{\krb}{\kr^{\mathrm{Bias(L)}}}
\newcommand{\krbtwo}{\kr^{\mathrm{Bias(2)}}}
\newcommand{\krbl}[1]{\kr^{\mathrm{Bias(#1)}}}
\newcommand{\krfctwo}{\kr^{\mathrm{FC_\beta(2)}}}
\newcommand{\krres}{\kr^{\mathrm{Res_0(L)}}}
\newcommand{\krexp}{\kr^{\mathrm{Lap}}}
\newcommand{\krhexp}{\kr^{\mathrm{HLap}}}
\newcommand{\krgamma}{\kr^{\mathrm{\gamma}}}
\newcommand{\rb}[1]{\textcolor{purple}{[Ronen: #1]}}
\newcommand{\ag}[1]{\textcolor{green}{[Amnon: #1]}}
\newcommand{\abh}[1]{\textcolor{violet}{[Abhay: #1]}}
\newcommand{\cmt}[1]{}
\newcommand{\resnet}[1]{}  
\newcommand{\broaderimpact}[1]{}
\begin{document}

\maketitle

\begin{abstract}
    Recent theoretical work has shown that massively overparameterized neural networks are equivalent to kernel regressors that use {\em Neural Tangent Kernels} (NTKs).  Experiments show that these kernel methods perform similarly to real neural networks.  Here we show that NTK for fully connected networks with ReLU activation is closely related to the standard Laplace kernel.  We show theoretically that for normalized data on the hypersphere both kernels have the same eigenfunctions and their eigenvalues decay polynomially at the same rate, implying that their Reproducing Kernel Hilbert Spaces (RKHS) include the same sets of functions.  This means that both kernels give rise to classes of functions with the same smoothness properties. The two kernels differ for data off the hypersphere,  but experiments indicate that when data is properly normalized these differences are not significant. Finally, we provide experiments on real data comparing NTK and the Laplace kernel, along with a larger class of $\gamma$-exponential kernels.  We show that these  perform almost identically.  Our  results suggest that much insight about neural networks can be obtained from analysis of the well-known Laplace kernel, which has a simple closed form.
\end{abstract}

\section{Introduction}

Neural networks with significantly more parameters than training examples have been successfully applied to a variety of tasks.  Somewhat contrary to common wisdom, these models typically generalize well to unseen data.  It has been shown that in the limit of infinite model size, these neural networks are equivalent to kernel regression using a family of novel {\em Neural Tangent Kernels} (NTK) \cite{jacot2018neural,arora2019exact}. NTK methods can be analyzed to explain many properties of neural networks in this limit, including their convergence in training and ability to generalize \cite{belkin2018overfitting,belkin2018understand,bordelon2020spectrum,liang2018just}. Recent experimental work has shown that in practice, kernel methods using NTK perform similarly, and in some cases better, than neural networks \cite{Arora2020Harnessing}, and that NTK can be used to accurately predict the dynamics of neural networks \cite{allen-zhu2019,arora2019exact,ronen2019convergence}. This suggests that a better understanding of NTK can lead to new ways to analyze neural networks.

These results raise an important question: Is NTK significantly different from standard kernels?  For the case of fully connected (FC) networks, \cite{Arora2020Harnessing} provides experimental evidence that NTK is especially effective, showing that it outperforms the Gaussian kernel on a large suite of machine learning problems.  Consequently, they argue that NTK should be added to the standard machine learning toolbox.  \cite{belkin2018understand} has shown empirically that the dynamics of neural networks on randomly labeled data more closely resembles the dynamics of learning through stochastic gradient descent with the Laplace kernel than with the Gaussian kernel.  In this paper we show theoretically and experimentally that NTK does closely resemble the Laplace kernel, already a standard tool of machine learning.

\begin{figure}[t]
    \centering
    \includegraphics[height=2.5cm]{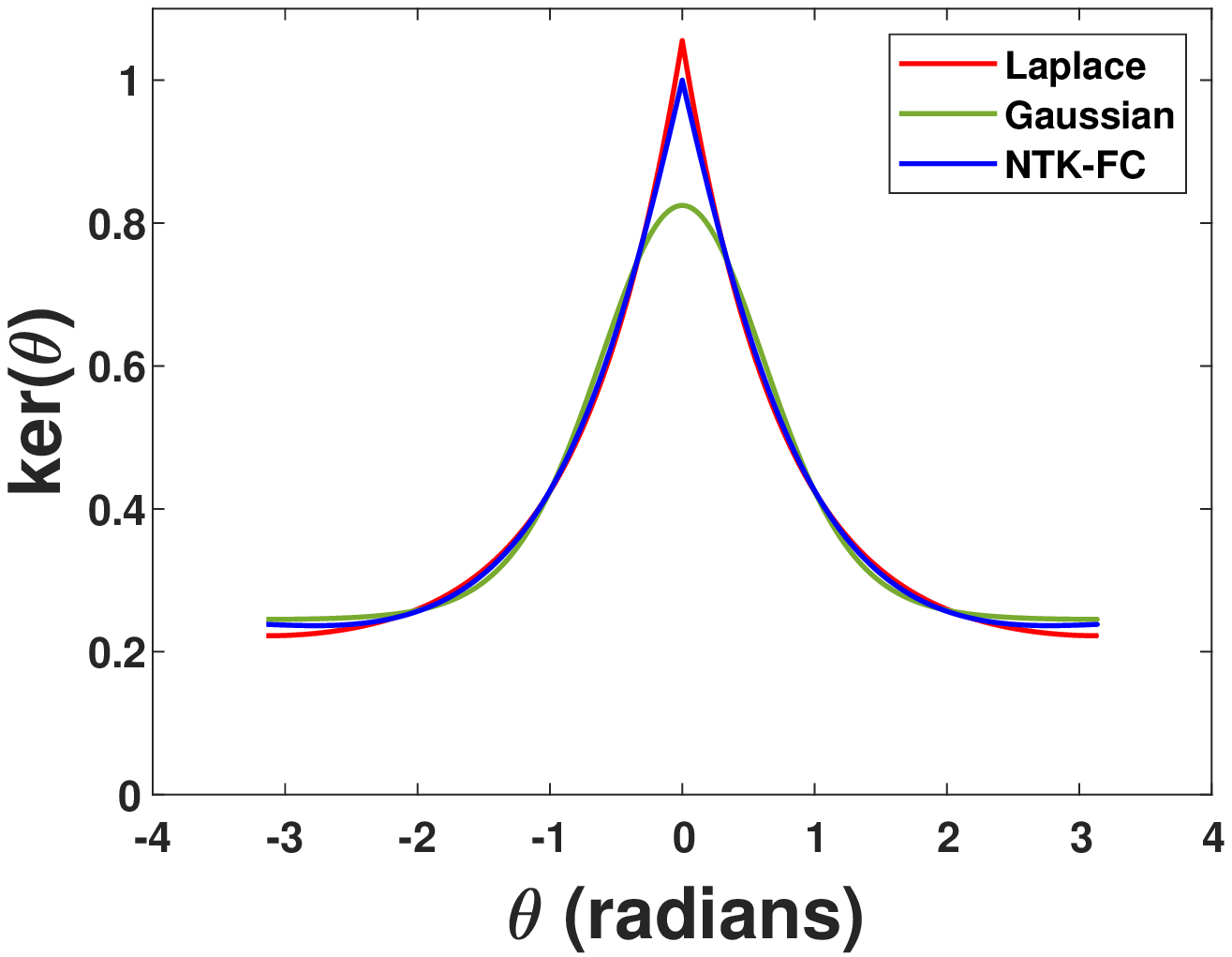}
    \includegraphics[height=2.5cm]{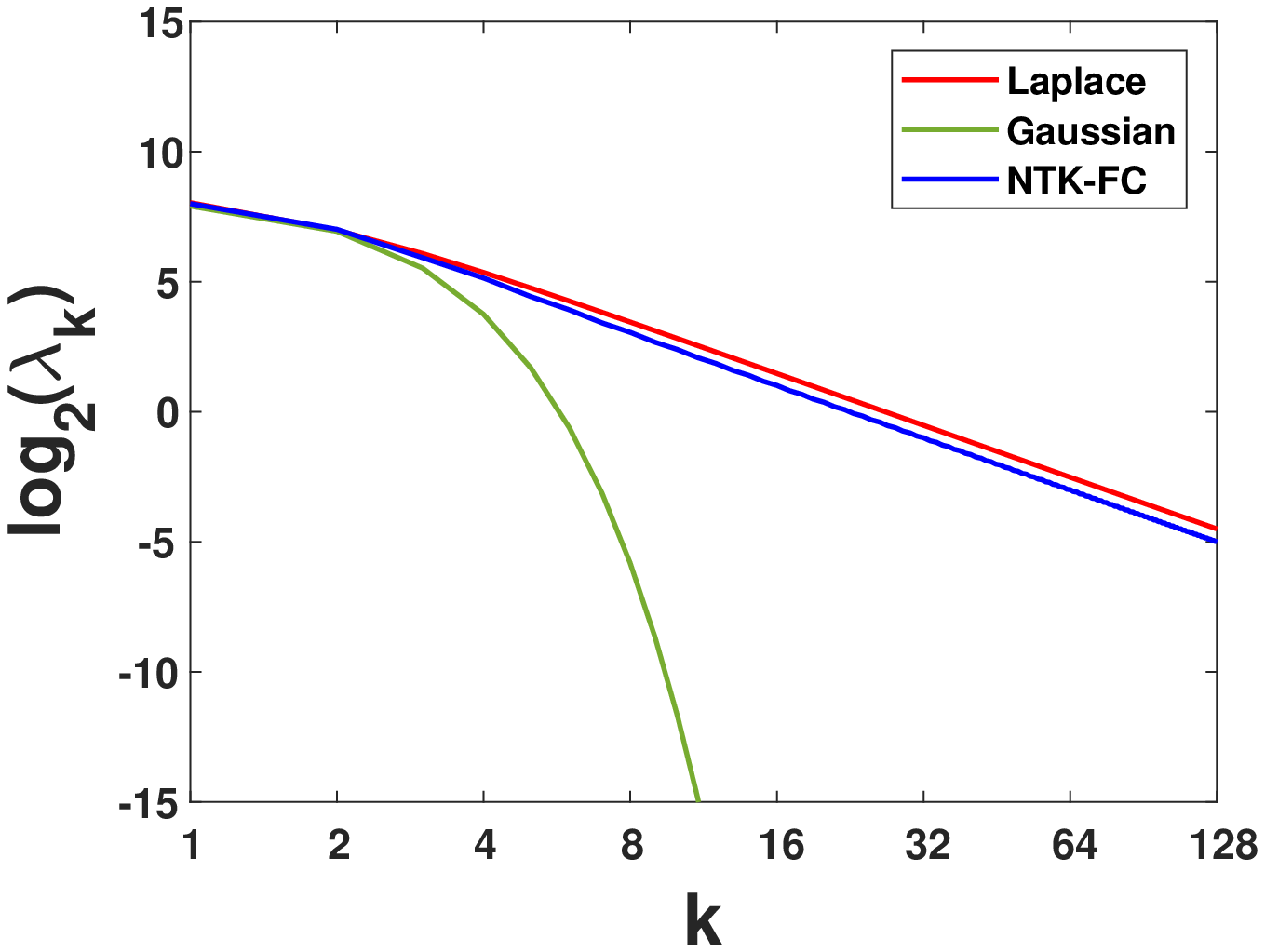}
    \includegraphics[height=2.5cm]{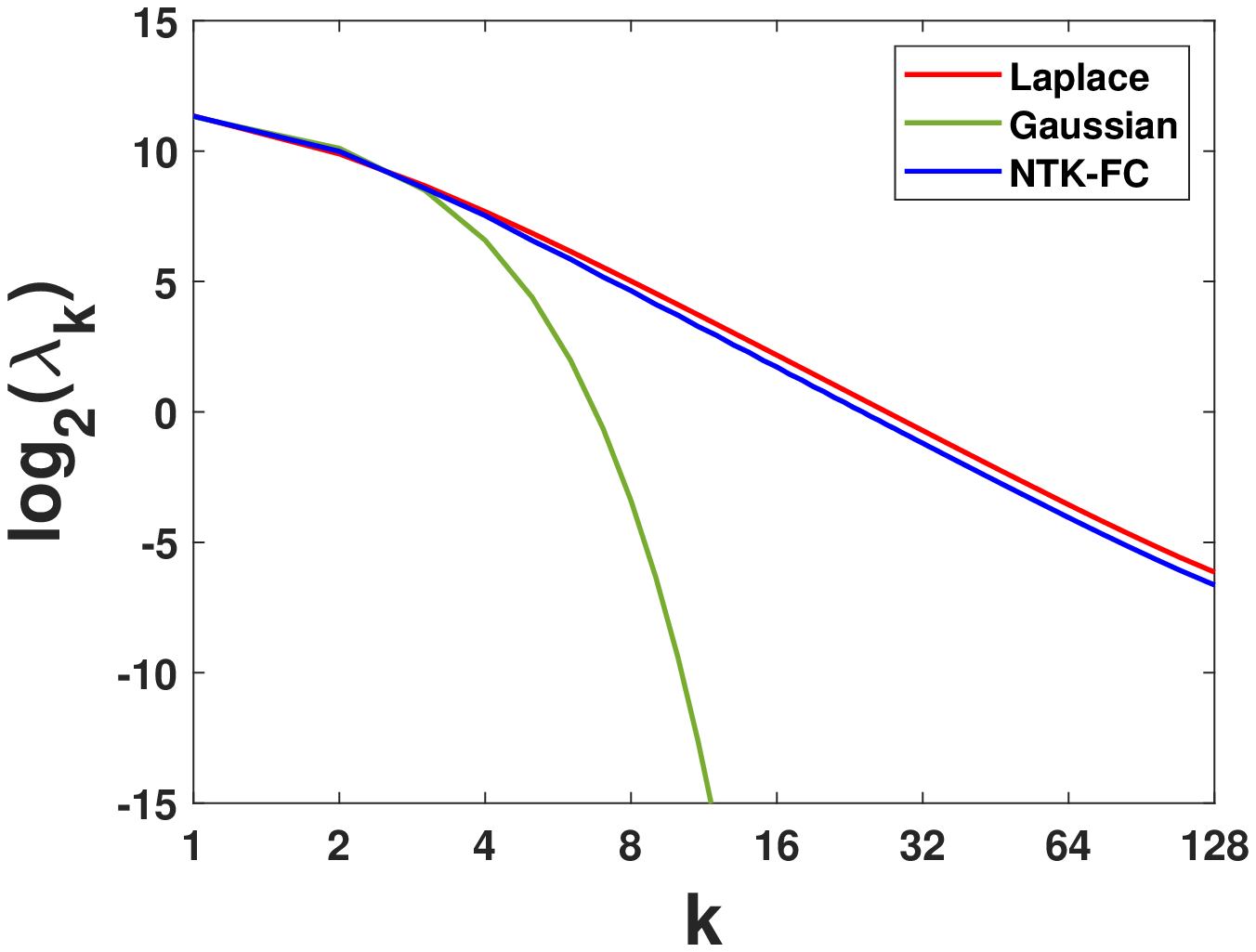}
    \includegraphics[height=2.5cm]{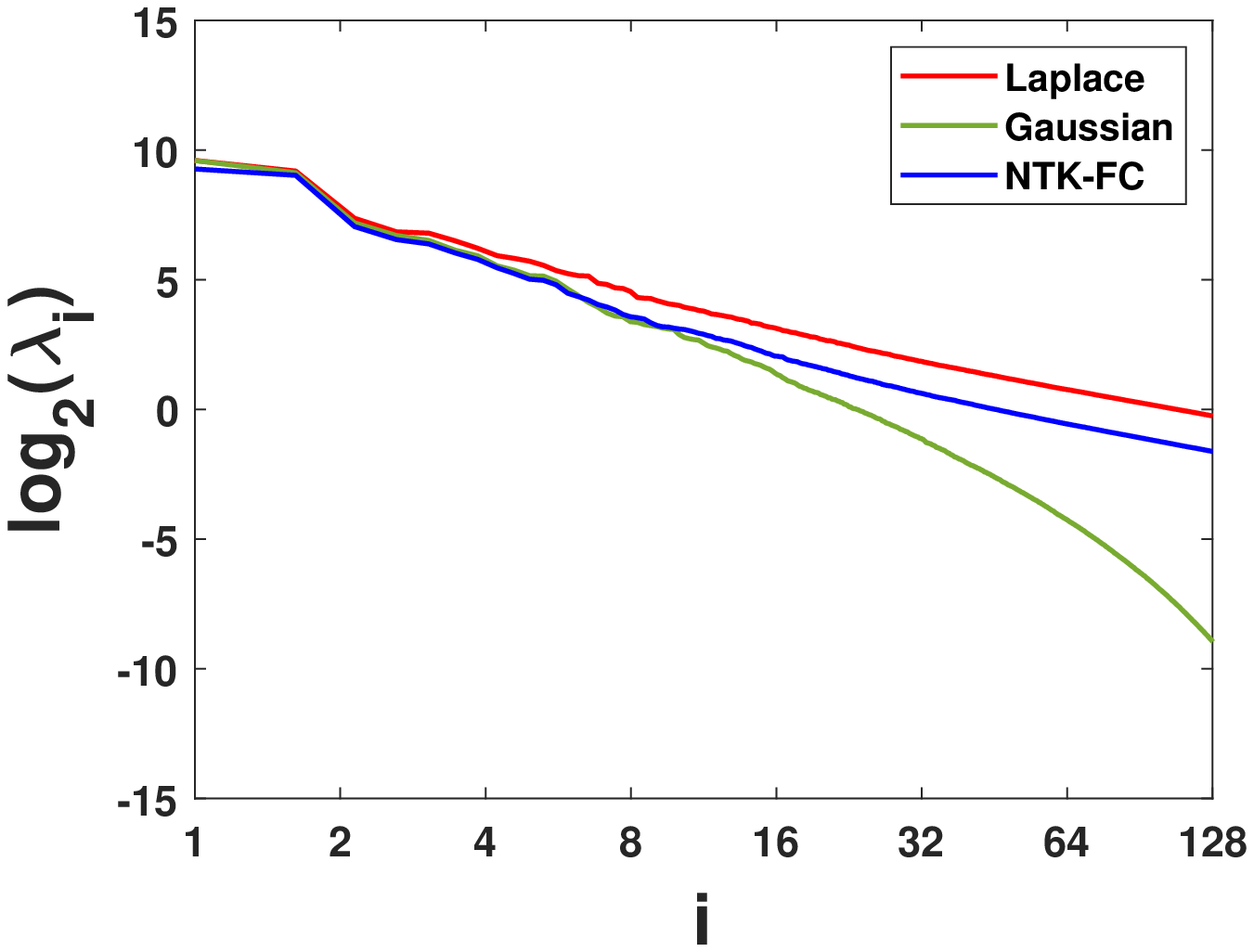}
    
    \caption{\small Left: An overlay of the NTK for a 6-layer FC network with ReLU activation with the Laplace and Gaussian kernels,  as a function of the angle between their arguments. The exponential kernels are modulated by an affine transformation to achieve a least squares fit to the NTK. Note the high degree of similarity between the Laplace kernel and NTK. Middle left: eigenvalues as a function of frequency in $\Sphere^1$. The slopes in these log-log plots indicate the rate of decay, which is similar for both the Laplace kernel and for NTK for the FC network \resnet{and residual networks }with 6 layers. (Empirical slopes are -1.94 for both Laplace and NTK-FC\resnet{ and NTK-Res}.) The eigenvalues of the Gaussian kernel, in contrast, decay exponentially. Middle right: Same for $\Sphere^2$. (Empirical slopes are -2.75 for the Laplace and NTK-FC\resnet{ and -2.78 for NTK-Res}.) Right: Same estimated for the UCI Abalone dataset (here we show eigenvalues as function of eigenvalue index).}
    \label{fig:exponent_uniform_resnet}
\end{figure}

Kernels are mainly characterized by their corresponding Reproducing Kernel Hilbert Space (RKHS), which determines the set of functions they can produce \cite{kanagawa2018gaussian}. They are further characterized by the RKHS norm they induce, which is minimized (implicitly) in every regression problem. Our main result is that when restricted to the hypersphere $\Spdm$, NTK for a fully connected (FC) network with ReLU activation and bias has the same RKHS as the Laplace kernel, defined as $\krexp(\x,\z)=e^{-c\|\x-\z\|}$ for points $\x,\z \in \Spdm$ and constant $c>0$. (In general, NTK for deeper networks is more sharply peaked, corresponding to larger values of $c$, see supplementary material.) This equivalence of RKHSs is shown by establishing that on the hypersphere the eigenfunctions of NTK and the Laplace kernels coincide and their eigenvalues decay at the same rate (see Figure \ref{fig:exponent_uniform_resnet}), implying in turn that gradient descent (GD) with both kernels should have the same dynamics, explaining \cite{belkin2018understand}'s experiments. In previous work, the eigenfunctions and eigenvalues of NTK have been derived on the hypersphere for networks with only one hidden layer, while these properties of the Laplace kernel have been studied in $\Rd$.  We derive new results for the Laplace kernel on the hypersphere, and for NTK for deep networks on the hypersphere and  in $\Rd$. In $\Rd$, NTK gives rise to radial eigenfunctions, forgoing the shift invariance property of exponential kernels.  Experiments indicate that this difference is not significant in practice.

Finally, we show experiments indicating that the Laplace kernel achieves similar results to those obtained with NTK on real-world problems. We further show that by using the more general, $\gamma$-exponential kernel~\cite{GammaExponential}, which allows for one additional parameter, $\krgamma(\x,\z)=e^{-c\|\x-\z\|^\gamma}$, we achieve slightly better performance than NTK on a number of standard datasets.

\section{Related Works}

The connection between neural networks and kernel methods has been investigated for over two decades. Early works have noted the equivalence between neural networks with single hidden layers of infinite width and Gaussian Processes (GP) \cite{williams1997computing,neal2012bayesian}, where GP prior can be used to achieve exact Bayesian inference. Recently \cite{lee2017deep,matthews2018gaussian} have extended the results to deep fully-connected neural networks in which all but the last layer retain their initial values. In this context, \cite{cho2009kernel} introduced the Arc-cosine kernel, while \cite{daniely2016toward} showed a duality between neural networks and compositional kernels. 

More recent work introduced the family of neural tangent kernels (NTK) \cite{jacot2018neural,arora2019exact}. This work showed that for massively overparameterized and fully trained networks, their training dynamics closely follows the path of kernel gradient descent, and that training converges to the solution of kernel regression with NTK. Follow-up work defined analogous kernels for residual \cite{huang2020deep} and convolutional networks \cite{arora2019exact,li2019enhanced}. Recent work also showed empirically that classification with NTK achieves performance similar to deep neural networks with the corresponding architecture \cite{arora2019exact,li2019enhanced}.

The equivalence between kernels and overparameterized neural networks opened the door to studying inductive bias in neural networks. For two layer, FC networks,
\cite{bach2017breaking,ronen2019convergence,cao2019towards} investigated the spectral property of the NTK when the data is distributed uniformly on the hypersphere, showing in particular that with GD low frequencies are learned before higher ones.  \cite{basriNonuniform} extended these results to non-uniform distributions. \cite{yang2019fine} analyzed the eigenvalues of NTK over the Boolean cube, and \cite{fan2020spectra} analyzed its spectrum under approximate pairwise orthogonality. \cite{bach2017breaking,bietti2019inductive} further leveraged the spectral properties of the kernels to investigate their RKHS in the case of bias-free two layer networks. Our results apply to deep networks with bias. \cite{ghorbani2019linearized} studied approximation bounds for two layer neural networks, and \cite{belkin2018overfitting,belkin2018understand,bordelon2020spectrum,liang2018just} studied generalization properties of kernel methods in the context of neural networks.

Positive definite kernels and their associated RKHSs have been studied extensively, see, e.g., \cite{saitoh2016theory,stein2016introduction} for reviews. The spectral properties of classic kernels, e.g., the Gaussian and Laplace kernels, are typically derived for input in $\mathbb{R}^d$ \cite{kimeldorf1970correspondence}. Several papers examine the RKHS of common kernels (e.g., the Gaussian and polynomial) on the hypersphere \cite{minh2006mercer,narcowich2007approximation,narcowich2002scattered}.  

Recent work compares the performance of NTK to that of common kernels. Specifically, \cite{Arora2020Harnessing}'s experiments suggest that NTK is superior to the Gaussian and low degree polynomial kernels. \cite{belkin2018understand} compares the learning speed of GD for randomly mislabeled data, showing that NTK learns such data as fast as the Laplace kernel and much faster than the Gaussian kernel. Our analysis provides a theoretical justification of this result.

\section{NTK vs.\ the Exponential Kernels}

Our aim is to compare NTK to common kernels. In comparing kernels we need to consider two main properties: first, what functions are included in their respective RKHS and secondly, how their respective norms behave. (These concepts are reviewed below in Sec.~\ref{sec:preliminaries}.) The answer to the former question determines the set of functions considered for regression, while the answer to the latter determines the result of regression. Together, these will determine how a kernel generalizes to unseen data points. Below we see that on the hypersphere both NTK and exponential kernels (e.g., Gaussian and Laplace) give rise to the same set of eigenfunctions. Therefore, the answers to the questions above are determined fully by the corresponding eigenvalues. Moreover, the asymptotic decay rate of the eigenvalues of each kernel determines their RKHS.

As an example consider the exponential kernels in $\Rd$, i.e., the kernels $e^{-c\norm{\x-\z}^{\gamma}}$, where $c>0$ and $0 < \gamma \leq 2$~\cite{GammaExponential}. These shift invariant kernels have the Fourier transform as their eigenfunctions. The eigenvalues of the Gaussian kernel, i.e.,  $\gamma =2$, decay exponentially, implying that its respective RKHS includes only infinitely smooth functions. In contrast, the eigenvalues of the Laplace kernel, i.e., $\gamma=1$, decay polynomially, forming a space of continuous, but not necessarily smooth functions. 

Our main theoretical result is that when restricted to the hypersphere $\Spdm$
\begin{equation*}
    \HGauss \subset \HExp = \HNTKFC \subseteq \HNTKFCb,
\end{equation*}
where $\HGauss$ and $\HExp$ denote the RKHSs associated with the Gaussian and Laplace kernels, and $\HNTKFCb$ denotes the NTK for a FC network with $L$ layers, ReLU activation, and bias. Further empirical results indicate that $\HExp = \HNTKFCb$ for the entire range $L \ge 2$\resnet{ for fully connected networks, as well as for residual networks}.
Indeed, the subsequent work of~\cite{chen2020deep} proves that $\HNTKFC \supseteq \HNTKFCb$, thus together with our results proving that $\HNTKFC = \HNTKFCb$.

Next we briefly recall basic concepts in kernel regression. We subsequently characterize the RKHS of NTK and the Laplace kernel and show their equivalence in $\Spdm$. Finally, we discuss how these kernels extend outside of the sphere to the entire $\Rd$ space. All lemmas and theorems are proved in the supplementary material.

\subsection{Preliminaries}  \label{sec:preliminaries}

We consider positive definite kernels $\kr: \X \times \X \rightarrow \Real$ defined over a compact metric space $\X$ endowed with a finite Borel measure $\mathcal{V}$. Each such kernel is associated with a Reproducing Kernel Hilbert Space (RKHS) of functions, $\hh$, which includes the set of functions the kernel reproduces, i.e., $f(\x)=\langle f,\kr(\cdot,\x)\rangle_\hh$ where the inner product is inherited from the respective Hilbert space. 
For such kernels the following holds:
\begin{enumerate}
    \item For all $x\in \mathcal{X}$ we have that the  $\kr(\cdot,x)\in \mathcal{H}$.
    \item Reproducing property: for all $x\in \mathcal{X}$ and for all $f\in \mathcal{H}$ it holds that $f(x)=\langle f,\kr(\cdot,x)\rangle_{\mathcal{H}}$.
\end{enumerate}
Moreover, RKHSs and positive definite kernels are uniquely paired.

According to Mercer's theorem $\kr$ can be written as
\begin{align}  \label{eq:mercer}
    \kr(\x,\z)=\sum_{i\in I} \lambda_i \Phi_i(\x) \Phi_i(\z), ~~~ \x,\z \in {\cal X},
\end{align}
where $\{(\lambda_i,\Phi_i)\}_{i\in I}$ are the eigenvalues and eigenfunctions of $\kr$ with respect to the measure $\mathcal{V}$, i.e.,
\begin{align*}
    \int k(\x,\z) \Phi_i(\z) d\mathcal{V}(\z) = \lambda_i \Phi_i(\x).
\end{align*}
The RKHS $\hh$ is the space of functions $f \in \hh$ of the form $f(\x)=\sum_{i \in I} \alpha_i \Phi_i(\x)$ whose RKHS norm is finite, i.e., $\|f\|_\hh=\sum_{i \in I} \frac{\alpha_i^2}{\lambda_i}<\infty$. The latter condition restricts the set of functions in an RKHS, allowing only functions that are sufficiently smooth in accordance with the asymptotic decay of $\lambda_k$.

The literature considers many different kernels (see, e.g., \cite{Genton2001}). Here we discuss the family of $\gamma$-exponential kernels $\krgamma(\x,\z)=e^{-c\|\x-\z\|^\gamma}$, $0< \gamma \leq 2$, which include the Laplace ($\gamma=1$) and the Gaussian ($\gamma=2$) kernels.

\textbf{Neural Tangent Kernel}. Let $f(\theta,\x)$ denote a neural network function with ReLU activation and trainable parameters $\theta$. Then the corresponding NTK is defined as
\begin{equation*}
    \kr^{\mathrm{NTK}}(\x,\z) = \mathbb{E}_{\theta \sim \mathcal{P}} \left< \frac{\partial f(\theta,\x)}{\partial \theta}, \frac{\partial f(\theta,\z)}{\partial \theta} \right>,
\end{equation*}
where expectation is taken over the probability distribution $\mathcal{P}$ of the initialization of $\theta$, and we assume that the width of each layer tends to infinity. Our results focus \resnet{mainly }on NTK kernels corresponding to deep, fully connected network architectures that may or may not include bias, where bias, if it exists, is initialized at zero. We denote these kernels by $\krfc$ for the bias-free version and $\krfcb$ for NTK with bias and define them in the supplementary material. 
\resnet{ We further provide some theoretical and empirical results for NTK corresponding to residual networks, denoted $\krres$, defined in the supplementary material.} 

\textbf{Kernel regression}. Given training data $\{(\x_i,y_i)\}_{i=1}^n$, $\x_i \in \X$, $y_i \in \Real$, kernel ridge regression is the solution to
\begin{equation}  \label{eq:regression}
    \min_{f \in \hh}\sum_{i=1}^n(f(\x_i)-y_i)^2+\lambda \|f\|^2_\hh. 
\end{equation}
When $\lambda \rightarrow 0$ this problem is called minimum norm interpolant, and the solution satisfies
\begin{equation}
    \min_{f \in \hh} \|f\|_\hh ~~~ \mathrm{s.t.} ~~~~ \forall i, ~ f(\x_i)=y_i.
\end{equation}

The solution of \eqref{eq:regression} is given by $f(\x)=\kr_\x^T (K+\lambda I)^{-1} \y$, where the entries of $\kr_\x \in \Real^n$ are $\kr(\x,\x_i)$, $K$ is the $n \times n$ matrix with $K_{ij}=\kr(\x_i,\x_j)$, $I$ denotes the identity matrix, and $\y=(y_1,...,y_n)^T$.
Further review of kernel methods can be found, e.g., in \cite{kanagawa2018gaussian,saitoh2016theory}.

\subsection{NTK in $\Spdm$}
\label{sec:NTKonsphere}

We next consider the NTK for fully connected networks applied to data restricted to the hypersphere $\Spdm$. To characterize the kernel, we first aim to determine the eigenvectors of NTK. This will be a direct consequence of Lemma~\ref{lemma:NTK_sphere}. Subsequently in Theorem~\ref{thm:NTKdecay} we will characterize the decay rate of the corresponding eigenvalues.

\begin{lemma}
\label{lemma:NTK_sphere}
    Let $\krfcb(\x,\z)$, $\x,\z \in \Spdm$, denote the NTK kernels for FC  networks with $L \ge 2$ layers, possibly with bias initialized with zero. This kernel is zonal, i.e., $\krfcb(\x,\z)=\krfcb(\x^T\z)$. (Note the abuse of notation, which should be clear by context.)
\end{lemma}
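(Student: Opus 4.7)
The plan is to exploit the recursive structure of the NTK for a fully connected ReLU network and to propagate zonality from the input through each depth. Recall that the NTK of an $L$-layer FC network can be built from two recursively defined kernel sequences on $\X \times \X$: the Gaussian-process kernel of the pre-activations at layer $l$, $\Sigma^{(l)}(\x,\z)$, and its ``derivative'' counterpart $\dot\Sigma^{(l)}(\x,\z) = \mathbb{E}[\sigma'(u)\sigma'(v)]$, where $(u,v)$ is the centered bivariate Gaussian whose covariance is determined by $\Sigma^{(l)}$ at the pairs $(\x,\x),(\x,\z),(\z,\z)$. When bias initialized at zero is included, $\Sigma^{(l)}$ acquires an additive constant $\beta^2$ from the bias variance at each layer, and the NTK gains an extra additive contribution from the bias gradients. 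All of these pieces are standard and stated in the supplementary material.

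The core of the proof is an induction on the layer index $l$ showing that, for inputs $\x,\z \in \Spdm$, both $\Sigma^{(l)}(\x,\z)$ and $\dot\Sigma^{(l)}(\x,\z)$ depend on $\x$ and $\z$ only through the inner product $\x^T\z$. For the base case, $\Sigma^{(1)}(\x,\z)$ is (up to normalization) $\x^T\z + \beta^2$, so on the sphere it is clearly a function of $\x^T\z$. For the inductive step, the recursion
\begin{equation*}
    \Sigma^{(l+1)}(\x,\z) \;=\; \mathbb{E}_{(u,v)\sim \mathcal{N}(0,\Lambda^{(l)}(\x,\z))}[\sigma(u)\sigma(v)] + \beta^2
\end{equation*}
involves only the $2\times 2$ covariance $\Lambda^{(l)}(\x,\z)$ built from $\Sigma^{(l)}(\x,\x),\Sigma^{(l)}(\z,\z),\Sigma^{(l)}(\x,\z)$. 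The inductive hypothesis, applied in particular with $\x=\z$ (where $\x^T\x=1$), yields $\Sigma^{(l)}(\x,\x)=\Sigma^{(l)}(\z,\z)$ a fixed constant $c_l$ on the sphere, and $\Sigma^{(l)}(\x,\z)$ a function of $\x^T\z$ alone. Hence $\Lambda^{(l)}(\x,\z)$ depends on $\x,\z$ only through $\x^T\z$, and so does $\Sigma^{(l+1)}(\x,\z)$. The same argument applies verbatim to $\dot\Sigma^{(l+1)}$, since $\sigma'(u)\sigma'(v)=\mathbb{1}[u>0]\mathbb{1}[v>0]$ and $\mathbb{E}[\mathbb{1}[u>0]\mathbb{1}[v>0]]$ is determined by $\Lambda^{(l)}$.

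To conclude, I assemble the full NTK in its standard form,
\begin{equation*}
    \krfcb(\x,\z) \;=\; \sum_{l=1}^{L}\Bigl(\Sigma^{(l)}(\x,\z) \;+\; \beta^2\Bigr)\prod_{l'=l}^{L-1}\dot\Sigma^{(l'+1)}(\x,\z),
\end{equation*}
(the exact weighting of bias terms is spelled out in the supplement). Every factor depends on $\x,\z$ only through $\x^T\z$, and finite sums and products of such functions are themselves functions of $\x^T\z$, yielding zonality. The only subtlety, and the main point requiring care, is the bias contribution: even though the biases are initialized at zero, their gradients at initialization are nonzero and contribute to the NTK, so one must verify that those contributions — which are products of derivative kernels from subsequent layers — fall under the same inductive framework. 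Once this is checked, the lemma follows immediately.
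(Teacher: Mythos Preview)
Your proof is correct and uses essentially the same inductive mechanism as the paper: propagate through the recursion that $\Sigma^{(l)}$, $\dot\Sigma^{(l)}$, and hence $\Theta^{(l)}$ depend on $\x,\z$ only through $\x^T\z$, using that on the sphere the diagonal entries $\Sigma^{(l)}(\x,\x)$ are constant. The only difference is one of scope: the paper does not argue directly on $\Spdm$ but instead proves the stronger statement (Theorem~\ref{thm:k_homogeneous}) that in all of $\Rd$ the bias-free kernel $\krfc$ is homogeneous of order $1$ and the bias correction $\krb=\krfcb-\krfc$ is homogeneous of order $0$, from which zonality on the sphere follows immediately by setting $\|\x\|=\|\z\|=1$. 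Your restriction to the sphere is more direct for this lemma, while the paper's version pays off later in Section~3.4 where the homogeneity in $\Rd$ is needed to determine the eigenfunctions off the sphere.
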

We note that for the bias-free $\krfc$ this lemma was proven in \cite{basriNonuniform} and we extend the proof 
to allow for bias. It is well known that the spherical harmonics are eigenvectors for any zonal kernel with respect to the uniform measure on $\Sphere^{d-1}$ with $d \ge 3$. (For background on Spherical Harmonics see, e.g., \cite{gallier2009notes}). Therefore, due to Mercer's Theorem \eqref{eq:mercer}, any zonal kernel $\kr$ can be written as
\begin{equation}  \label{eq:sh}
    \kr(\x,\z) = \sum_{k=0}^\infty \lambda_k \sum_{j=1}^{N(d,k)} Y_{k,j}(\x) Y_{k,j}(\z),
\end{equation}
where $Y_{k,j}(.)$ denotes the spherical harmonics of $\Spdm$, $N(d,k)$ denotes the number of harmonics of order $k$ in $\Spdm$, and $\lambda_k$ are the respective eigenvalues. On the circle $\Sphere^1$ the eigenvectors are the Fourier series, and $\kr(\x,\z)=\sum_{k=0}^\infty \frac{1}{c_k} \lambda_k \cos(k\theta)$, where $\theta=\arccos(\x^T\z)$ and $c_k$ is a normalization factor, $c_0=4\pi^2$ and $c_k=\pi^2$ when $k \ge 1$.


Deriving the eigenvalues for NTK for deep networks is complicated, due to its recursive definition. For a two-layer network without bias, \cite{bach2017breaking,bietti2019inductive} proved that the eigenvalues decay at a rate of $O(k^{-d})$. With no bias, however, two-layer networks are nonuniversal, and in particular $\lambda_k=0$ for odd $k \ge 3$~\cite{ronen2019convergence}. 
To avoid this issue Theorem \ref{thm:NTKdecay} establishes that with bias NTK 
is universal for any number of layers $L \ge 2$, and its eigenvalues decay at a rate no faster than $O(k^{-d})$. Moreover, with $L=2$ the eigenvalues decay exactly at the rate of $O(k^{-d})$.

\begin{theorem}  \label{thm:NTKdecay}
Let $\x,\z \in \mathbb{S}^{d-1}$. With bias initialized at zero:
\begin{enumerate}
\item $\krfcb$ decomposes according to \eqref{eq:sh} with $\lambda_k>0$  for all $k \ge 0$, and
\item $\exists k_0$ and constants $C_1,C_2,C_3>0$ that depend on the dimension $d$ such that $\forall k>k_0$
\begin{enumerate}
    \item $C_1 k^{-d} \leq \lambda_k \leq C_2 k^{-d}$ if $L=2$, and 
    \item $C_3 k^{-d} \leq \lambda_k$ if $L \ge 3$.
\end{enumerate}  
\end{enumerate}
\end{theorem}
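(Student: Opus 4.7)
The plan is to leverage Lemma~\ref{lemma:NTK_sphere} to reduce the eigenvalue problem to an analysis of a scalar zonal function of $u=\x^T\z$. Because spherical harmonics diagonalize any zonal kernel, each $\lambda_k$ is, up to a fixed positive factor, the Funk--Hecke integral
\[
    \lambda_k \;\propto\; \int_{-1}^{1} \krfcb(u)\, C_k^{(d-2)/2}(u)\, (1-u^2)^{(d-3)/2}\, du.
\]
I would begin from the recursive formula (from the supplementary material) expressing $\krfcb(u)$ as a nested composition of the arc-cosine building blocks $\kappa_0(u)=1-\tfrac{1}{\pi}\arccos u$ and $\kappa_1(u)=\tfrac{1}{\pi}(\sqrt{1-u^2}+u(\pi-\arccos u))$, together with the strictly positive additive shifts that the bias contributes at every layer.

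For Part~1 (strict positivity of every $\lambda_k$), a direct Taylor expansion of $\arccos$ around $u=0$ yields $\kappa_0(u)=\sum_{j\ge 0} a_j u^j$ and $\kappa_1(u)=\sum_{j\ge 0} b_j u^j$ with $a_j,b_j>0$ for every $j$. Because the bias supplies a strictly positive additive constant inside the outer $\kappa_0$ and $\kappa_1$ calls at every layer, unrolling the recursion shows that $\krfcb(u)$ admits a convergent power series $\sum_j c_j u^j$ with $c_j>0$ for all $j\ge 0$. Combining this with the classical identity $u^j=\sum_{k\le j,\,k\equiv j\,(\mathrm{mod}\,2)}\alpha_{j,k}\,C_k^{(d-2)/2}(u)$, where $\alpha_{j,k}\ge 0$ and $\alpha_{j,j}>0$, yields $\lambda_k>0$ for every $k\ge 0$.

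For Part~2a (matching bounds $C_1 k^{-d}\le\lambda_k\le C_2 k^{-d}$ when $L=2$), I would isolate the non-analytic behaviour at the right endpoint $u=1$. Expanding near $u=1$ gives $\kappa_0(u)=1-(\sqrt 2/\pi)(1-u)^{1/2}+\text{(analytic)}$ and $\kappa_1(u)=u+O((1-u)^{3/2})$, so the explicit $L=2$ formula decomposes as $\krfcbl{2}(u)=g(u)+A(1-u)^{1/2}+\text{(higher-order terms)}$, with $g$ real-analytic on $[-1,1]$ and $A>0$ depending only on the bias normalization. The analytic part contributes Gegenbauer coefficients that decay faster than any polynomial, while the coefficients of $(1-u)^{\alpha}$ on $\Spdm$ are of order $k^{-(2\alpha+d-1)}$, as one can show by repeated integration by parts against $C_k^{(d-2)/2}(u)(1-u^2)^{(d-3)/2}$ using the Rodrigues formula. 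Taking $\alpha=1/2$ delivers the $k^{-d}$ rate from both sides.

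For Part~2b (lower bound for $L\ge 3$), I would unroll the recursion to write $\krfcbl{L}$ as a sum of zonal positive semidefinite kernels sharing the spherical-harmonic eigenbasis. In particular, one summand involves $\kappa_0$ composed with an analytic function of $u$ whose value at $u=1$ equals $1$, so by the chain rule for non-integer powers the composition retains a $(1-u)^{1/2}$ endpoint singularity; this summand alone contributes eigenvalues of order $k^{-d}$, and the remaining summands contribute nonnegative eigenvalues, giving $\lambda_k\ge C_3 k^{-d}$. The hardest step is the precise asymptotic analysis in Part~2a and its robustness under additional layers in Part~2b: one must cleanly separate the algebraic endpoint singularity from the analytic background, justify super-polynomial decay of the coefficients of the analytic part uniformly in the bias parameter, and verify that nesting $L-2$ further applications of $\kappa_0$ and $\kappa_1$ neither cancels nor flattens the $(1-u)^{1/2}$ singular component that drives the $k^{-d}$ lower bound.
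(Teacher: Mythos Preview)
Your overall strategy---positivity via Maclaurin coefficients together with endpoint-singularity asymptotics for the Gegenbauer coefficients---is a legitimate route, but it differs markedly from the paper's argument and contains several concrete inaccuracies. For $L=2$ the paper does not analyse the singularity of $\krfcbl{2}$ directly; it writes $\krfcbl{2}(u)=\kappa(u)+\beta^2\kappa_0(u)+\beta^2$, where $\kappa$ is the bias-free two-layer NTK and $\kappa_0$ the order-zero arc-cosine kernel, and then imports the eigenvalue asymptotics of each summand from \cite{bietti2019inductive}: $\kappa$ supplies $\Theta(k^{-d})$ on even $k$ (and $k=0,1$), $\beta^2\kappa_0$ supplies $\Theta(k^{-d})$ on odd $k$, and summing gives both $\lambda_k>0$ and the two-sided bound. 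For $L\ge 3$ the paper avoids tracking any singularity through the recursion and instead uses RKHS arithmetic: since the constant function lies in $\hh_{\dot\Sigma^{(l)}}$ (its zeroth eigenvalue is positive), the norm-product inequality gives $\hh_{\krfcbl{l}}\subseteq\hh_{\krfcbl{l}\cdot\dot\Sigma^{(l)}}\subseteq\hh_{\krfcbl{l+1}}$, and Aronszajn's inclusion theorem then converts $\HNTKFCbl{2}\subseteq\HNTKFCb$ into the pointwise eigenvalue inequality $s^2\lambda_k^{(L)}\ge\lambda_k^{(2)}\ge C_1k^{-d}$. This is considerably cleaner than chasing a $(1-u)^{1/2}$ term through nested compositions.

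Turning to the gaps in your write-up: (i) the claim ``$a_j,b_j>0$ for every $j$'' is false---$\kappa_0=\tfrac12+\tfrac1\pi\arcsin u$ has $a_j=0$ for even $j\ge 2$, and $\kappa_1$ has $b_j=0$ for odd $j\ge 3$; (ii) on the sphere the bias does \emph{not} shift the argument of $\kappa_0$ or $\kappa_1$ at any layer---$\Sigma^{(h)}$ and $\dot\Sigma^{(h)}$ are identical to the bias-free case, and $\beta^2$ enters only additively in the $\Theta$-recursion---so your stated mechanism for strict positivity (``bias supplies a strictly positive additive constant inside the outer $\kappa_0$ and $\kappa_1$ calls'') is incorrect, even though the conclusion $c_j>0$ can be rescued by a different argument (the additive $\beta^2\kappa_0$ term fills the odd-index gaps at depth two, and deeper compositions $\kappa_0\!\circ\!\kappa_1$ acquire all-positive coefficients because $\kappa_1$ has positive $u^0$ and $u^1$ coefficients); (iii) for $L\ge 3$ the inner function you call ``analytic'' is $\lambda^{(l-1)}(u)=\kappa_1\!\circ\cdots\circ\kappa_1(u)$, which is \emph{not} analytic at $u=1$ (it carries a $(1-u)^{3/2}$ term from $\kappa_1$), so the chain-rule step needs to be redone for non-analytic inner functions; (iv) the leading coefficient you call $A$ is actually negative (e.g.\ $\kappa_0(u)=1-\tfrac{\sqrt2}{\pi}(1-u)^{1/2}+\cdots$), so the lower bound cannot come from ``$A>0$'' but must instead be obtained by combining the magnitude estimate $|\lambda_k|\asymp k^{-d}$ from the singularity with the positivity $\lambda_k>0$ from Part~1; and (v) you do not address the matching $(1+u)^{1/2}$ singularity at $u=-1$, which also contributes at order $k^{-d}$ and must be shown not to cancel the contribution from $u=1$. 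None of these is fatal, but each requires a repair, and the paper's RKHS-inclusion argument for $L\ge 3$ sidesteps all of them.
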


The proof of this theorem for $L=2$ borrows techniques from \cite{bach2017breaking}. The proof for $L \ge 3$
relies mainly on showing that the algebraic operations in the recursive definition of NTK (including addition, product and composition) do not increase the rate of decay. The consequence of Theorem \ref{thm:NTKdecay} is that NTK for FC networks gives rise to an infinite size feature space and that its eigenvalues decay no faster than $O(k^{-d})$. While our proofs only establish a bound for the case that $L \ge 3$, empirical results suggest that the eigenvalues for these kernels decay exactly as $\Theta(k^{-d})$, as can be seen in Figure~\ref{fig:exponent_uniform_resnet}.

\resnet{
Finally, similar to NTK for FC networks, NTK for bias-free residual networks, denoted $\krres$, is zonal,
and therefore its eigenfunctions in $\Spdm$ are the spherical harmonics. Experiments indicate that its eigenvalues too decay at the rate of $\Theta(k^{-d})$, see Figure~\ref{fig:exponent_uniform_resnet}.
}

\subsection{NTK vs. exponential kernels in $\Spdm$}  \label{sec:ntk_vs_exp}

The polynomial decay of the eigenvalues of NTK suggests that NTK is closely related to the Laplace kernel, as we show next. Indeed, any shift invariant and  isotropic kernel, i.e., $\kr(\x,\y)=\kr(\|\x-\y\|)$, in $\Rd$ is zonal when restricted to the hypersphere, since $\x,\y \in \Spdm$ implies $\|\x-\y\|^2=2(1-\x^T\y)$.  Therefore, in $\Spdm$ the spherical harmonics are the eigenvectors of the exponential kernels.


\cite{minh2006mercer} shows that the Gaussian kernel restricted to the hypersphere yields eigenvalues that decay exponentially fast. In contrast we next prove that the eigenvalues of the Laplace kernel restricted to the hypersphere decay polynomially as $\Theta(k^{-d})$, the same decay rate shown for NTK in Theorem~\ref{thm:NTKdecay} and in Figure~\ref{fig:exponent_uniform_resnet}.

\begin{theorem}  \label{thm:exp_decay}
Let $\x,\z \in \Spdm$ and write the Laplace kernel as $\krexp(\x^T\z)=e^{-c\sqrt{1-\x^T\z}}$, restricted to $\Spdm$. Then $\krexp$ can be decomposed as in \eqref{eq:sh} with the eigenvalues $\lambda_k$ satisfying $\lambda_k>0$, and $\exists k_0$ such that $\forall k>k_0$ it holds that:
\begin{equation*}
    B_1 k^{-d} \leq \lambda_k \leq B_2 k^{-d}
\end{equation*}
where $B_1,B_2>0$ are constants that depend on the dimension $d$ and the parameter $c$.
\end{theorem}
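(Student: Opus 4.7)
The plan is to reduce the eigenvalue estimate to evaluating Gegenbauer moments of half-integer powers of $1-t$ via the Funk--Hecke formula. Since $\krexp$ is zonal on $\Spdm$, its eigenvalue on spherical harmonics of order $k$ is
\begin{equation*}
\lambda_k \,=\, \frac{\omega_{d-2}}{C_k^{(d-2)/2}(1)}\int_{-1}^{1} e^{-c\sqrt{1-t}}\,C_k^{(d-2)/2}(t)\,(1-t^2)^{(d-3)/2}\,dt
\end{equation*}
for $d\ge 3$, where $\omega_{d-2}=|\mathbb{S}^{d-2}|$ and $C_k^{(d-2)/2}(1)=\Theta(k^{d-3})$; the $d=2$ case reduces analogously to a Fourier coefficient of $e^{-c\sqrt{2}\,|\sin(\theta/2)|}$ on $\mathbb{S}^{1}$. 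Establishing $\lambda_k=\Theta(k^{-d})$ is thus equivalent to showing that the integral above behaves as $\Theta(k^{-3})$.

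To extract this rate I would Taylor-expand $e^{-c\sqrt{1-t}}=\sum_{n\ge 0}\tfrac{(-c)^n}{n!}(1-t)^{n/2}$ and integrate term by term. Even-power terms $(1-t)^m$ are polynomials of degree $m$, so by orthogonality of the Gegenbauer family they contribute zero once $k>m$, and only the half-integer moments $M_\alpha(k):=\int_{-1}^{1}(1-t)^\alpha C_k^{(d-2)/2}(t)(1-t^2)^{(d-3)/2}\,dt$ with $\alpha\in\{1/2,3/2,\dots\}$ survive. For each such $\alpha$ I would apply the Rodrigues-type representation $(1-t^2)^{(d-3)/2}\,C_k^{(d-2)/2}(t)=\tfrac{(-1)^k}{2^k k!}\,c_{d,k}\,\tfrac{d^k}{dt^k}\bigl[(1-t^2)^{k+(d-3)/2}\bigr]$ with $c_{d,k}=\Theta(k^{(d-3)/2})$, integrate by parts $k$ times (boundary terms vanish for $\alpha>0$), and evaluate the resulting beta-function integral in closed form. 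Stirling then yields $M_\alpha(k)=C(\alpha,d)\,k^{-(2\alpha+2)}\,(1+o(1))$ with $C(\alpha,d)\neq 0$ for non-integer $\alpha$; in particular $M_{1/2}(k)=\Theta(k^{-3})$ and $M_{n+1/2}(k)=O(k^{-2n-3})$ for $n\ge 1$. The leading $n=0$ term therefore dominates, producing $\int_{-1}^1 e^{-c\sqrt{1-t}}\,C_k^{(d-2)/2}(t)(1-t^2)^{(d-3)/2}\,dt=\Theta(k^{-3})$ and hence, after dividing by $C_k^{(d-2)/2}(1)$, the two-sided bound $B_1 k^{-d}\le \lambda_k\le B_2 k^{-d}$.

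The main obstacle is controlling the tail of the Taylor series uniformly in $k$ so that the $\alpha=1/2$ term is truly asymptotic and not cancelled by the sum. This is handled via the explicit gamma-function expression for $M_\alpha(k)$: the ratios $|M_{n+1/2}(k)/M_{1/2}(k)|$ are uniformly $O(k^{-2n})$, giving absolutely convergent summation with error of strictly smaller order $O(k^{-5})$, which cannot mask the leading behaviour. Strict positivity of each individual $\lambda_k$ (not merely the asymptotic ones) is obtained independently from Bochner's theorem: the Fourier transform of the extension $e^{-c\|\x-\z\|/\sqrt{2}}$ of $\krexp$ to $\Rd$ is proportional to $(c_0+\|\xi\|^2)^{-(d+1)/2}$, a strictly positive function, so $\krexp$ is strictly positive definite on $\Rd$ and in particular on $\Spdm$, forcing every Mercer eigenvalue to be strictly positive.
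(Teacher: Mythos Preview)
Your approach is genuinely different from the paper's and the asymptotic analysis is sound. The paper does \emph{not} work directly with the Funk--Hecke integral: instead it extends the Laplace kernel to $\Rd$, computes its Fourier transform $\Phi(t)\propto (1+t^2/c^2)^{-(d+1)/2}$, and invokes the Narcowich--Ward identity
\[
\lambda_k=\int_0^\infty t\,\Phi(t)\,J^2_{k+(d-2)/2}(t)\,dt.
\]
Upper and lower bounds then reduce to the single Weber--Schafheitlin integral $\int_0^\infty x^{-d}J^2_{k+(d-2)/2}(cx)\,dx$, which Watson evaluates in closed gamma form; Stirling gives $\Theta(k^{-d})$. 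Your route (Taylor expand in $(1-t)^{1/2}$, kill integer powers by orthogonality, evaluate each $M_{n+1/2}(k)$ via Rodrigues and a beta integral) also arrives at closed gamma expressions and the same rate. The paper's route is shorter once one accepts the Narcowich--Ward lemma; yours is more self-contained but must control an infinite tail, which, as you say, the explicit gamma formula makes tractable. One small point you left implicit: to conclude $B_1k^{-d}\le\lambda_k$ (not just $|\lambda_k|=\Theta(k^{-d})$) you need the sign of the leading constant; this follows once you note $\lambda_k\ge 0$ by Schoenberg and your constant is nonzero.

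There is a real gap in your positivity argument for \emph{every} $k$. Strict positive definiteness on $\Spdm$ does \emph{not} force all Gegenbauer coefficients to be strictly positive: by the Chen--Menegatto--Sun characterisation, a continuous zonal kernel on $\Spdm$ ($d\ge 3$) is strictly positive definite iff the set $\{k:\lambda_k>0\}$ contains infinitely many even and infinitely many odd integers, so one may have isolated $\lambda_{k_0}=0$ and still be strictly PD. Thus ``strictly PD on $\Rd$ $\Rightarrow$ strictly PD on $\Spdm$ $\Rightarrow$ all $\lambda_k>0$'' fails at the last step. The fix is immediate with the ingredients you already computed: from the Narcowich--Ward formula above and the fact that $\Phi(t)>0$ for all $t$, the integrand $t\Phi(t)J^2_{k+(d-2)/2}(t)$ is nonnegative and not identically zero, hence $\lambda_k>0$ for every $k$. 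This is precisely how the paper argues it.
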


Our proof
uses the decay rate of the Laplace kernel in $\Rd$, and results due to \cite{narcowich2002scattered,narcowich2007approximation} that relate Fourier expansions in $\Rd$ to their corresponding spherical harmonic expansions in $\Spdm$. This allows us to state our main theoretical result.

\begin{theorem}  \label{cor:equivalence}
Let $\HExp$ denote the RKHS for the Laplace kernel restricted to $\Spdm$, and let $\HNTKFCb$ denote the NTK corresponding respectively to a FC network with $L$ layers, ReLU activation, and bias, restricted to $\Spdm$, then $\HExp = \HNTKFC \subseteq \HNTKFCb$.
\end{theorem}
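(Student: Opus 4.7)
The plan is to derive this theorem as a direct consequence of the two preceding decay theorems, via the standard spectral characterization of RKHSs on a shared orthonormal eigenbasis. First, I observe that on $\Spdm$ both families of kernels are zonal: $\krfcb$ by Lemma~\ref{lemma:NTK_sphere}, and $\krexp$ because any shift-invariant isotropic kernel on $\Rd$ becomes a function of $\x^T\z$ when restricted to the sphere (since $\|\x-\z\|^2 = 2(1-\x^T\z)$). Hence by Mercer's theorem~\eqref{eq:sh} both kernels admit a decomposition in the same orthonormal basis $\{Y_{k,j}\}$ of spherical harmonics of $\Spdm$, with eigenvalues $\lambda_k^{\mathrm{Lap}}$ and $\lambda_k^{\mathrm{FC_\beta(L)}}$ that depend only on the frequency index $k$.

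Next, I would recall the spectral characterization of the RKHS: for a zonal kernel whose eigenvalues $\{\lambda_k\}$ are all strictly positive, a function $f = \sum_{k,j} \alpha_{k,j} Y_{k,j}$ lies in the RKHS iff $\sum_{k,j} \alpha_{k,j}^2/\lambda_k < \infty$. So if two such kernels have positive eigenvalues satisfying $c_1 \lambda_k^{(2)} \leq \lambda_k^{(1)} \leq c_2 \lambda_k^{(2)}$ for all $k$ beyond some $k_0$ (with constants $c_1, c_2 > 0$), then the summability conditions defining the two RKHSs are equivalent for every $f$, so the RKHSs coincide as sets of functions. If only the one-sided bound $\lambda_k^{(1)} \leq c_2 \lambda_k^{(2)}$ holds asymptotically, then $\mathcal{H}_1 \subseteq \mathcal{H}_2$, because $1/\lambda_k^{(2)} \leq (1/c_2)(1/\lambda_k^{(1)})$ asymptotically controls the tail of the norm series. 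Positivity of the finitely many low-frequency eigenvalues, guaranteed by part 1 of Theorem~\ref{thm:NTKdecay} and by Theorem~\ref{thm:exp_decay}, ensures that the finite head terms contribute only a finite amount for any candidate $f$, so only the asymptotic comparison matters.

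Finally, I would apply the bounds from the prior theorems. For the $L=2$ case, part 2(a) of Theorem~\ref{thm:NTKdecay} gives $C_1 k^{-d} \leq \lambda_k^{\mathrm{FC_\beta(2)}} \leq C_2 k^{-d}$, and Theorem~\ref{thm:exp_decay} gives $B_1 k^{-d} \leq \lambda_k^{\mathrm{Lap}} \leq B_2 k^{-d}$; these immediately yield the two-sided comparison $(B_1/C_2)\lambda_k^{\mathrm{FC_\beta(2)}} \leq \lambda_k^{\mathrm{Lap}} \leq (B_2/C_1)\lambda_k^{\mathrm{FC_\beta(2)}}$, hence $\HExp = \HNTKFC$. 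For $L \geq 3$, part 2(b) of Theorem~\ref{thm:NTKdecay} gives the lower bound $\lambda_k^{\mathrm{FC_\beta(L)}} \geq C_3 k^{-d}$, which combined with the upper bound on $\lambda_k^{\mathrm{FC_\beta(2)}}$ yields $\lambda_k^{\mathrm{FC_\beta(2)}} \leq (C_2/C_3)\,\lambda_k^{\mathrm{FC_\beta(L)}}$, giving $\HNTKFC \subseteq \HNTKFCb$. The bulk of the difficulty was already absorbed into the prior two theorems; once those eigenvalue estimates are in hand, this result is essentially bookkeeping, with the only subtlety being to confirm that positivity of every $\lambda_k$ prevents the low-frequency terms from disrupting the comparison.
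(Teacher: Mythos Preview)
Your proposal is correct and follows essentially the same route as the paper: both arguments reduce the RKHS comparison to the spectral characterization on the shared spherical-harmonic basis and then invoke the two-sided $\Theta(k^{-d})$ bounds from Theorems~\ref{thm:NTKdecay} and~\ref{thm:exp_decay}, with positivity of all $\lambda_k$ handling the finitely many low-frequency terms. The only cosmetic difference is that for the inclusion $\HNTKFC \subseteq \HNTKFCb$ the paper cites a pre-established chain $\HNTKFCbl{2}\subseteq\cdots\subseteq\HNTKFCb$ (proved via RKHS arithmetic on the recursive NTK formula), whereas you derive it directly from the eigenvalue lower bound in part~2(b); since that lower bound is itself obtained in the paper from the inclusion chain via Aronszajn's theorem, the two arguments are logically equivalent.
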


The common decay rates of NTK and the Laplace kernel in $\Spdm$ imply that the set of functions in their RKHSs are identical, having the same smoothness properties. In particular, due to the norm equivalence of RKHSs and Sobolev spaces both spaces include functions that have weak derivatives up to order $d/2$ \cite{narcowich2002scattered}.
We recall that empirical results suggest further that $\krfcb$ decays exactly as $\Theta(k^{-d})$, and so we conjecture that $\HExp = \HNTKFCb$. We note that despite this asymptotic similarity, the eigenvalues of NTK and the Laplace kernel are not identical even if we correct for shift and scale. Consequently, each kernel may behave slightly differently. Our experiments in Section~\ref{sec:experiments} suggest that this results in only small differences in performance.

The similarity between NTK and the Laplace kernel has several implications. First, the dynamics of gradient descent for solving regression \eqref{eq:regression} with both kernels \cite{chapelle2007training} should be similar. For a kernel with eigenvalues $\{\lambda_i\}_{i=1}^\infty$ a standard calculation shows that GD requires $O(1/\lambda_i)$ time steps to learn the $i$th eigenfunction (e.g., \cite{arora2019fine,ronen2019convergence}). For both NTK and the Laplace kernel in $\Spdm$ this implies that $O(k^d)$ time steps are needed to learn a harmonic of frequency $k$. This is in contrast for instance with the Gaussian kernel, where the time needed to learn a harmonic of frequency $k$ grows exponentially with $k$. This in particular explains the empirical results of \cite{belkin2018understand}, where it was shown that fitting noisy class labels with the Laplace kernel or neural networks requires a similar number of SGD steps. The authors of \cite{belkin2018understand} conjectured that ``optimization performance is controlled by
the type of non-smoothness," as indeed is determined by the identical RKHS for NTK and the Laplace kernel. 

The similarity between NTK and the Laplace kernel also implies that they have similar generalization properties. Indeed various generalization bounds rely explicitly on spectral properties of kernels. For example, given a set of training points $X \subseteq \Spdm$ and a target function $f:\Spdm \rightarrow \mathbb{R}$, then the error achieved by the kernel regression estimator given $X$, denoted $\hat{f}_X$, is (see, e.g., \cite{jetter1999error})
\begin{align*}
    \norm{f-\hat{f}_X}_\infty \leq C\cdot h(X)^\alpha \norm{f}_{\mathcal{H}_k}, ~~f\in \mathcal{H}_k,
\end{align*}
where $h(X):=\sup _{\z \in \Spdm} \inf_{\x\in X} \arccos(\z^T \x)$ is the mesh norm of $X$ (and thus depends on the density of the points), and $\alpha$ depends on the smoothness property of the kernel. Specifically, for both the Laplace kernel and NTK $\alpha=1/2$.

Likewise, with $n$ training points and $f\in \hh_{\kr}$, \cite{micchelli1979design} derived the following lower bound
\begin{align*}
    \mathbb{E}_{X}\left((f-\hat{f}_X)^2\right)\geq \sum _{i=n+1}^\infty \alpha_i, 
\end{align*}
where $\alpha_i$ are the eigenvalues of $f$. Both of these bounds are equivalent asymptotically up to a constant for NTK (with bias) and the Laplace kernel.

\cmt{

Figure~\ref{fig:fitting} and Table~\ref{tab:fitting_accuracy} show the result of fitting several functions with the NTK, Laplace and the Gaussian kernels. It can be seen that NTK and Laplace achieve similar fit, and the quality of fit for the three kernels depends on their smoothness properties.

\begin{table*}[h]\tiny{
\begin{tabular}{|l|c|c|c|}
\hline
Kernel & $f(\theta)=sign(\theta)$ & $f(\theta)=e^{-\abs{\theta}}$ & $f(\theta)=e^{-\theta^2}$ \\ \hline

NTK & 0.02 & 0.00039 &  0.00034 \\ \hline

Laplace & 0.02 & 0.00024 & 0.00053 \\ \hline

Gaussian & 0.14 & 0.009 & 0.000029 \\ \hline
\end{tabular}
}
\caption{\small Fitting target functions with the NTK, Laplace and Gaussian kernels using 50 equally spaced points (mean absolute deviation). Note that $sign(\theta)$ is neither of the RKHS for all three kernels, $e^{-\abs{\theta}}$ is in the RKHS of NTK and the Laplace kernel only, and $e^{-\theta^2}$ is in all three RKHS.}
\label{tab:fitting_accuracy}
\end{table*}

\begin{figure}[h]
    \centering
    \includegraphics[width=10cm]{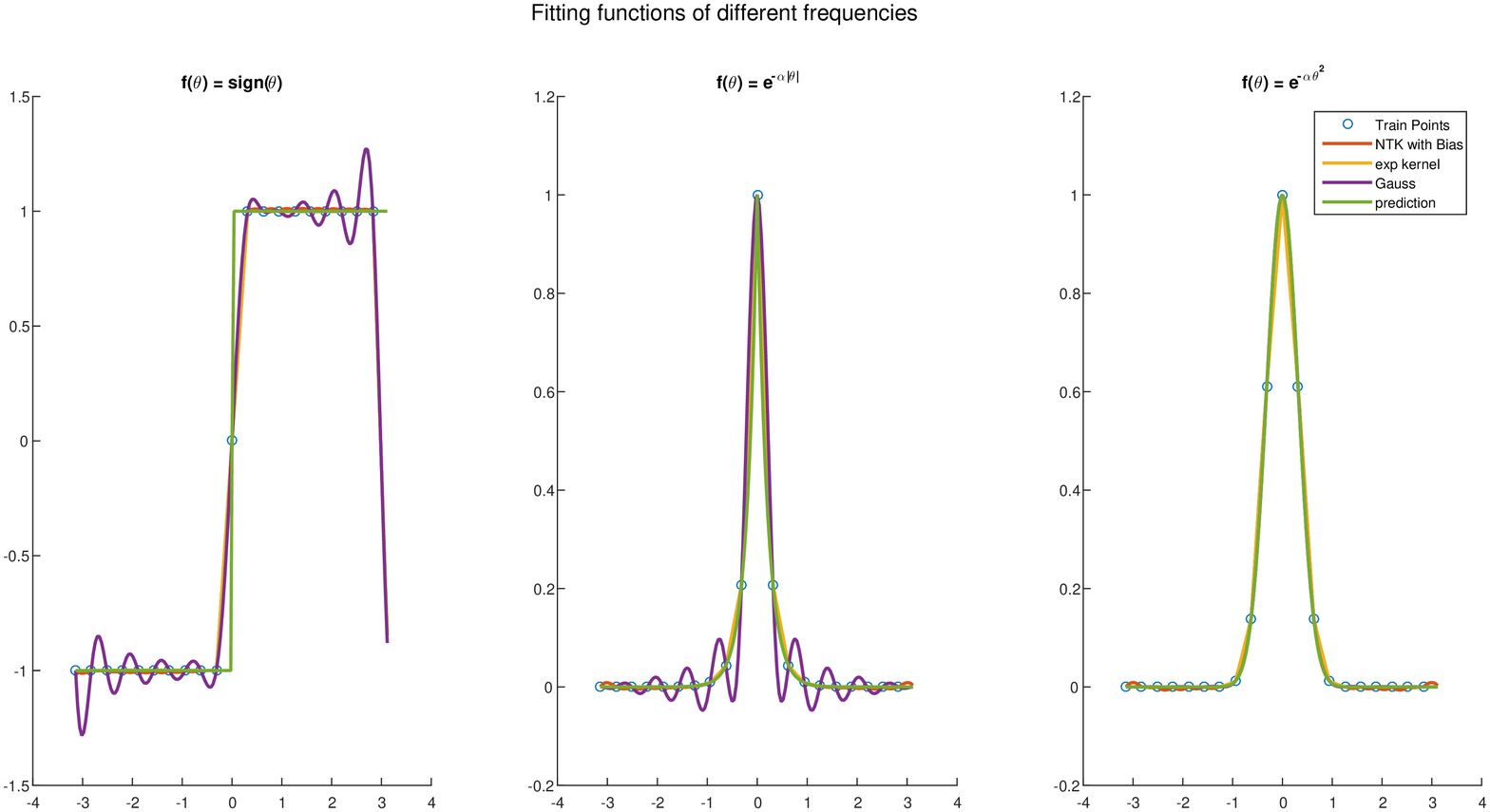}
    \caption{\small Learning function of different frequencies. The training points marked with circles. Each line corresponds to kernel regression on the training data: orange to NTK with bias, yellow to Laplace kernel and purple to Gauss kernel. The green line corresponds to the target function $f(\theta)$\ag{Is it a better figure?}\rb{Not good enough. Let's talk}}
    \label{fig:fitting}
\end{figure}

}

\subsection{NTK vs.\ exponential kernels in $\Rd$}

While theoretical discussions of NTK largely assume the input data is normalized to lie on the sphere, such normalization is not the common practice in neural network applications. Instead, most often each feature is normalized separately by setting its mean to zero and variance to 1. Other normalizations are also common. It is therefore important to examine how NTK behaves outside of the hypersphere, compared to common kernels. 

Below we derive the eigenfunctions of NTK for deep FC networks with ReLU activation with and without bias. We note that \cite{bach2017breaking,bietti2019inductive} derived the eigenfunctions of NTK for two-layer FC networks with no bias. We will show that the same eigenfunctions are obtained with deep, bias-free networks, and that additional eigenfunctions appear when bias is added. We begin with a definition.

\begin{definition}
A kernel $\kr$ is homogeneous of order $\alpha$ if $\kr(\x,\z)=\|\x\|^\alpha \|\z\|^\alpha \kr\left(\frac{\x^T\z}{\|\x\|\|\z\|} \right)$.
\end{definition}

\begin{theorem}
\label{thm:k_homogeneous}
(1) Bias-free $\krfc$ is homogeneous of order 1. (2) With bias initialized at zero, let $\krb = \krfcb - \krfc$. Then, $\krb$ is homogeneous of order 0.
\end{theorem}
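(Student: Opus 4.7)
The plan is to exploit the positive homogeneity of the ReLU activation, $\sigma(\alpha t)=\alpha\sigma(t)$ for $\alpha\ge 0$, and push it through both the forward pass and the backward pass that define the NTK. Throughout the argument I evaluate the NTK at initialization, and I use the standing convention of the paper that bias units are initialized at zero.

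\textbf{Step 1: Forward-pass 1-homogeneity for the bias-free network.} Let $f(\theta,\x)$ be the FC network with no bias and let $h^{(\ell)}(\x)$ denote the activation vector at layer $\ell$ (with $h^{(0)}=\x$). A straightforward induction on $\ell$ gives $h^{(\ell)}(\alpha\x)=\alpha\,h^{(\ell)}(\x)$ for every $\alpha>0$: if $h^{(\ell)}(\alpha\x)=\alpha h^{(\ell)}(\x)$, then $h^{(\ell+1)}(\alpha\x)=\sigma(W_\ell h^{(\ell)}(\alpha\x))=\sigma(\alpha W_\ell h^{(\ell)}(\x))=\alpha\,h^{(\ell+1)}(\x)$. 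Consequently $f(\theta,\alpha\x)=\alpha f(\theta,\x)$, and differentiating in $\theta$ yields $\partial_\theta f(\theta,\alpha\x)=\alpha\,\partial_\theta f(\theta,\x)$. Plugging this into the NTK definition with $\alpha=\|\x\|$ and $\beta=\|\z\|$ gives $\krfc(\x,\z)=\|\x\|\|\z\|\,\krfc(\x/\|\x\|,\z/\|\z\|)$; combined with Lemma~\ref{lemma:NTK_sphere}, which makes the unit-sphere value a function of $\x^T\z/(\|\x\|\|\z\|)$ only, this proves part (1).

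\textbf{Step 2: Decompose the NTK with bias.} Because bias is initialized at zero, the pre-activations, activations, and therefore every $\partial_\theta f/\partial W_\ell$ at initialization coincide with those of the bias-free network. Hence the contribution of the weight parameters to $\krfcb$ equals $\krfc(\x,\z)$ exactly, and $\krb:=\krfcb-\krfc$ collects only the bias gradients:
\begin{equation*}
\krb(\x,\z)=\sum_{\ell=1}^{L-1}\mathbb{E}_{\theta\sim\mathcal{P}}\bigl\langle \partial_{b_\ell} f(\theta,\x),\ \partial_{b_\ell} f(\theta,\z)\bigr\rangle.
\end{equation*}

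\textbf{Step 3: Bias gradients are 0-homogeneous in the input.} Backpropagation gives $\partial_{b_\ell} f(\theta,\x)=\delta^{(\ell+1)}(\x)$, where the backward errors satisfy $\delta^{(\ell)}(\x)=\mathrm{diag}\bigl(\mathds{1}\{W_\ell h^{(\ell)}(\x)>0\}\bigr)\,W_{\ell+1}^{T}\,\delta^{(\ell+1)}(\x)$ with $\delta^{(L)}=1$. Scaling $\x\mapsto\alpha\x$ with $\alpha>0$ scales the pre-activations by $\alpha$ (by Step 1) but leaves every sign pattern $\mathds{1}\{W_\ell h^{(\ell)}(\x)>0\}$ unchanged, so $\delta^{(\ell)}(\alpha\x)=\delta^{(\ell)}(\x)$. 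Therefore $\partial_{b_\ell} f(\theta,\alpha\x)=\partial_{b_\ell} f(\theta,\x)$, and summing the inner products and taking expectation yields $\krb(\alpha\x,\beta\z)=\krb(\x,\z)$ for all $\alpha,\beta>0$. Setting $\alpha=1/\|\x\|$, $\beta=1/\|\z\|$, and appealing once more to the zonality of $\krfcb$ and $\krfc$ on the sphere (Lemma~\ref{lemma:NTK_sphere}), this is exactly homogeneity of order $0$, proving part (2).

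\textbf{Anticipated obstacle.} The only delicate point is Step 3: one must justify that positively rescaling the input does not flip any ReLU sign pattern and that the backward-pass recursion neither introduces nor cancels a factor of $\alpha$. Both follow from the 1-homogeneity of the forward pass established in Step 1, but it is worth checking that the very last layer (which is linear and scalar-valued) contributes $\delta^{(L)}=1$ regardless of the input, so that the whole chain of $\delta^{(\ell)}$ is indeed scale-invariant. Once this is in place, the rest is bookkeeping.
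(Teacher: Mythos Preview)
Your argument is correct and takes a genuinely different route from the paper's. The paper works entirely at the level of the closed-form recursive kernel formulas: it shows by induction on the layer index that $\Sigma^{(h)}(\x,\z)=r_xr_z\,\Sigma^{(h)}(\hat\x^T\hat\z)$ and that $\lambda^{(h)},\dot\Sigma^{(h)}$ are $0$-homogeneous, whence $\Theta^{(h)}$ is $1$-homogeneous; for part (2) it extracts from the recursion that $\krbl{1}=\beta^2$ and $\krbl{l+1}=\krbl{l}\dot\Sigma^{(l)}+\beta^2$, which involves only $0$-homogeneous ingredients. You instead argue directly from the network: ReLU positive homogeneity makes the forward pass $1$-homogeneous, so weight gradients scale linearly in each input; bias gradients are backward errors, which depend on the input only through ReLU sign patterns and are therefore scale-invariant. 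Your approach is more conceptual, holds pointwise in $\theta$ before taking the expectation, and does not rely on the explicit arc-cosine formulas; the paper's approach, on the other hand, yields the explicit recursion for $\krb$ as a by-product.

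One caution: to pass from scale-invariance to ``homogeneous of order $0$'' in the sense of the paper's Definition you need zonality on $\Spdm$, and you obtain it by citing Lemma~\ref{lemma:NTK_sphere}. In the paper's appendix, however, that lemma (for the biased kernel) is proved \emph{as a corollary} of the very theorem you are establishing, so within the paper's logical order your citation is circular. The fix is standard: zonality follows directly from the rotational invariance of the Gaussian weight initialization (absorb a rotation of the input into $W^{(1)}$, which leaves the law of $\theta$ unchanged and preserves the Frobenius inner product of the first-layer weight gradients). Replace the appeal to Lemma~\ref{lemma:NTK_sphere} with that observation and the argument is self-contained. A cosmetic point: your sum in Step~2 should range over all layers carrying a bias, including the output layer, whose bias gradient is the constant $\beta$ and is trivially $0$-homogeneous.
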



The two kernels $\krfc$ and $\krfcb$ (but not $\krb$) are unbounded. Therefore, their Mercer's representation \eqref{eq:mercer} exists under measures that decay sufficiently fast as $\|\x\| \rightarrow \infty$. Examples include the uniform distribution on the $\|\x\| \le 1$ disk or the standard normal distribution. Such distributions have the virtue of being uniform on all concentric spheres. The following theorem determines the eigenfunctions for these kernels.

\begin{theorem}  \label{thm:eig_outofsphere}
Let $p(r)$ be a decaying density on $[0,\infty)$ such that 
$0 < \int_0^\infty p(r)r^2 dr<\infty$ and $\x,\z\in\Rd$.
\begin{enumerate}
    \item Let $\kr_0(\x,\z)$ be homogeneous of order 1 such that $\kr_0(\x,\z) = \norm{\x} \norm{\z} \hat\kr_0(\frac{\x^T\z}{\norm{\x}\norm{\z}})$. Then its eigenfunctions with respect to $p(\|\x\|)$ are given by $\Psi_{k,j}=a\|\x\|Y_{k,j}\left(\frac{\x}{\|\x\|}\right)$ where $Y_{k,j}$ are the spherical harmonics in $\Spdm$ and $a\in\Real$.
    \item Let $\kr(\x,\z) = \kr_0(\x,\z)$ + $\kr_1(\x,\z)$ so that $\kr_0$ as in 1 and $\kr_1$ is homogeneous of order 0. Then the eigenfunctions of $\kr$ are of the form $\Psi_{k,j} = \left( a \norm{\x} + b \right) Y_{k,j}\left(\frac{\x}{\|\x\|}\right)$.
\end{enumerate}
\end{theorem}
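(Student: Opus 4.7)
The plan is to exploit the rotational symmetry of both kernels via separation of variables in polar coordinates. Write $\x=r\uu$ with $r=\|\x\|$ and $\uu=\x/\|\x\|\in\Spdm$, and similarly $\z=\rho\vv$, so that $p(\|\z\|)\,d\z=p(\rho)\rho^{d-1}\,d\rho\,d\sigma(\vv)$ and any square-integrable function on $\Rd$ admits an expansion $\sum_{k,j} f_{k,j}(r)\,Y_{k,j}(\uu)$. Since $\hat\kr_0(\uu^T\vv)$ and (in part 2) $\hat\kr_1(\uu^T\vv)$ are zonal on $\Spdm$, the Funk--Hecke theorem diagonalizes them in the spherical harmonics with eigenvalues $\mu_k^{(0)}$ and $\mu_k^{(1)}$ respectively, so the integral operator decouples across the $(k,j)$-sectors.

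For part 1 I would substitute $\kr_0(\x,\z)=r\rho\,\hat\kr_0(\uu^T\vv)$ and apply the operator to an arbitrary test function $f(\rho)\,Y_{k,j}(\vv)$ in one sector. After the angular integration collapses via Funk--Hecke, the result is
\begin{equation*}
r\,\mu_k^{(0)}\,Y_{k,j}(\uu)\int_0^\infty \rho^{d}\,f(\rho)\,p(\rho)\,d\rho,
\end{equation*}
so the image of the entire $(k,j)$-sector is contained in the one-dimensional span of $\|\x\|Y_{k,j}(\x/\|\x\|)$. Consequently every nonzero-eigenvalue eigenfunction in that sector must be of the form $a\|\x\|Y_{k,j}(\x/\|\x\|)$, with corresponding eigenvalue $\lambda_k=\mu_k^{(0)}\int_0^\infty \rho^{d+1}p(\rho)\,d\rho$.

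For part 2 the same calculation applied to $\kr=\kr_0+\kr_1$ yields an image of the form
\begin{equation*}
\Bigl[\,r\,\mu_k^{(0)}\!\int_0^\infty\!\rho^{d} f(\rho)p(\rho)\,d\rho \;+\; \mu_k^{(1)}\!\int_0^\infty\!\rho^{d-1} f(\rho)p(\rho)\,d\rho\,\Bigr]Y_{k,j}(\uu),
\end{equation*}
which always lies in the two-dimensional span of $\{\,Y_{k,j}(\uu),\,rY_{k,j}(\uu)\,\}$. Hence every nonzero eigenfunction takes the claimed form $(a\|\x\|+b)Y_{k,j}(\x/\|\x\|)$. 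Plugging this ansatz back into the eigenvalue equation and matching the constant and $r$-linear terms reduces the problem to a $2\times 2$ linear system for $(a,b)$ whose entries are built from $\mu_k^{(0)},\mu_k^{(1)}$ and the radial moments $\int_0^\infty \rho^m p(\rho)\,d\rho$ for $m\in\{d-1,d,d+1\}$; diagonalizing this matrix produces the two kernel eigenvalues attached to each harmonic index.

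The main technical hurdle is handling the unboundedness of $\kr_0$ and the non-compactness of $\Rd$: one must verify that the radial moments above are finite so that the integral operator is well-defined on $L^2(\Rd,p(\|\x\|)d\x)$ and the Mercer representation exists. This is where the hypothesis that $p$ is a decaying density with $0<\int_0^\infty p(r)r^2\,dr<\infty$ enters, guaranteeing that the requisite moments up through order $d+1$ are finite and that the sector-by-sector spectral analysis is legitimate.
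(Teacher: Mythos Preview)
Your approach coincides with the paper's: both separate variables in polar coordinates, diagonalize the angular part via Funk--Hecke/Mercer on $\Spdm$, observe that the range of the integral operator restricted to each $(k,j)$-sector is at most one-dimensional (part~1) or two-dimensional (part~2), and in part~2 reduce the eigenvalue problem to a $2\times 2$ linear system whose entries are the spherical eigenvalues of $\hat\kr_0,\hat\kr_1$ and radial moments of $p$.

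One caveat worth flagging: you interpret the weight as the Lebesgue density $p(\|\z\|)\,d\z$, which in polar coordinates carries the Jacobian $\rho^{d-1}$ and therefore makes the relevant radial moments $\int_0^\infty \rho^{m}p(\rho)\,d\rho$ for $m\in\{d-1,d,d+1\}$; your final paragraph asserts these are controlled by the hypothesis $\int p(r)r^2\,dr<\infty$, but that does not follow for general $d$. The paper instead takes the density on $\Rd$ to be $\bar p(\x)=p(\|\x\|)/A(\|\x\|)$, where $A(r)$ is the surface area of the radius-$r$ sphere, so that $p$ is the \emph{radial marginal}. Under that convention the $2\times 2$ matrix involves only $M_q\propto\int_0^\infty r^{q}p(r)\,dr$ for $q\in\{0,1,2\}$, and the stated hypothesis (together with $p$ being a density on $[0,\infty)$) is exactly what is needed. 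With this adjustment your argument is complete and matches the paper's.
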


The eigenfunctions of NTK in $\Rd$, therefore, are similar to those in $\Spdm$; they are the spherical harmonics scaled radially in the bias free case, or linearly with the norm when bias is used. With bias, $\krfcb$ has up to $2N(d,k)$ eigenfunctions for every frequency $k$. Compared to the eigenvalues in $\Spdm$, the eigenvalues can change, depending on the radial density $p(r)$, but they maintain their overall asymptotic behavior.

\begin{figure}[tb]
    \centering
    \includegraphics[height=3.5cm]{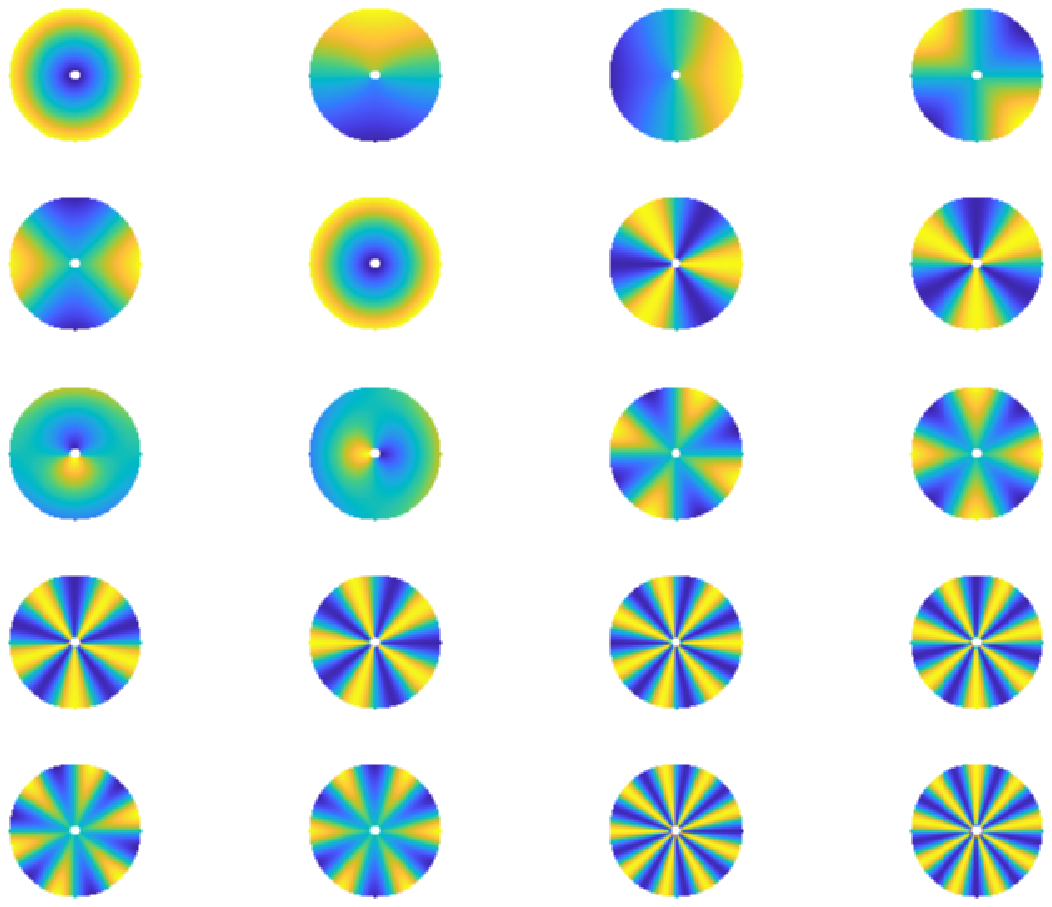}
    \includegraphics[height=3.5cm]{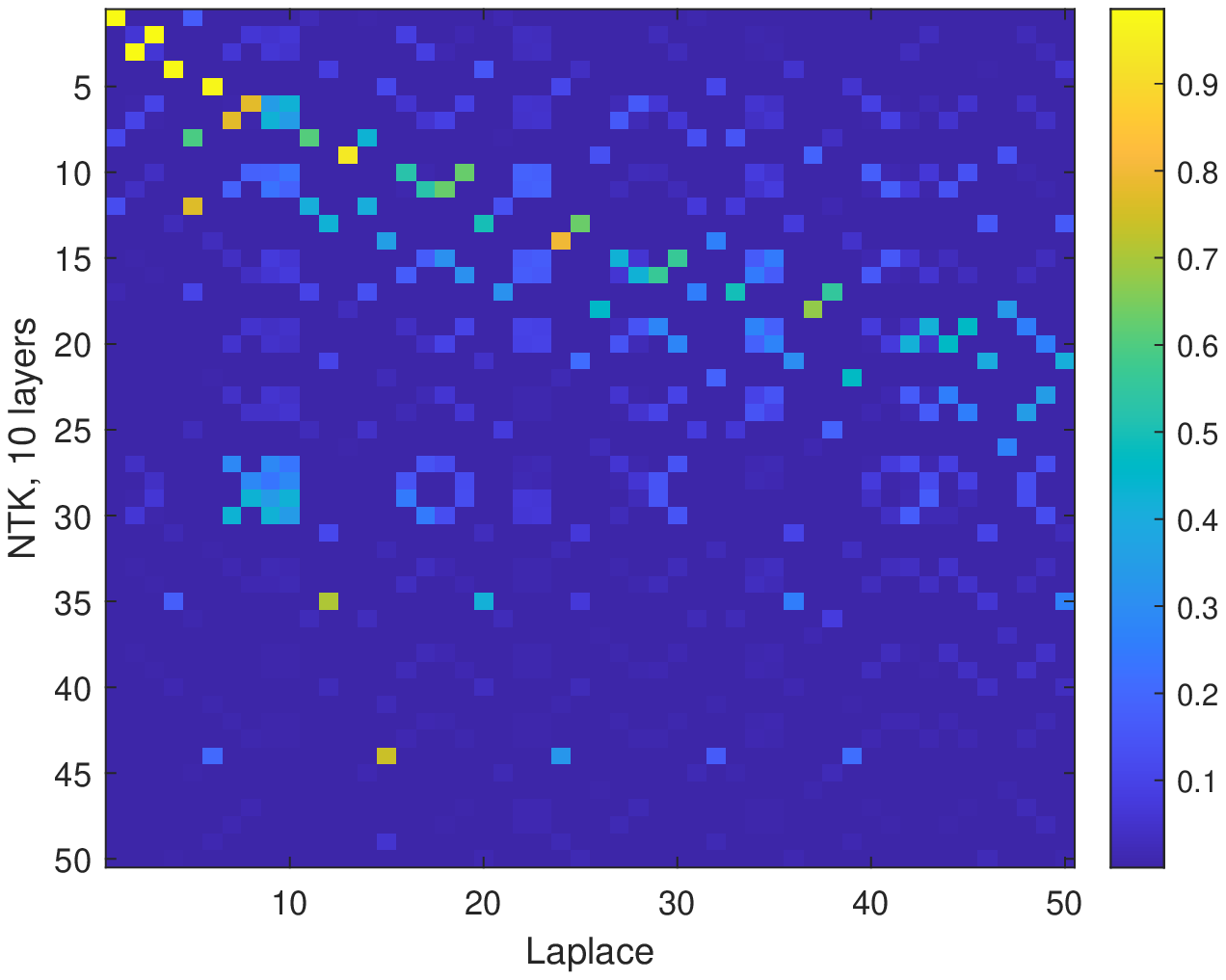}
    \caption{\small Left: plots of the eigenfunctions of NTK for a two layer FC network with bias on the unit disk, arranged in decreasing order of the eigenvalues. The radial shape of the eigenfunctions is evident. For two layers, the eigenvalues of $\krfcl{2}$ are zero for odd $k \ge 3$, while those of $\krbtwo$ are zero for even $k \ge 2$. Therefore we see two ``DC components" (top left and 2nd in 2nd row) and four $k=1$ components (2nd and 3rd in 1st row and 1st and 2nd in third row). The rest of the frequencies are represented twice each. Right: Absolute correlation between the eigenfunctions of NTK and those of the Laplace kernel for data sampled uniformly on the unit disk. It can be seen that eigenfunctions of higher frequency for NTK correlate with eigenfunctions of higher frequency for the Laplace. However, relatively low order components for NTK contain higher frequency components of the Laplace.}
    \label{fig:Rd_eigs}
\end{figure}

In contrast to NTK, the Laplace kernel is shift invariant, and therefore its eigenfunctions are the Fourier transform. The two kernels hence cannot be compared merely by their eigenvalues. Figure~\ref{fig:Rd_eigs} shows the eigenfunctions of NTK along with their correlation to the eigenfunctions of the Laplace kernel. While these differences are large, they seem to make only little difference in experiments, see Section~\ref{sec:experiments} below. It is possible to produce a homogeneous version of the Laplace kernel as follows
\begin{equation}
    \krhexp(\x,\z) = \|\x\|\|\z\| \exp\left(-c\sqrt{1-\frac{\x^T\z}{\|\x\|\|\z\|}}\right).
\end{equation}
Following Thm.~\ref{thm:eig_outofsphere} the eigenfunctions of this kernel are the scaled spherical harmonics and, following Thm.~\ref{thm:exp_decay}, its eigenvalues decay at the rate of $k^{-d}$, much like the NTK.

\resnet{
Finally, we note that NTK for bias-free, residual networks too is homogeneous of order 1, and therefore its eigenfunctions are the scaled spherical harmonics. 
}

\section{Experiments} \label{sec:experiments}

We compare the performance of NTK with Laplace, Gaussian, and $\gamma$-exponential kernels on both small and large scale real datasets. Our goal is to demonstrate: a) Results with the Laplace kernel are quite similar to those obtained by NTK, and b)~The $\gamma$-exponential kernel can achieve slightly better results than NTK. Experimental details are provided in the supplementary material.

\subsection{UCI dataSet}  \label{small_uci}


\cmt{
\begin{tiny}
\begin{figure*}[!htbp]
  \begin{minipage}[c]{0.5\textwidth}
    \includegraphics[height=4cm]{figures/wine-quality-white_ScaledKernel_True_N_4898_Dim_11.jpg}
  \end{minipage}\hfill
  \begin{minipage}[c]{0.5\textwidth}
    \caption{\small The plot of eigen values of different kernels for a UCI dataset~(wine-quality-white). The dotted red circle highlights the drastic difference in the decay rate of eigen values~(slope of the curve) of different kernels. NTK, Laplace, and Gaussian achieve best validation accuracies as $62.27$, $63.65$, $58.51$ and test accuracies as $67.25$, $68.97$, $62.74$. This demonstrates the richness of NTK and Laplace as compared to Gaussian.\abh{Ronen, please let me know if it makes sense. I can try to rewrite the caption but wanted to ensure this is what you wanted or something else?}
    \rb{On a second thought, we think we won't use this figure. I don't know if from this picture I could conclude that the Laplace kernel should be more similar to NTK than the Gaussian.}
    } \label{fig:compare_kernels_uciData}
  \end{minipage}
\end{figure*}
\end{tiny}
}

We compare methods using the same set of 90 small scale UCI datasets (with less than 5000 data points) as in~\cite{Arora2020Harnessing}. The results are provided in Table~\ref{tab:UCI} for the exponential kernels and their homogeneous versions, denoted by the "H-" prefix, as well as for NTK. For completeness, we also cite the results for Random forest (RF), the top classifier identified in~\cite{fernandez2014we}, and neural networks from~\cite{Arora2020Harnessing}. Further comparison of the accuracies obtained with NTK vs. the H-Laplace kernel on each of the 90 datasets is shown in Figure~\ref{fig:scatter_plot}.

We report the same metrics as used in~\cite{Arora2020Harnessing}: Friedman Ranking, Average Accuracy, P90/P95, and PMA. A superior classifier is expected to have lower Friedman rank and higher P90, P95, and PMA. Friedman Ranking~\cite{demvsar2006statistical} reports the average ranking of a given classifier compared to other classifiers. P90/P95 denotes the fraction of datasets on which a classifier achieves
more than $90/95\%$ of the maximum achievable accuracy (i.e., maximum accuracy among all the classifiers~\cite{fernandez2014we}). PMA represents the percentage of maximum accuracy.

From Table~\ref{tab:UCI}, one can observe that the H-Laplace kernel results are the closest to NTK on all the metrics. In fact, as seen in Figure~\ref{fig:scatter_plot}, these methods seem to achieve highly similar accuracies in each of the 90 datasets. Furthermore, the H-$\gamma$-exponential outperforms all the classifiers including NTK on all metrics. Moreover, the homogeneous versions slightly outperform the standard kernels. 
All these methods have hyperparameters that can be optimized.  In~\cite{Arora2020Harnessing}, they search $105$ hyperparameters for NTK.
For a fair comparison, we search for the same number for the $\gamma$-exponential and fewer (70) for the Laplace kernels. 
We note finally that deeper networks yield NTK shapes that are more sharply peaked, corresponding to Laplace kernels with higher values of $c$. This is shown in Fig.~\ref{fig:c_vs_number_of_layers} below.

\begin{figure}[htb]
\begin{floatrow}
\capbtabbox{%
\tiny{{
\begin{tabular}{|l|c|c|c|c|c|}
\toprule
Classifier        & F-Rank  & Average Accuracy             & P90              & P95              & PMA       \\
\midrule

H-$\gamma$-exp.               & \textbf{26.26} & \textbf{82.25\%$\pm$14.07\%} & \textbf{92.22\%} & \textbf{73.33\%} & \textbf{96.07\% $\pm$4.83\%} \\ 

$\gamma$-exp.               & 32.98 & 81.80\%$\pm$14.21\% & 85.56\% & 73.33\% & 95.49\% $\pm$5.31\% \\ 

H-Laplace                & 29.60 & 81.74\%$\pm$13.82\% & 88.89\% & 66.67\% & 95.53\% $\pm$4.84\% \\ 

Laplace                & 33.28 & 81.12\%$\pm$14.16\% & 86.67\% & 65.56\% & 94.88\% $\pm$6.85\% \\ 

H-Gaussian    & 32.66          & 81.46\% $\pm$ 14.83\%        & 84.44\%          & 67.77\% & 94.95\% $\pm$6.25\%          \\ 
Gaussian    & 35.76          & 81.03\% $\pm$ 15.09\%        & 85.56\%          & 72.22\% & 94.56\% $\pm$8.22\%          \\

NTK~\cite{Arora2020Harnessing}              & 28.34 & 81.95\%$\pm$14.10\% & 88.89\% & 72.22\% & 95.72\% $\pm$5.17\% \\

NN~\cite{Arora2020Harnessing}     & 38.06          & 81.02\%$\pm$14.47\%          & 85.56\%          & 60.00\%          & 94.55\% $\pm$5.89\%          \\ 


RF~\cite{Arora2020Harnessing}                & 33.51          & 81.56\% $\pm$13.90\%         & 85.56\%          & 67.78\%          & 95.25\% $\pm$5.30\%          \\ 

\bottomrule
\end{tabular}
}}
}{%
  \caption{\small Performance on the UCI dataset. Lower F-Rank and higher P90, P95, PMA are better numbers.\label{tab:UCI}}%
}

\ffigbox{%
    \begin{minipage}[c]{0.5\textwidth}
        \includegraphics[height=2.9cm]{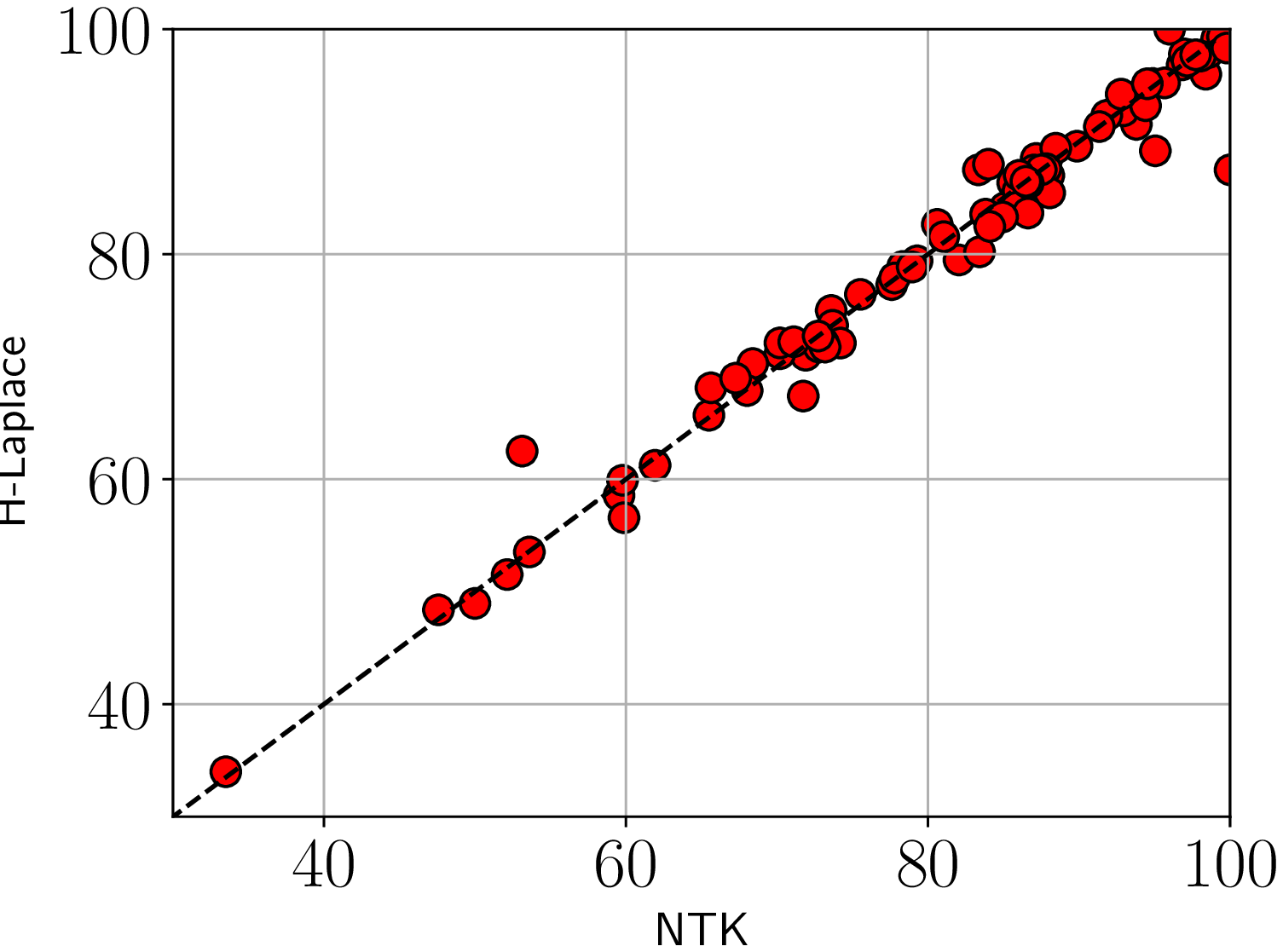}
  \end{minipage}
{%
\caption{\small{Performance comparisons \newline
between NTK and H-Laplace on the
\newline
UCI dataset.} \label{fig:scatter_plot}}
}

}

\end{floatrow}
\end{figure}

\begin{figure}[t]
    \centering
    \includegraphics[height=2.7cm]{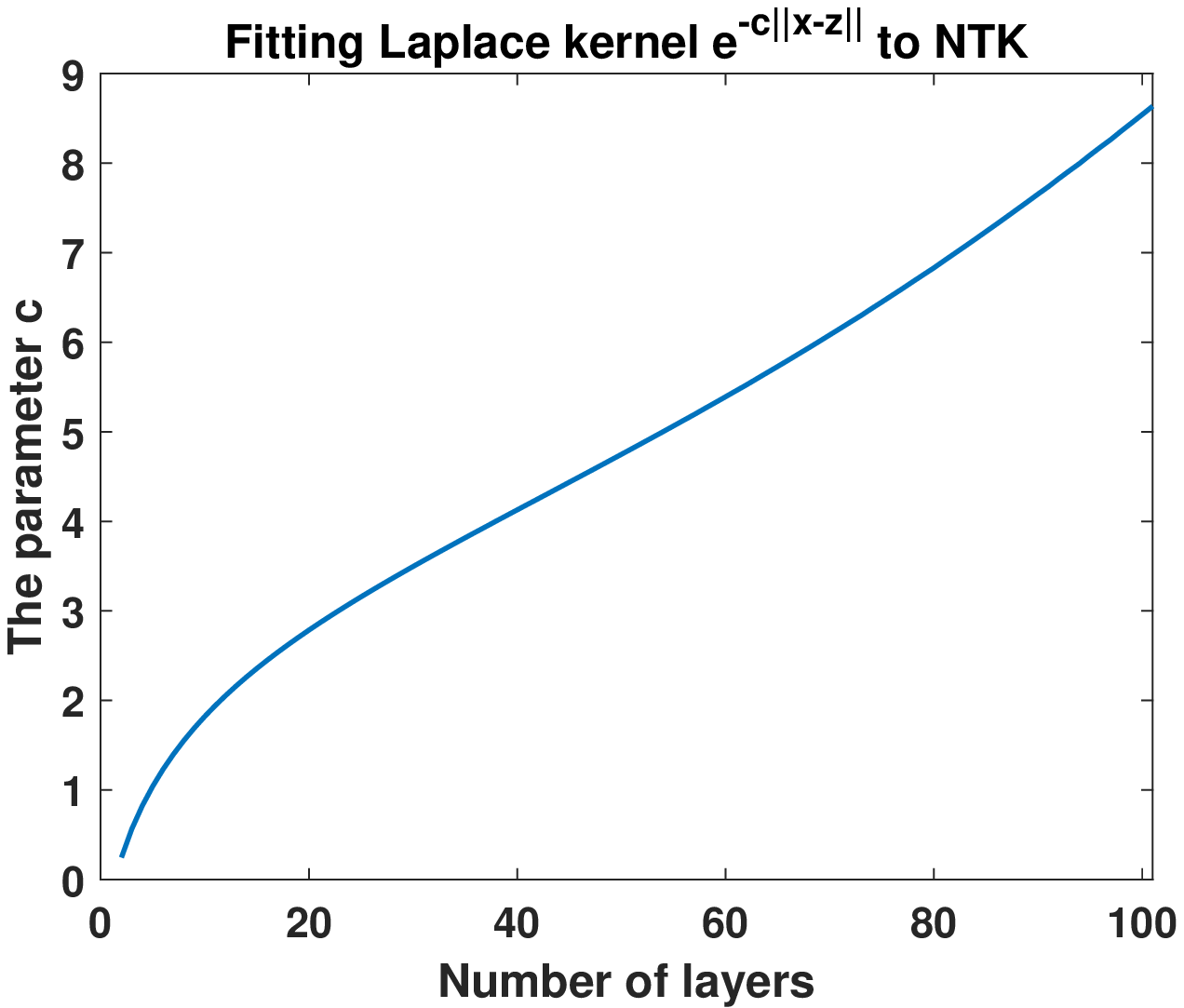}
    \caption{\small Fitting the Laplace kernel to NTK. The graph shows optimal width ($c$) of the Laplace kernel that is fitted to NTK with different number of layers.}
    \label{fig:c_vs_number_of_layers}
\end{figure}

\subsection{Large scale datasets}\label{large_uci}

We leverage FALKON~\cite{rudi2017falkon}, an efficient approximate kernel method to conduct large scale regression and classification tasks following the setup of~\cite{rudi2017falkon}. The results and datasets details are reported in Table~\ref{table:largeData}. We searched for hyperparameters based on a small validation dataset for all the methods and used the standard train/test partition provided on the UCI repository. From Table~\ref{table:largeData}, one can notice that NTK and H-Laplace perform similarly. For each dataset, either the $\gamma$-exponential or Gaussian kernels slightly outperforms these two kernels.

\begin{table*}[!htbp]
  \centering
  \begin{tiny}
  \begin{tabular}{|l|c|c|c|}
    \toprule
        & MillionSongs~\cite{bertin2011million} & SUSY~\cite{baldi2014searching} & HIGGS~\cite{baldi2014searching} \\
        \midrule
            \#Training Data                      &  $4.6\times10^5$    & $5\times10^6$ & $1.1\times 10^7$ \\
            \#Features                      & 90    & 18 & 28 \\
            Problem Type                      & Regression    & Classification & Classification \\
                        Performance Metric                      & MSE   & AUC & AUC \\
    \hline
    \hline

    H-$\gamma$-exp.                      & \textbf{78.6417}    & 87.686 & \textbf{82.281} \\
    
    H-Laplace                      & 79.7941    & 87.670 & 81.995 \\
    
    NTK                    & 79.9666    & 87.673 & 82.089 \\
    
    H-Gaussian                     & 79.6255    & \textbf{87.689} & 81.967 \\
    
    \hline

    Neural Network \cite{baldi2014searching}  & - & 87.500    & 81.600 \\
    
    Deep Neural Network \cite{baldi2014searching} & -  & \textbf{87.900}$^{*}$    & \textbf{88.500}$^{*}$   \\
    
    \bottomrule
  \end{tabular} 
     \caption{\small Performance on the large scale datasets. We report MSE~(lower is better) for the regression problem, and AUC~(higher is better) for the classification problems.} 
     \label{table:largeData}
\end{tiny}
\end{table*}

\subsection{Hierarchical convolutional kernels}

Convolutional NTKs (CNTK) were shown to express the limit of convolutional neural networks when the number of channels tends to infinity, and recent empirical results showed that the two achieve similar accuracy on test data \cite{arora2019exact,li2019enhanced}.  CNTK is defined roughly by recursively applying NTK to image patches. For our final experiment we constructed alternative hierarchical kernels, in the spirit of \cite{Bo2010nips,Mairal2014conv}, by recursively applying exponential kernels in a manner similar to CNTK. The new kernel, denoted C-Exp, is applied first to pairs of $3 \times 3$ image patches, then to $3 \times 3$ patches of kernel values, and so forth. A detailed algorithm is provided in the supplementary material.
We applied the kernel (using the homogeneous versions of the Laplace, Gaussian and $\gamma$-exponential kernels) to the Cifar-10 dataset and compared it to CNTK. Our experimental conditions and results for CNTK are identical to those of \cite{arora2019exact}. (Note that these do not include global average pooling.) Consistent with our previous experiments, Table~\ref{tab:cexp} shows that these kernels are on par with the CNTK with small advantage to the $\gamma$-exponential kernel. This demonstrates that the four kernels maintain similar performance even after repeated application.

\begin{table}[h]

\tiny
\centering
\begin{tabular}{|l|c|c|}
 \toprule
  Method& Accuracy (50k) &Accuracy(2k) \\
 \midrule
 CNTK   & 66.4\%   &43.9\%\\
 C-Exp Laplace &   65.2\%  & 44.2\%   \\
 C-Exp $\gamma$-exponential & \textbf{67.0\%}  &  \textbf{45.2}\%\\
 C-Exp Gaussian  &66.8\% & 45.0\%\\
 
 \bottomrule
\end{tabular}\

\caption{\label{tab:cexp}
\small Classification accuracy for the CIFAR-10 dataset for our C-Exp hierarchical kernels, compared to CNTK. The two columns show results with training on the full dataset and on the first 2000 examples.}
\end{table}

\cmt{
\begin{table}

\tiny
\centering
\begin{tabular}{|l||c|c|}
 \hline
  Method& Accuracy (50k) &Accuracy(2k) \\
 \hline
 CNTK   & 66.4\%   &43.9\%\\
 C-Exp Laplace &   65.2\%  & 44.2\%   \\
 C-Exp $\gamma$-exponential & \textbf{67.0\%}  &  \textbf{45.2}\%\\
 C-Exp Gaussian  &66.8\% & 45.0\%\\
 
 \hline
\end{tabular}\

\caption{\label{tab:cexp}
\small Classification accuracy for the CIFAR-10 dataset for our C-Exp hierarchical kernels, compared to CNTK. The two columns show results on the full dtaset and on the first 2000 examples.}
\end{table}
}

\section{Conclusions}
Our paper has considered the relationship between NTK and the classic Laplace kernel.  Our main result is to show that for data normalized on the unit hypersphere, these two kernels have the same RKHS.  Experiments show that the two kernels perform almost identically on a wide range of real-world applications.  Coupled with prior results that show that kernel methods using NTK mimic the behavior of FC neural networks, our results suggest that much insight about neural networks can be obtained from analysis of the well-known Laplace kernel, which has a simple closed form.

Neural networks do offer great flexibility not easily translated to kernel methods.  They can naturally be applied to large data sets, and  researchers have developed many techniques for training them, such as dropout and batch normalization, that improve performance but do not directly translate to kernel methods.  Furthermore, while we study feed-forward fully connected networks, analyzing more complex architectures, such as CNNs, GANs, autoencoders and recurrent networks, or the effect of other activation functions, remains a significant challenge for future work. Comparing the eigenfunctions of NTK with those of classical kernels under non-uniform distribution is yet a further challenge.

\section*{Broader impact}
This work explains the success of deep, fully connected networks through their similarity to exponential kernels. Such an analysis may allow for a better interpretability of deep network models.

\section*{Acknowledgements}

The authors thank the U.S.- Israel Binational Science Foundation, grant number 2018680, the National Science Foundation, grant no. IIS-1910132, the Quantifying Ensemble Diversity for Robust Machine Learning (QED for RML) program from DARPA and the Guaranteeing AI Robustness Against Deception (GARD) program from DARPA for their support of this project.


{\small
\bibliographystyle{ieee_fullname}
\bibliography{citations}
}

\clearpage

\appendix

\section{Formulas for NTK}
\label{app:NTK_formulas}

We begin by providing the recursive definition of NTK for fully connected (FC) networks with bias initialized at zero. The formulation includes a parameter $\beta$ that when set to zero the recursive formula coincides with the formula given in \cite{arora2019exact} for  bias-free networks.

\paragraph{The network model.} We  consider a $L$-hidden-layer fully-connected neural network (in total $L+1$ layers) with bias. Let $\x \in \Real^{d}$ (and denote $d_0=d$), we assume each layer $l \in [L]$ of hidden units includes $d_l$ units. The network model is expressed as 
\begin{align*}
    \g^{(0)}(\x) & = \x  \\
    \f^{(l)}(\x) & = W^{(l)}\g^{(l-1)} (\x) + \beta \mathbf{b}^{(l)}  \in \Real^{d_l}, ~~~~~l=1,\ldots L \\
    \g^{(l)}(\x) & = \sqrt{\frac{c_\sigma}{d_l}}
    \sigma\left(\f^{(l)}(\x)\right)\in \Real^{d_l}, ~~~~~l=1,\ldots L \\
    f(\theta,\x) & = f^{(L+1)}(\x) = W^{(L+1)} \cdot \g^{(L)}(\x) + \beta b^{(L+1)}
\end{align*}
The network parameters $\theta$ include $W^{(L+1)},W^{(L)},...,W^{(1)}$, where $W^{(l)} \in \Real^{d_l \times d_{l-1}}$, $\mathbf{b}^{(l)}\in \Real^{d_l\times 1}$, $W^{(L+1)} \in \Real^{1 \times d_L}$, $b^{(L+1)}\in \Real$, $\sigma$ is the activation function  and $c_{\sigma} = 1/\left( \mathbb{E}_{z \sim \mathcal{N}(0,1)} [\sigma(z)^2] \right) $.  The  network parameters are initialized with ${\cal N}(0,I)$, except for the biases $\{\mathbf{b}^{(1)}, \ldots,\mathbf{b}^{(L)},b^{(L+1)}\}$, which are initialized with zero.

\paragraph{The recursive formula for NTK.}
The recursive formula in \cite{jacot2018neural} assumes the bias is initialized with a normal distribution. Here we assume the bias is initialized at zero, yielding a sightly different formulation, which can be readily derived from \cite{jacot2018neural}'s formulation.

Given $\x,\z \in \Real^d$, we denote the NTK for this fully connected network with bias by $\krfcbl{L+1} (\x,\z) := \Theta^{(L)}(\x,\z)$. The kernel $\Theta^{(L)}(\x,\z)$ is defined using the following recursive definition. Let $h \in [L]$ then
\begin{equation}  \label{eq:ntkdeep}
    \Theta^{(h)}(\x,\z) = \Theta^{(h-1)}(\x,\z) \dot{\Sigma}^{(h)}(\x,\z) + \Sigma^{(h)}(\x,\z)+\beta^2,
\end{equation}
where
\begin{equation*}
    \Sigma^{(0)}(\x,\z) = \x^T\z
\end{equation*}
\begin{equation*}
    \Theta^{(0)}(\x,\z)=\Sigma^{(0)}(\x,\z)+\beta^2.
\end{equation*}
and we define
\begin{align*}
    \Sigma^{(h)}(\x,\z) &= c_\sigma\mathbb{E}_{(u,v)\backsim N(0,\Lambda^{(h-1)})}\left(\sigma(u)\sigma(v)\right)\\
    \dot \Sigma^{(h)}(\x,\z) &= c_\sigma\mathbb{E}_{(u,v)\backsim N(0,\Lambda^{(h-1)})}\left( \dot \sigma(u) \dot \sigma(v)\right)\\
    \Lambda^{(h-1)} &= \begin{pmatrix}
    \Sigma^{(h-1)}(\x,\x) & \Sigma^{(h-1)}(\x,\z) \\
    \Sigma^{(h-1)}(\z,\x)& \Sigma^{(h-1)}(\z,\z)
    \end{pmatrix}.
\end{align*}

Now, let
\begin{equation}  \label{eq:lambda}
 \lambda^{(h-1)}(\x,\z) = \frac{\Sigma^{(h-1)}(\x,\z)}{\sqrt{\Sigma^{(h-1)}(\x,\x)\Sigma^{(h-1)}(\z,\z)}}.
\end{equation}
By definition $|\lambda^{(h-1)}|\leq 1$, and for ReLU activation we have $c_\sigma=2$ and
\begin{align} \label{eq:sigma}
    \Sigma^{(h)}(\x,\z) &= c_\sigma \frac{\lambda^{(h-1)} (\pi-\arccos(\lambda^{(h-1)}))+\sqrt{1-(\lambda^{(h-1)})^2}}{2\pi}\sqrt{\Sigma^{(h-1)}(\x,\x)\Sigma^{(h-1)}(\z,\z)} \\
    \label{eq:sigmap}
    \dot \Sigma^{(h)}(\x,\z) &= c_\sigma \frac{ \pi-\arccos(\lambda^{(h-1)})}{2\pi}.
\end{align}
The parameter $\beta$ allows us to consider a fully-connected network either with ($\beta > 0$) or without bias ($\beta = 0$). When $\beta =0$, the recursive formulation is the same as existing derivations, e.g., \cite{jacot2018neural}.
Finally, the normalized NTK of a FC network with $L+1$ layers, without bias, is given by 
$\frac{1}{L+1} \krfcl{L+1}  (\x_i,\x_j)$.

\paragraph{NTK for a two-layer FC network on $\Spdm$.}
Using the recursive formulation above, for points on the hypersphere $\Spdm$ NTK for a two-layer FC network with bias initialized at 0, is as follows.
Let $u=\x^T\z$, with $\x,\z \in \Spdm$. Then, 
\begin{align*}
    \krfctwo(\x,\z)&=\Theta^{(1)}(\x,\z) \\
    &=\Theta^{(0)}(\x,\z) \dot{\Sigma}^{(1)}(\x,\z)+\Sigma^{(1)}(\x,\z)+\beta^2 \\
    &=(u+\beta^2)\frac{\pi-\arccos(u)}{\pi} + \frac{u(\pi-\arccos(u))+\sqrt{1-u^2}}{\pi}+\beta^2.
\end{align*}
Rearranging, we get
\begin{equation}
\label{eq:app:ntk_two_layers}
    \krfctwo(\x,\z)=\krfctwo(u) = \frac{1}{\pi} \left( (2u+\beta^2)(\pi-\arccos(u)) + \sqrt{1-u^2} \right)+\beta^2.
\end{equation}

\section{NTK on $\Spdm$}
\label{app:NTK_on_sphere}

This section provides a characterization of NTK on the hypersphere $\Spdm$ under the uniform measure.
The recursive formulas of the kernels are given in Appendix \ref{app:NTK_formulas}.

\begin{lemma}
\label{lemma:app:ntk_on_sphere_zonal}
    Let $\krfcb(\x,\z)$, $\x,\z \in \Spdm$, denote the NTK kernels for FC  networks with $L \ge 2$ layers, possibly with bias initialized with zero. This kernel is zonal, i.e., $\krfcb(\x,\z)=\krfcb(\x^T\z)$. 
\end{lemma}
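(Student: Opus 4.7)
The plan is to prove the lemma by a short induction on the layer index $h$, exploiting the recursive structure of NTK given in equations \eqref{eq:ntkdeep}, \eqref{eq:sigma}, and \eqref{eq:sigmap}. The key observation is that for ReLU activation (where $c_\sigma = 2$) and points $\x, \z \in \Spdm$, the diagonal normalization $\Sigma^{(h)}(\x,\x)$ is preserved at the value $1$ across all layers, which causes the recursion to depend on $\x$ and $\z$ only through their inner product $\x^T\z$.

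First I would establish that $\Sigma^{(h)}(\x,\x) = 1$ for every $h \ge 0$ whenever $\x \in \Spdm$. The base case is $\Sigma^{(0)}(\x,\x) = \x^T\x = 1$. Assuming $\Sigma^{(h-1)}(\x,\x) = 1$, setting $\lambda^{(h-1)}(\x,\x) = 1$ in \eqref{eq:sigma} (so $\arccos(1)=0$ and $\sqrt{1-1}=0$) yields $\Sigma^{(h)}(\x,\x) = c_\sigma \cdot \tfrac{1}{2} \cdot 1 = 1$. Consequently, the normalization $\lambda^{(h-1)}(\x,\z)$ in \eqref{eq:lambda} simplifies to just $\Sigma^{(h-1)}(\x,\z)$ on the sphere.

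Next I would show by induction on $h$ that $\Sigma^{(h)}(\x,\z)$ and $\dot{\Sigma}^{(h)}(\x,\z)$ depend on $\x,\z$ only through $\x^T\z$. The base case $\Sigma^{(0)}(\x,\z) = \x^T\z$ is trivially zonal. For the inductive step, if $\Sigma^{(h-1)}$ is a function of $\x^T\z$ alone, then so is $\lambda^{(h-1)}$ by the previous paragraph, and substituting into \eqref{eq:sigma} and \eqref{eq:sigmap} shows the same for $\Sigma^{(h)}$ and $\dot{\Sigma}^{(h)}$. An analogous induction on \eqref{eq:ntkdeep}, starting from $\Theta^{(0)}(\x,\z) = \x^T\z + \beta^2$ and using the fact that $\Theta^{(h-1)}$, $\dot{\Sigma}^{(h)}$, and $\Sigma^{(h)}$ are all zonal by induction, yields that $\krfcb(\x,\z) = \Theta^{(L)}(\x,\z)$ is a function of $\x^T\z$ alone.

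There is essentially no serious obstacle in this argument --- the whole proof is a two-layer induction that uses only the recursive formulas and the unit-norm constraint. The one subtlety worth flagging is that the diagonal-preservation step relies on $c_\sigma = 2$ for ReLU (equivalently, on the positive homogeneity of ReLU, which makes $\Sigma^{(h)}$ homogeneous of degree $1$ in each of $\|\x\|^2, \|\z\|^2$ along the diagonal); for a general activation one would need to track $\Sigma^{(h)}(\x,\x)$ explicitly, but since it still depends only on $\|\x\|=1$, the zonality conclusion would persist.
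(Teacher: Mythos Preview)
Your proof is correct. Both your argument and the paper's proceed by induction on the recursive formulas \eqref{eq:ntkdeep}--\eqref{eq:sigmap}, so the underlying mechanism is the same. The difference is one of scope: you work directly on $\Spdm$, first establishing $\Sigma^{(h)}(\x,\x)=1$ so that $\lambda^{(h-1)}=\Sigma^{(h-1)}$, and then propagating zonality through the recursion. The paper instead proves the stronger statement in $\Rd$ that the bias-free $\krfc$ is homogeneous of order $1$ and $\krb=\krfcb-\krfc$ is homogeneous of order $0$ (Theorem~\ref{app:thm:k_homogeneous}), i.e., $\Theta^{(l)}(\x,\z)=r_x r_z\,\Theta^{(l)}(\hat\x^T\hat\z)$ and $\krb(\x,\z)=\krb(\hat\x^T\hat\z)$; zonality on the sphere then drops out by setting $r_x=r_z=1$. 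Your route is cleaner if the lemma is all one needs, while the paper's route yields the homogeneity properties in $\Rd$ that are reused later to determine the eigenfunctions off the sphere (Theorem~\ref{app:thm:eig_outofsphere}).
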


\begin{proof}
See Appendix \ref{Appendix:NTK_Rd}. 

\end{proof}

To prove the next theorem, we recall several results on the the arithmetics of RKHS, following \cite{daniely2016toward,saitoh2016theory}.

\subsection {RKHS for sums and products of kernels.}
\label{sec:app:preliminaries}
Let $\kr_1, \kr_2: \X \times \X \rightarrow \Real$ be kernels with RKHS 
 $\hh_{\kr_1}$ and $\hh_{\kr_2}$, respectively. Then,
\begin{enumerate}
\item 
{\bf Aronszajn's kernel sum theorem.} The RKHS for $\kr=\kr_1+\kr_2$ is given by  
$\hh_{\kr_1+\kr_2} =\{f_1+f_2~|~f_1 \in \hh_{\kr_1},~f_2 \in \hh_{\kr_2}\}$

\item 
This yields the {\bf kernel sum inclusion.} $\hh_{\kr_1},\hh_{\kr_2}\subseteq \hh_{\kr_1+\kr_2}$

\item {\bf Norm addition inequality.}
$\norm{f_1+f_2}_{\hh_{\kr_1+\kr_2}} \leq \norm{f_1}_{\hh_{\kr_1}} + \norm{f_2}_{\hh_{\kr_2}}
$

\item {\bf Norm product inequality.}
$\norm{f_1 \cdot f_2}_{\hh_{\kr_1 \cdot \kr_2}} \leq \norm{f_1}_{\hh_{\kr_1}} \cdot \norm{f_2}_{\hh_{\kr_2}}$

\item {\bf Aronszajn's inclusion theorem.} $\hh_{\kr_1} \subseteq \hh_{\kr_2}$ if and only if $\exists s>0$, such that $\kr_1 \ll s^2 \kr_2$, where the latter notation means that $s^2 \kr_2 - \kr_1$ is a positive definite kernel over $\X$. 
\end{enumerate}

\subsection{The decay rate of the eigenvalues of NTK}
\begin{theorem}
\label{app:thm:NTKdecay}
Let $\x,\z \in \mathbb{S}^{d-1}$. With bias initialized at zero and $\beta>0$:
\begin{enumerate}
\item $\krfcb$ can be decomposed according to 
\begin{equation}  
\label{app:eq:sh}
    \krfcb(\x,\z) = \sum_{k=0}^\infty \lambda_k \sum_{j=1}^{N(d,k)} Y_{k,j}(\x) Y_{k,j}(\z),
\end{equation} with $\lambda_k>0$  for all $k \ge 0$ and into $Y_{k,j}$ are the spherical harmonics of $\Spdm$, and
\item $\exists k_0$ and constants $C_1,C_2,C_3>0$ that depend on the dimension $d$ such that $\forall k>k_0$
\begin{enumerate}
    \item $C_1 k^{-d} \leq \lambda_k \leq C_2 k^{-d}$ if $L=2$, and 
    \item $C_3 k^{-d} \leq \lambda_k$ if $L \ge 3$.
\end{enumerate}  
\end{enumerate}
\end{theorem}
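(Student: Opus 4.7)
The plan is to prove the three assertions in order, using the recursive structure in \eqref{eq:ntkdeep}--\eqref{eq:sigmap} together with the kernel/RKHS arithmetic summarized in Section~\ref{sec:app:preliminaries}. Since $\krfcb$ is zonal by Lemma~\ref{lemma:app:ntk_on_sphere_zonal}, the decomposition \eqref{app:eq:sh} is automatic from Mercer's theorem with $Y_{k,j}$ as eigenfunctions, so the work is to (i) show $\lambda_k>0$ for every $k$ and (ii) obtain the quantitative bounds.

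For positivity I would proceed by induction on $L$. For $L=2$, I would take the closed form \eqref{eq:app:ntk_two_layers} and expand each term ($u$, $\beta^2$, $\arccos(u)$, $u\arccos(u)$, $\sqrt{1-u^2}$) in Gegenbauer polynomials via the Funk--Hecke formula; the arc-cosine kernel and $\sqrt{1-u^2}$ are known to have strictly positive Gegenbauer coefficients (see, e.g., \cite{cho2009kernel,bach2017breaking}), and the bias $\beta^2$ contributes positively to $\lambda_0$, so every $\lambda_k>0$. For the inductive step I would argue that each operation in the recursion preserves positivity: the summand $\beta^2$ adds to $\lambda_0$; $\Sigma^{(h)}$ and $\dot\Sigma^{(h)}$ in \eqref{eq:sigma}--\eqref{eq:sigmap} are compositions of zonal, positive-definite arc-cosine-type kernels with $\lambda^{(h-1)}$ and hence have nonnegative Gegenbauer expansions that are strictly positive for each $k$; and the product $\Theta^{(h-1)}\dot\Sigma^{(h)}$ of zonal positive-definite kernels remains strictly positive on every harmonic subspace (a triple-product/Clebsch--Gordan type argument shows that the product of two series with all-positive coefficients yields a series with all-positive coefficients, once one notices that $\dot\Sigma^{(h)}$ has a nonzero $\lambda_0$ component so every $Y_{k,j}$-mode of $\Theta^{(h-1)}$ is propagated).

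For the quantitative bounds when $L=2$, I would follow the template of Bach~\cite{bach2017breaking}: the Gegenbauer coefficients of a zonal kernel $\kr(u)$ decay at a rate dictated by the regularity of $\kr$ at the endpoints $u=\pm 1$. From \eqref{eq:app:ntk_two_layers}, the smooth polynomial pieces contribute only to finitely many coefficients (or decay faster), while $\sqrt{1-u^2}$ and $u\arccos(u)$ both exhibit $(1-u)^{1/2}$-type singularities at $u=1$. Standard asymptotics of Jacobi coefficients for such singular functions yield coefficients of order $k^{-d}$ both above and below, producing the two-sided bound $C_1 k^{-d}\le \lambda_k \le C_2 k^{-d}$; a careful tracking of constants coming from $N(d,k)\sim k^{d-2}$ versus the raw Funk--Hecke integrals gives the claimed $d$-dependent constants.

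For $L\ge 3$ I only need the lower bound, and here I would invoke Aronszajn's inclusion theorem (item~5 of Section~\ref{sec:app:preliminaries}): it suffices to exhibit $s>0$ such that $s^2\,\krfcbl{L}-\krfcbl{2}$ is positive semi-definite on $\Spdm$, because then $\hh_{\krfcbl{2}}\subseteq\hh_{\krfcbl{L}}$ and hence the eigenvalues of $\krfcbl{L}$ are bounded below by a constant multiple of those of $\krfcbl{2}$, which by part~2(a) are $\Theta(k^{-d})$. To produce this domination, I would unroll the recursion \eqref{eq:ntkdeep} and show that $\krfcbl{L}$ contains $\krfcbl{2}$ as a summand up to a strictly positive multiplicative factor: explicitly, repeatedly substituting the recursion and using that $\dot\Sigma^{(h)}\succeq c_h>0$ as a function on $\Spdm\times\Spdm$ (since $\dot\Sigma^{(h)}$ is bounded below by a positive constant via \eqref{eq:sigmap}), one can isolate a term of the form $\bigl(\prod_{h=3}^{L}\dot\Sigma^{(h)}\bigr)\cdot \krfcbl{2}$ inside $\krfcbl{L}$, with all remaining terms being positive-definite, so that $s^2\krfcbl{L}-\krfcbl{2}\succeq 0$ for $s$ large enough. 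The main obstacle will be this last step: the product of PD kernels is PD, but showing $\bigl(\prod_{h=3}^{L}\dot\Sigma^{(h)}\bigr)\cdot \krfcbl{2}\succeq c\,\krfcbl{2}$ pointwise requires a uniform positive lower bound on the $\dot\Sigma^{(h)}$ factors on $\Spdm\times\Spdm$, which I would obtain by arguing inductively that $\lambda^{(h-1)}\in[-1,1]$ stays bounded away from $-1$ thanks to the bias contribution propagating through \eqref{eq:lambda}, keeping $\pi-\arccos(\lambda^{(h-1)})$ uniformly positive.
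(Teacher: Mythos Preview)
Your plan for $L=2$ and for positivity is essentially the paper's: decompose $\krfctwo$ into the bias-free NTK $\kappa$ and the zero-order arc-cosine kernel $\kappa_0$, then quote the Bach/Bietti--Mairal asymptotics for each. The paper does not redo the endpoint-singularity computation you sketch; it simply cites \cite{bach2017breaking,bietti2019inductive}. Either route is fine.

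For $L\ge 3$, however, there is a real gap. You try to obtain the PSD domination needed for Aronszajn by arguing that $\dot\Sigma^{(h)}(\x,\z)\ge c_h>0$ \emph{pointwise}, and then conclude $(\prod_h \dot\Sigma^{(h)})\cdot \krfcbl{2}\succeq c\,\krfcbl{2}$. But a uniform pointwise lower bound on a PSD kernel $A$ does \emph{not} imply $A\cdot B\succeq cB$ in the PSD order: $(A-c)$ need not be PSD just because it is nonnegative as a function, so the Schur-product argument fails. (Separately, your mechanism for the pointwise bound is misstated: the bias $\beta$ never enters $\lambda^{(h-1)}$, which depends only on $\Sigma$; what actually keeps $\lambda^{(h-1)}\ge 0$ for $h\ge 2$ is the ReLU nonlinearity itself.)

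The paper sidesteps this entirely. It observes that $\dot\Sigma^{(l)}$ has a strictly positive zeroth Gegenbauer coefficient, hence the constant function $1$ lies in $\hh_{\dot\Sigma^{(l)}}$. Then the norm product inequality of Section~\ref{sec:app:preliminaries} gives $\|f\|_{\hh_{\krfcbl{l}\cdot\dot\Sigma^{(l)}}}\le \|f\|_{\hh_{\krfcbl{l}}}\|1\|_{\hh_{\dot\Sigma^{(l)}}}<\infty$, so $\hh_{\krfcbl{l}}\subseteq \hh_{\krfcbl{l}\cdot\dot\Sigma^{(l)}}\subseteq \hh_{\krfcbl{l+1}}$ by kernel-sum inclusion, and Aronszajn's inclusion theorem then yields $\lambda_k^{\krfcb}\ge s^{-2}\lambda_k^{\krfctwo}\ge C_3 k^{-d}$. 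Equivalently, in kernel language: write $\dot\Sigma^{(l)}=\eta_0^{(l)}\cdot 1 + R^{(l)}$ with $\eta_0^{(l)}>0$ and $R^{(l)}$ PSD; then $\dot\Sigma^{(l)}\cdot B \succeq \eta_0^{(l)} B$ for any PSD $B$, which is the correct PSD domination you were after. Ironically, you already noted the key ingredient (``$\dot\Sigma^{(h)}$ has a nonzero $\lambda_0$ component'') in your positivity paragraph; use it here instead of the pointwise bound, and your argument goes through. As a bonus, positivity of all $\lambda_k$ for $L\ge 3$ then follows immediately from the lower bound and from $\lambda_k^{\krfctwo}>0$, so your separate inductive positivity argument becomes unnecessary.
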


We split the theorem into the next two lemmas. The first lemma handles NTK of two-layer FC networks with bias, and the second lemma handles NTK for deep networks.

\begin{lemma}
Let $\x,\z \in \mathbb{S}^{d-1}$ and $\krfctwo(\x^T\z)$ as defined in  \eqref{eq:app:ntk_two_layers} with $\beta>0$. Then, $\krfctwo$ decomposes according to \eqref{app:eq:sh} where $\lambda_k>0$ for all $k \ge 0$ and $\exists k_0$ such that $\forall k \geq k_0$
\begin{equation*}
    C_1 k^{-d} \leq \lambda_k \leq C_2 k^{-d},
\end{equation*}
where $C_1,C_2>0$ are constants that depend on the dimension $d$.
\end{lemma}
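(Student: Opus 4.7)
The natural decomposition of \eqref{eq:app:ntk_two_layers} reads
\[
\krfctwo(u) \;=\; K_1(u) \;+\; u\,K_0(u) \;+\; \beta^{2}K_0(u) \;+\; \beta^{2},
\]
where $K_0(u)=(\pi-\arccos u)/\pi$ and $K_1(u)=\bigl[\sqrt{1-u^{2}}+(\pi-\arccos u)u\bigr]/\pi$ are the Cho--Saul arc-cosine kernels of orders $0$ and $1$. All four pieces are zonal, so by linearity of the Funk--Hecke formula
\[
\lambda_k^{\mathrm{FC}_\beta(2)} \;=\; \lambda_k^{K_1} \;+\; \lambda_k^{uK_0} \;+\; \beta^{2}\lambda_k^{K_0} \;+\; \beta^{2}\delta_{k,0},
\]
reducing the problem to analysing each of the four summands.

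For the first two I would import the asymptotics known from the arc-cosine kernel literature (\cite{bach2017breaking,bietti2019inductive}). Since $\arccos(-u)=\pi-\arccos u$, the function $K_0-\tfrac12$ is odd in $u$, so $\lambda_k^{K_0}=0$ on even $k\geq 2$ and $\lambda_k^{K_0}=\Theta(k^{-d})$ on odd $k$; dually $\lambda_k^{K_1}=0$ on odd $k\geq 3$ and $\lambda_k^{K_1}=\Theta(k^{-d-2})$ on even $k$. For the new term $uK_0$, I would invoke the three-term recurrence for the Gegenbauer polynomials of index $\alpha=(d-2)/2$, which writes $u P_k$ as a positive combination of $P_{k-1}$ and $P_{k+1}$; substituting into Funk--Hecke and absorbing the norms $\|P_k\|^{2}$ (polynomial in $k$) exhibits $\lambda_k^{uK_0}$ as $a_k\lambda_{k+1}^{K_0}+b_k\lambda_{k-1}^{K_0}$ with $a_k,b_k=\Theta(1)$ and both non-negative. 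Consequently $\lambda_k^{uK_0}=\Theta(k^{-d})$ on even $k\geq 2$ (both $\lambda^{K_0}_{k\pm 1}$ contribute) and $\lambda_k^{uK_0}=0$ on odd $k\geq 3$ (both $\lambda^{K_0}_{k\pm 1}$ vanish).

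Combining parity by parity, on odd $k\geq k_0$ the term $\beta^{2}\lambda_k^{K_0}$ is the only surviving one and is $\Theta(k^{-d})$; on even $k\geq k_0$ the term $\lambda_k^{uK_0}$ dominates since $\lambda_k^{K_1}=O(k^{-d-2})$. Choosing $C_1$ and $C_2$ as the minimum and maximum of the two parity-dependent leading constants gives $C_1 k^{-d}\leq \lambda_k\leq C_2 k^{-d}$ for all $k\geq k_0$. Strict positivity on every $k\geq 0$ is automatic: the bare constant $\beta^{2}$ handles $k=0$; $\beta^{2}\lambda_k^{K_0}>0$ handles odd $k\geq 1$; and $\lambda_k^{uK_0}>0$ handles even $k\geq 2$ (with the low-$k$ cases checked directly from the recurrence).

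The main obstacle, I expect, lies in establishing sharpness of the $\Theta(k^{-d})$ rate for $\lambda_k^{uK_0}$: one has to verify that the Gegenbauer norm ratios $\|P_{k\pm 1}\|^{2}/\|P_k\|^{2}$ stay of order $1$ in $k$ and that summing the $k^{-d}$ contributions of $\lambda_{k\pm 1}^{K_0}$ does not accidentally lower the exponent. If sharp imported asymptotics for $\lambda_k^{K_0}$ are unavailable in the form required, they can be re-derived directly from Funk--Hecke by an integration by parts that isolates the $\sqrt{1-u}$ singularity of $K_0$ at $u=1$; this singularity, shared with $uK_0$ and with the $\sqrt{1-u^{2}}$ part of $K_1$, is the ultimate source of the universal $k^{-d}$ rate.
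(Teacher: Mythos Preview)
Your argument is correct; the approach is a finer-grained variant of the paper's. The paper decomposes
\[
\krfctwo(u)=\kappa(u)+\beta^{2}\kappa_0(u)+\beta^{2},
\]
where $\kappa=K_1+uK_0$ is the bias-free two-layer NTK and $\kappa_0=K_0$. It then cites Bietti--Mairal directly: Proposition~5 there gives $\mu_k\sim C(d)k^{-d}$ for $\kappa$ on even $k$ (and $\mu_k=0$ on odd $k\ge 3$), while Lemma~17 gives $\eta_k\sim C_0(d)k^{-d}$ for $\kappa_0$ on odd $k$ (and $\eta_k=0$ on even $k\ge 2$). Adding $\lambda_k=\mu_k+\beta^2\eta_k$ covers both parities at rate $\Theta(k^{-d})$ with no further work.

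You instead split $\kappa$ into $K_1$ and $uK_0$, note that $K_1$ alone is smoother ($\Theta(k^{-d-2})$), and recover the $\Theta(k^{-d})$ on even $k$ by transferring the odd-$k$ eigenvalues of $K_0$ to even $k$ via the Gegenbauer three-term recurrence. This is more work than the paper's route---you are effectively re-deriving the even-$k$ part of Bietti--Mairal's Proposition~5 rather than citing it---but it is more explicit about \emph{which} term carries the $k^{-d}$ rate at each parity, and it needs only the arc-cosine asymptotics for $K_0$ and $K_1$ as input. The recurrence step is sound: the coefficients $a_k,b_k$ coming from $uC_k^{\alpha}=\frac{k+1}{2(k+\alpha)}C_{k+1}^{\alpha}+\frac{k+2\alpha-1}{2(k+\alpha)}C_{k-1}^{\alpha}$ together with the Funk--Hecke normalization ratios are indeed nonnegative and $\Theta(1)$, so no cancellation occurs and positivity on even $k\ge 2$ follows.
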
 

\begin{proof}
To prove the lemma we leverage the results of \cite{bach2017breaking,bietti2019inductive}. 
First, under the assumption of the uniform measure on $\Spdm$, we can apply Mercer decomposition to $\krfctwo(\x,\z)$, where the eigenfunctions are the spherical harmonics. This is due to the observation that  $\krfctwo(\x,\z)$ is   positive and zonal in $\Spdm$. It is zonal by Lemma \ref{lemma:app:ntk_on_sphere_zonal} and  positive, since $\krfctwo$ can be decomposed as 
\begin{align*}
    \krfctwo(u) &= \frac{1}{\pi} \left( (2u+\beta^2)(\pi-\arccos(u)) + \sqrt{1-u^2} \right)+\beta^2 \\
   &= \frac{1}{\pi} \left( 2u(\pi-\arccos(u)) + \sqrt{1-u^2} \right) + \frac{1}{\pi}\beta^2\left(\pi-\arccos(u)\right) + \beta^2 \\
   & := \kappa(\x^T \z) +  \beta^2\kappa_0(\x^T \z) +\beta^2,
\end{align*}
where $\kappa(\x^T \z)$ is the NTK for a bias-free, two-layer network introduced in \cite{bietti2019inductive} and $\kappa_0(\x^T \z)$ is known to be the zero-order arc-cosine kernel \cite{cho2011analysis}. By kernel arithmetic, this yields another kernel and this means that $\krfctwo$ is a  positive kernel.

Furthermore, according to Proposition 5 in \cite{bietti2019inductive}
$$\kappa(\x^T \z)= \sum_{k=0}^{\infty} \mu_k \sum_{j=1}^{N(d,k)} Y_{k,j}(\x) Y_{k,j}(\z),$$
where $Y_{k,j}, j=1,\ldots,N(d,k)$ are spherical harmonics of degree $k$, and the eigenvalues $\mu_k$ satisfy $\mu_0, \mu_1 >0$, $\mu_k=0$ if $k=2j+1$ with $j \geq 1$ and otherwise, $\mu_k>0$ and $\mu_k \sim C(d) k^{-d}$ as $k \rightarrow \infty$, with $C(d)$ a constant depending only on $d$. Next, following Lemma 17 in \cite{bietti2019inductive}
the eigenvalues of $\kappa_0(\x^T \z)$, denoted $\eta_k$ satisfy  $\eta_0, \eta_1 > 0$, $\eta_k > 0$ if $k=2j+1$, with $j \geq 1$ and behave asymptotically as $C_0(d) k^{-d}$. 
Consequently, $\krfctwo = \kappa+ \beta^2 \kappa_0 + \beta^2$, and since both $\kappa$ and $\kappa_0$ have the spherical harmonics as their eigenfunctions, their eigenvalues are given by $\lambda_k=\mu_k+\beta^2\eta_k>0$ for $k>0$ and $\lambda_0=\mu_0+\beta^2\eta_0+\beta^2>0$, and asymptotically $\lambda_k \sim {\tilde C}(d) k ^{-d}$, where $\tilde C(d)=C(d)+\beta^2 C_0(d)$. 

To conclude, this implies that  $\exists k_0, C_1(d)>0$ and $C_2(d)>0$, such that for all $k \geq k_0$ it holds that 
$$C_1 k^{-d} \leq \lambda_k \leq C_2 k^{-d} $$
and also, unless $\beta=0$, for all $k \geq 0$ $$\lambda_k > 0.$$
\end{proof}

Next, we prove the second part of Theorem~\ref{app:thm:NTKdecay} that relates to deep FC networks with bias, $\krfcb$, i.e. we prove the following lemma.

\begin{lemma}
Let $\x,\z \in \mathbb{S}^{d-1}$ and $\krfcb(\x^T\z)$ as defined in Appendix \ref{app:NTK_formulas}. Then
\begin{enumerate}
\item $\krfcb$ decomposes according to \eqref{app:eq:sh} with $\lambda_k>0$  for all $k \ge 0$
\item $\exists k_0$ such that $\forall k>k_0$ it holds that
 $C_3 k^{-d} \leq \lambda_k$ in which
  $C_3>0$ depends on the dimension $d$
\item $\HNTKFCbl{L-1} \subseteq \HNTKFCb$
\end{enumerate}
\end{lemma}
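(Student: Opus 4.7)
The plan is to work layer-by-layer from the recursive identity
\[
\Theta^{(h)}(\x,\z) \;=\; \Theta^{(h-1)}(\x,\z)\,\dot\Sigma^{(h)}(\x,\z) + \Sigma^{(h)}(\x,\z) + \beta^2
\]
of Appendix \ref{app:NTK_formulas} and to reduce all three claims to a single per-layer eigenvalue inequality, proved in a common spherical-harmonic eigenbasis. By Lemma \ref{lemma:app:ntk_on_sphere_zonal} the NTK is zonal, and the same is true of $\Sigma^{(h)}$ and $\dot\Sigma^{(h)}$ on $\Spdm$, so all of them are simultaneously diagonalized by the spherical harmonics $Y_{k,j}$; the Mercer expansion \eqref{app:eq:sh} applies to each, and the eigenvalues $\lambda_k$ are, up to a positive $k,d$-dependent rescaling given by Funk--Hecke/the addition theorem, the Gegenbauer coefficients of the corresponding zonal profile on $[-1,1]$.

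The heart of the plan is the per-layer inequality: there exists $c_h>0$, independent of $k$, such that $\lambda_k(\Theta^{(h)}) \ge c_h\,\lambda_k(\Theta^{(h-1)})$ for every $k\ge 0$. I would expand $\Theta^{(h-1)}$ and $\dot\Sigma^{(h)}$ in Gegenbauer polynomials $G_k^{(d)}$ with nonnegative coefficients $\alpha_j^{(h-1)},\beta_j^{(h)}\ge 0$ (nonnegativity because both are PD zonal). The linearization identity $G_i^{(d)}G_j^{(d)}=\sum_\ell L^\ell_{ij}G_\ell^{(d)}$ has nonnegative connection coefficients $L^\ell_{ij}\ge 0$ for $d\ge 2$ (a classical result of Dougall/Koornwinder), so the $k$-th Gegenbauer coefficient of $\Theta^{(h-1)}\dot\Sigma^{(h)}$ is bounded below by the single contribution $\alpha_k^{(h-1)}\beta_0^{(h)}$ coming from $i=k$, $j=0$ (noting $G_0\equiv 1$ so $L^k_{k,0}=1$). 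The constant $\beta_0^{(h)}$ is the spherical average of the nonnegative function $\dot\Sigma^{(h)}$; it is bounded below by $\tfrac12$, since $\dot\Sigma^{(h)}=(\pi-\arccos(\Sigma^{(h-1)}))/\pi\ge \tfrac12$ whenever $\Sigma^{(h-1)}\ge 0$ (true on $\Spdm$ for $h\ge 2$ because ReLU forces $\Sigma^{(h)}\ge 0$), and an $\arccos$ symmetry integral gives $\beta_0^{(1)}=\tfrac12$. Adding the nonnegative Mercer contribution of $\Sigma^{(h)}+\beta^2$ closes the inequality with $c_h=\beta_0^{(h)}\ge\tfrac12$.

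Given this inequality the three claims follow quickly. For Part 3, applying it once with $h=L-1$ yields that $c_{L-1}^{-1}\krfcb-\krfcbl{L-1}$ has nonnegative Mercer coefficients, hence is PD, so Aronszajn's inclusion theorem gives $\HNTKFCbl{L-1}\subseteq\HNTKFCb$. For Part 2, iterating the inequality from $h=L-1$ down to $h=1$ gives $\lambda_k(\krfcb)\ge\big(\prod_{h=2}^{L-1}c_h\big)\,\lambda_k(\krfctwo)\ge C_3\,k^{-d}$ for all $k\ge k_0$, using the $k^{-d}$ lower bound of the preceding lemma as the base case. For Part 1, the same iteration preserves strict positivity, with the base case supplying $\lambda_k(\krfctwo)>0$ for every $k$ when $\beta>0$.

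The main obstacle I expect is the clean bookkeeping surrounding the per-layer inequality: invoking Gegenbauer linearization nonnegativity in the correct normalization (the Gegenbauer parameter $(d-2)/2$ must be at least $-1/2$, which forces one to handle $d=2$ via the Chebyshev case separately), and verifying that the passage between Gegenbauer coefficients and Mercer eigenvalues introduces only a positive $k,d$-dependent rescaling that cancels on both sides of the inequality, so that $c_h$ is genuinely independent of $k$.
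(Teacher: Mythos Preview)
Your argument is correct, but it takes a different route from the paper's. The paper works at the level of RKHS inclusions rather than eigenvalues: it observes that all Mercer coefficients of $\dot\Sigma^{(l)}$ are strictly positive (in particular the constant function lies in $\hh_{\dot\Sigma^{(l)}}$), then applies the norm product inequality $\norm{f\cdot 1}_{\hh_{\kr_1\kr_2}}\le\norm{f}_{\hh_{\kr_1}}\norm{1}_{\hh_{\kr_2}}$ to conclude $\hh_{\krfcbl{l}}\subseteq\hh_{\krfcbl{l}\cdot\dot\Sigma^{(l)}}$, and kernel-sum inclusion to obtain $\hh_{\krfcbl{l}}\subseteq\hh_{\krfcbl{l+1}}$. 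The eigenvalue lower bound is then \emph{deduced} from this inclusion via Aronszajn's inclusion theorem, yielding only the existence of some constant $s>0$ with $s^2\lambda_k^{\krfcb}\ge\lambda_k^{\krfctwo}$.

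Your approach instead establishes the per-layer eigenvalue inequality $\lambda_k(\Theta^{(h)})\ge\beta_0^{(h)}\lambda_k(\Theta^{(h-1)})$ directly, using nonnegativity of Gegenbauer linearization coefficients, and then reads off the RKHS inclusion. Both arguments ultimately hinge on the same fact, namely that the zeroth Gegenbauer coefficient of $\dot\Sigma^{(h)}$ is strictly positive; you make this quantitative ($\beta_0^{(h)}\ge\tfrac12$), so you obtain an explicit constant $C_3\ge 2^{-(L-2)}C_1$, whereas the paper's abstract route gives only existence. Conversely, the paper's RKHS-arithmetic argument does not need the two kernels to share an eigenbasis or any linearization positivity result, so it transfers more readily to other settings. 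Your bookkeeping concern about the rescaling between Gegenbauer coefficients and Mercer eigenvalues is harmless: the same factor $N(d,k)/|\Spdm|$ appears on both sides of the inequality and cancels, and for $d=2$ the Chebyshev product-to-sum formula handles the linearization explicitly.
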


\begin{proof}
Following Lemma \ref{lemma:app:ntk_on_sphere_zonal}, it holds that $\krfcb$ is zonal, and therefore can be decomposed according to \eqref{app:eq:sh}. In order to prove the lemma we look at the recursive formulation of the NTK kernel, i.e., 
\begin{equation}
\label{eq:app:recursive}\krfcbl{l+1} = \krfcbl{l}\dot{\Sigma}^{(l)} + \Sigma^{(l)} + \beta^2.\end{equation}
Now, following Lemma 17 in \cite{bietti2019inductive}
all of  the eigenvalues of $\dot{\Sigma}^{(l)}$ are positive, including $\lambda_0 > 0$. This implies that the constant function $g(\x)\equiv 1 \in \hh_{\dot{\Sigma}^{(l)}}$. 

Now, we use the norm multiplicity inequality in Sec. \ref{sec:app:preliminaries} and show that $\hh_{\krfcbl{l}} \subseteq 
\hh_{\krfcbl{l} \cdot \dot{\Sigma}^{(l)}}$. Let $f \in \hh_{\krfcbl{l}}$, i.e., $\norm{f}_{\hh_{\krfcbl{l}}} < \infty$. We showed that $1 \in \hh_{\dot{\Sigma}^{(l)}}$. Therefore, $\norm{f \cdot 1}_{\hh_{\krfcbl{l} \cdot \dot{\Sigma}^{(l)}}} \leq \norm{f}_{\hh_{\krfcbl{l}}} \norm{1}_{\hh_{\dot{\Sigma}^{(l)}}}<\infty$, implying that $f \in \hh_{\krfcbl{l} \cdot \dot{\Sigma}^{(l)}}$.

Finally, according to the kernel sum inclusion in Sec. \ref{sec:app:preliminaries}, relying on the recursive formulation \eqref{eq:app:recursive} we have 
$\hh_{\krfcbl{l}} \subseteq \hh_{\krfcbl{l} \cdot \dot{\Sigma}^{(l)}} \subseteq \hh_{\krfcbl{l+1}}$. Therefore, 

\begin{equation}
\label{app:eq:inclusion}
\HNTKFCbl{2} \subseteq \ldots \subseteq \HNTKFCbl{L-1} \subseteq \HNTKFCb.
\end{equation}

This completes the proof, by using Aronszan's inclusion theorem as follows. Since  $H^{k^{FC(2)}}\subseteq H^{k^{FC(L)}}$, then by Aronszajn's inclusion theorem $\exists s > 0$ such that $\krfctwo << s^2 \krfcb$.
Since the kernels are zonal on the sphere (with uniform distribution of the data) their corresponding  RKHS share the same eigenfunctions, namely the spherical harmonics. 

Therefore,  for all $k\geq 0$ it holds 
$$ s^2\lambda^{\krfcb}_k \geq \lambda^{\krfctwo}_k>0$$
and for $k\rightarrow \infty $
it holds that 
\begin{align*}
     s^2\lambda^{\krfcb}_k \geq \lambda^{\krfctwo}_k  \geq \frac{C_1}{k^{d}}
\end{align*}
completing the proof.
 
\end{proof}

\section{Laplace Kernel in $\Spdm$}
\label{sec:Laplace_kernel}

The Laplace kernel $\kr(\x,\y)=e^{-\bar c\norm{\x-\y}}$ restricted to the sphere $\Spdm$ is defined as
\begin{align}\label{exponential_kernel_def}
    K(\x,\y)=\kr(\x^T\y)=e^{-c\sqrt{1-x^Ty}}
\end{align}
where $c>0$ is a tuning parameter. We next prove an asymptotic bound on its eigenvalues.

\begin{theorem} \label{app:thm:exp_decay}
Let $\x,\y \in \Spdm$ and $\kr(\x^T\y)=e^{-c\sqrt{1-\x^T\y}}$ be the Laplace kernel, restricted to $\Spdm$. Then $\kr$ can be decomposed as in \eqref{app:eq:sh} with the eigenvalues $\lambda_k$ satisfying $\lambda_k>0$ for all $k \ge 0$ and $\exists k_0$ such that $\forall k>k_0$ it holds that:
\begin{equation*}
    B_1 k^{-d} \leq \lambda_k \leq B_2 k^{-d}
\end{equation*}
where $B_1,B_2>0$ are constants that depend on the dimension $d$ and the parameter $c$.
\end{theorem}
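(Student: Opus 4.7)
The plan is to reduce the question to the $\Rd$ setting and invoke the restriction-to-the-sphere results of Narcowich and Ward. Since $\|\x-\z\|^2 = 2(1-\x^T\z)$ whenever $\x,\z\in\Spdm$, the Laplace kernel on the sphere is the restriction of the shift-invariant kernel $\Phi(\x-\z)=e^{-\bar c\|\x-\z\|}$ on $\Rd$ with $\bar c = c/\sqrt{2}$. I will therefore transfer the known polynomial decay of $\hat\Phi$ on $\Rd$ into a matching polynomial decay of the spherical harmonic eigenvalues of the restriction.

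First, I would recall that $\Phi$ is continuous, integrable, and strictly positive definite on $\Rd$, and its Fourier transform is the classical computation
\begin{equation*}
    \hat\Phi(\xi) \;=\; \frac{C_d\,\bar c}{(\bar c^{\,2}+\|\xi\|^2)^{(d+1)/2}},
\end{equation*}
so $\hat\Phi(\xi) \asymp (1+\|\xi\|^2)^{-(d+1)/2}$ at infinity. Equivalently, $\Phi$ is, up to an absolute constant, the reproducing kernel of the Sobolev space $H^{(d+1)/2}(\Rd)$. Continuity and zonality of the restriction then make Mercer's theorem applicable, giving the expansion \eqref{app:eq:sh} with spherical harmonic eigenfunctions and some eigenvalues $\lambda_k$; strict positive definiteness of $\Phi$ (which follows from $\hat\Phi>0$ on $\Rd$) forces $\lambda_k>0$ for every $k\ge 0$.

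Second, I would apply the trace-type theorem of \cite{narcowich2002scattered,narcowich2007approximation}: if a continuous radial positive definite function on $\Rd$ has Fourier transform satisfying a two-sided bound $c_1(1+\|\xi\|^2)^{-\tau}\le\hat\Phi(\xi)\le c_2(1+\|\xi\|^2)^{-\tau}$ with $\tau>d/2$, then the RKHS of the restriction $\Phi|_{\Spdm\times\Spdm}$ is norm-equivalent to the Sobolev space $H^{\tau-1/2}(\Spdm)$. Morally this is a Sobolev trace statement; the relevant cited results make the equivalence quantitative at the level of spherical harmonic coefficients. Applied with $\tau=(d+1)/2$, this places $\HExp$ in norm-equivalence with $H^{d/2}(\Spdm)$.

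Finally, I would translate this Sobolev equivalence into the pointwise eigenvalue sandwich. The space $H^{d/2}(\Spdm)$ under its standard norm is exactly the RKHS of a zonal kernel with eigenvalues $\asymp (1+k)^{-d}$, so norm equivalence of the two RKHS's forces constants $B_1,B_2>0$ (depending on $d$ and $\bar c$) such that $B_1 k^{-d}\le\lambda_k\le B_2 k^{-d}$ for all $k$ beyond some $k_0$. The main obstacle is the clean application of the Narcowich--Ward restriction result: one must verify the integrability and strict positive definiteness of $\Phi$ on $\Rd$, track how $c$ propagates through the Fourier transform into $B_1,B_2$, and unpack ``RKHS norm-equivalence'' into the two-sided eigenvalue sandwich. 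Once the pairing $\tau-1/2=d/2$ is in place the exponent is forced, so no delicate Funk--Hecke or stationary-phase asymptotics of a one-dimensional integral are needed.
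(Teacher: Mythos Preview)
Your route is correct and reaches the same conclusion, but it is genuinely different from the paper's. The paper also starts from the Fourier transform $\hat\Phi(\xi)\asymp(1+\|\xi\|^2)^{-(d+1)/2}$, but then invokes the explicit integral representation of \cite{narcowich2002scattered} (their Thm.~4.1),
\[
\lambda_k=\int_0^\infty t\,\hat\Phi(t)\,J^2_{k+\frac{d-2}{2}}(t)\,dt,
\]
and bounds this Bessel integral by hand: for the upper bound it majorizes $t(1+t^2/c^2)^{-(d+1)/2}$ by a constant times $t^{-d}$, applies a closed-form Watson identity for $\int_0^\infty x^{-d}J_\nu^2(cx)\,dx$, and then Stirling; for the lower bound it shows the contribution over $[0,1]$ is super-polynomially small relative to the whole integral. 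Positivity of every $\lambda_k$ drops out from positivity of the integrand. You instead package everything into the Sobolev trace equivalence $H^{(d+1)/2}(\Rd)\big|_{\Spdm}\cong H^{d/2}(\Spdm)$ and read the eigenvalue decay off from norm equivalence. Your approach is more conceptual and makes the exponent $d$ transparent without any Bessel asymptotics; the paper's is more self-contained and does not depend on locating the trace statement in exactly the right form in the cited references.

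One small gap to patch: strict positive definiteness on the sphere does \emph{not} by itself force every $\lambda_k>0$; the Chen--Menegatto--Sun characterization only requires infinitely many positive Gegenbauer coefficients of each parity. So the sentence ``strict positive definiteness of $\Phi$ \ldots\ forces $\lambda_k>0$ for every $k$'' needs a different justification. Either appeal to the integral formula above (the integrand is nonnegative and not identically zero), or, more in keeping with your own argument, use the full norm equivalence with $H^{d/2}(\Spdm)$: since every spherical harmonic lies in $H^{d/2}(\Spdm)$, it must lie in $\HExp$, which forces each $\lambda_k>0$.
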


Our proof relies on several supporting lemmas.
\begin{lemma} \label{app:lemma:stein} (\cite{stein2016introduction} Thm 1.14 page 6)
For all $\alpha > 0$ it holds that  
\begin{equation}
\label{eq:int_fourier}
\int_{\Real^d} e^{-2\pi\norm{\x}\alpha} e^{-2\pi i \tr \cdot \x}d \x = c_d \frac{\alpha}{(\alpha^2 + \norm{\tr}^2)^{(d+1)/2}},\end{equation}
where $c_d = \Gamma(\frac{d+1}{2})/(\pi^{(d+1)/2})$
\end{lemma}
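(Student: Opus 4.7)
The plan is to reduce the $d$-dimensional Fourier transform in \eqref{eq:int_fourier} to a one-parameter integral against a Gaussian. The key tool is the subordination identity
\[
e^{-\beta} \;=\; \frac{1}{\sqrt{\pi}} \int_0^\infty u^{-1/2} e^{-u}\, e^{-\beta^2/(4u)}\, du, \qquad \beta \ge 0,
\]
which one can verify either by a direct Laplace-transform calculation or by recognizing the density $\tfrac{1}{2\sqrt{\pi}} t^{-3/2} e^{-1/(4t)}$ as that of the $1/2$-stable subordinator and performing the substitution $u = 1/(4t)$. Applied with $\beta = 2\pi\alpha\norm{\x}$, it exhibits $e^{-2\pi\alpha\norm{\x}}$ as a one-parameter mixture of centered isotropic Gaussians in $\x$.

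I would then substitute this representation into the left-hand side of \eqref{eq:int_fourier} and apply Fubini, swapping the order of integration so as to first evaluate the inner Fourier integral of the Gaussian. The interchange is justified because the modulus of the integrand, $u^{-1/2} e^{-u}\, e^{-\pi^2 \alpha^2 \norm{\x}^2 / u}$, is nonnegative and integrable over $(0,\infty) \times \Real^d$. The inner integral is then
\[
\int_{\Real^d} e^{-\pi^2 \alpha^2 \norm{\x}^2 / u}\, e^{-2\pi i \tr \cdot \x}\, d\x \;=\; \left(\frac{u}{\pi \alpha^2}\right)^{\!d/2} e^{-\norm{\tr}^2 u / \alpha^2},
\]
which follows from the standard Gaussian Fourier identity $\int_{\Real^d} e^{-\pi s \norm{\x}^2} e^{-2\pi i \tr \cdot \x}\, d\x = s^{-d/2} e^{-\pi \norm{\tr}^2 / s}$ evaluated at $s = \pi \alpha^2 / u$.

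What remains is a one-dimensional Gamma integral in $u$: setting $a = 1 + \norm{\tr}^2/\alpha^2$, the outer integral becomes $\pi^{-1/2}(\pi\alpha^2)^{-d/2} \int_0^\infty u^{(d-1)/2} e^{-a u}\, du = \pi^{-1/2}(\pi\alpha^2)^{-d/2}\, \Gamma\!\left(\tfrac{d+1}{2}\right) a^{-(d+1)/2}$. Rewriting $a^{-(d+1)/2} = \alpha^{d+1}(\alpha^2 + \norm{\tr}^2)^{-(d+1)/2}$ and canceling the powers of $\alpha$ with the $(\pi\alpha^2)^{-d/2}$ prefactor collapses everything to $c_d\, \alpha/(\alpha^2 + \norm{\tr}^2)^{(d+1)/2}$ with $c_d = \Gamma((d+1)/2)/\pi^{(d+1)/2}$, which is exactly the claimed formula.

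The only nontrivial step is the subordination identity itself; once that is in hand, the remaining calculations are routine applications of Fubini and the Gaussian Fourier transform. A self-contained derivation of the subordination identity would go either through the Laplace transform of the stable-$1/2$ density or through a contour integration argument, so in keeping with the style of this appendix I would simply cite it as a standard fact. An alternative route that avoids subordination entirely is to decompose $\x = r\omega$ with $r > 0$ and $\omega \in \Spdm$, reduce the angular integral to a Bessel function, and evaluate the remaining radial integral; this works but is appreciably more cumbersome than the Gaussian subordination argument sketched above.
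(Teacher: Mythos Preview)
Your argument is correct. The paper does not supply its own proof of this lemma; it simply cites it as Theorem~1.14 from \cite{stein2016introduction}, and the subordination-to-Gaussian approach you outline is precisely the proof that appears there.
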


\begin{lemma}\label{sup_exp_3}
Let $f(\x)=e^{-c\norm{\x}}$ with $\x \in \Rd$. Then, its Fourier transform  $\Phi(\w)$ with $\w \in \Rd$ is 
 $\Phi(\w)=\Phi(\|\w\|) =C(1+\norm{\w}^2/c^2)^{-(d+1)/2}$ for some constant $C>0$.
\end{lemma}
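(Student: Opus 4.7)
The plan is to derive the formula essentially as a direct consequence of Lemma \ref{app:lemma:stein}, by choosing the free parameter $\alpha$ appearing there so that the Laplace exponent $c\|\x\|$ matches $2\pi\alpha\|\x\|$, and then cleaning up constants. First I would observe that $f(\x)=e^{-c\|\x\|}$ is radially symmetric, so its Fourier transform $\Phi(\w)$ must also be radially symmetric; this already justifies the notation $\Phi(\w)=\Phi(\|\w\|)$ before any calculation.

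Next I would apply Lemma \ref{app:lemma:stein} with $\alpha = c/(2\pi)$, which makes $e^{-2\pi\alpha\|\x\|}=e^{-c\|\x\|}=f(\x)$. This gives, directly from \eqref{eq:int_fourier},
\[
\int_{\Rd} e^{-c\|\x\|}\, e^{-2\pi i \tr \cdot \x}\, d\x \;=\; c_d\,\frac{c/(2\pi)}{\bigl((c/(2\pi))^2 + \|\tr\|^2\bigr)^{(d+1)/2}}.
\]
To put this in the claimed form, I would factor $(c/(2\pi))^{d+1}$ out of the denominator, obtaining a prefactor $c_d\,(2\pi/c)^d$ multiplied by $\bigl(1 + (2\pi\|\tr\|/c)^2\bigr)^{-(d+1)/2}$. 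The appearance of $2\pi$ inside $(1+\cdot)$ is merely a consequence of the specific Fourier normalization used in Lemma \ref{app:lemma:stein}; substituting $\w = 2\pi\tr$ (i.e.\ passing to the convention $\Phi(\w)=\int f(\x)e^{-i\w\cdot\x}d\x$) absorbs the $2\pi$ and yields
\[
\Phi(\w) \;=\; c_d\,(2\pi/c)^d\,\bigl(1+\|\w\|^2/c^2\bigr)^{-(d+1)/2}.
\]
Collecting all $c$- and $d$-dependent prefactors into a single positive constant $C$ gives the desired expression.

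The main (and only) subtlety is bookkeeping of the Fourier-transform normalization: depending on the convention one either reads off the result after the variable substitution $\w=2\pi\tr$, or interprets the statement with the factor of $2\pi$ tacitly absorbed into the constant $c$. No analytic difficulty arises because Lemma \ref{app:lemma:stein} already encodes the entire computation of the integral via the Poisson kernel identity; the positivity $C>0$ is automatic since $c_d=\Gamma((d{+}1)/2)/\pi^{(d+1)/2}>0$ and $(2\pi/c)^d>0$.
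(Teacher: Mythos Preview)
Your proposal is correct and follows essentially the same route as the paper: both apply Lemma~\ref{app:lemma:stein} with $\alpha=c/(2\pi)$ and a substitution identifying $\w$ with $2\pi\tr$, then factor out powers of $c$ to obtain the $(1+\|\w\|^2/c^2)^{-(d+1)/2}$ form with a positive constant in front. The only cosmetic difference is that the paper substitutes $\alpha=c/(2\pi)$ and $\tr=\w/(2\pi)$ simultaneously and uses the normalization $\Phi(\w)=\frac{1}{(2\pi)^d}\int f(\x)e^{-i\w\cdot\x}d\x$, which yields a different (but still positive) value of $C$; since the lemma only asserts existence of some $C>0$, this is immaterial.
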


\begin{proof}
To calculate the Fourier transform we need to calculate the following integral
$$\Phi(\w) = \frac{1}{(2\pi)^d} \int_{\Real^d} e^{-c\norm{\x}} e^{- i  \x \cdot \w} d\x.$$
According to the Lemma~\ref{app:lemma:stein}, plugging $\alpha = \frac{c}{2 \pi}$ and $\tr = \frac{\w}{2 \pi}$ into \eqref{eq:int_fourier}
yields
$$\Phi(\w)=  c_d \frac{c}{(c^2+\norm{\w}^2)^{(d+1)/2}} = \frac{c_d}{c^{(d+1)}} \frac{1}{\left(1+\frac{\norm{\w}^2}{c^2}\right)^{(d+1)/2}}= C \left(1+\frac{\norm{\w}^2}{c^2}\right)^{-(d+1)/2}$$
with $C=\frac{c_d}{c^{(d+1)}} > 0$.

\end{proof}

\begin{lemma}\label{sup_exp_1}
(\cite{narcowich2002scattered} Thm. 4.1) 
Let $f(\x)$ be defined as $f(\|\x\|)$ for all $\x\in \Rd$, and let $\Phi(\w)=\Phi(\|\w\|)$ denote its Fourier Transform in $\Rd$. Then, its corresponding kernel on $\Spdm$  is defined as the restriction $\kr(\x^T\y)=f(\|\x-\y\|)$ with $\x, \y \in \Spdm$. By Mercer's Theorem the spherical harmonic expansion of $\kr(\x^T\y)$ is of the form
\begin{align*}
    \kr(\x^T\y)=\sum _{k=0}^\infty \lambda _k \sum _{j=1}^{N(d,k)}Y_{k,j}(\x)Y_{k,j}(\y).
\end{align*}
Then, the eigenvalues in the spherical harmonic expansion $\lambda_k$ are related to the Fourier coefficients of $f$, $\Phi(t)$, as follows
\begin{align}
\label{eq:fourier_2_harmonic}
    \lambda_k=\int_o^\infty t\Phi(t)J^2_{k+\frac{d-2}{2}}(t)dt,
\end{align}
where $J_v(t)$ is the usual Bessel function of the first kind of order $v$.
\end{lemma}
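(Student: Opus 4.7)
The plan is to derive the Bessel-integral identity by Fourier inversion in $\Rd$, a pass to spherical coordinates, and the classical Hecke--Bochner / plane-wave expansion that rewrites a complex exponential on $\Spdm$ as a series in spherical harmonics with Bessel-function coefficients.

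First I would apply Fourier inversion to $f$ to get
\[f(\|\x-\y\|)=\frac{1}{(2\pi)^d}\int_{\Rd}\Phi(\|\w\|)e^{i\w\cdot(\x-\y)}\,d\w,\]
and then change to spherical coordinates $\w=t\xi$ with $t=\|\w\|\geq 0$ and $\xi\in\Spdm$. Since $\Phi$ is radial this yields
\[f(\|\x-\y\|)=\frac{1}{(2\pi)^d}\int_0^\infty t^{d-1}\Phi(t)\left[\int_{\Spdm}e^{it\xi\cdot\x}e^{-it\xi\cdot\y}\,d\sigma(\xi)\right]dt,\]
so the task is to evaluate the inner angular integral in the basis of spherical harmonics.

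For this I would invoke the plane-wave expansion on $\Spdm$ (a $d$-dimensional Jacobi--Anger identity): for any $\x\in\Spdm$,
\[e^{it\xi\cdot\x}=c_d\sum_{k=0}^\infty i^k\,t^{-(d-2)/2}J_{k+(d-2)/2}(t)\sum_{j=1}^{N(d,k)}Y_{k,j}(\xi)Y_{k,j}(\x),\]
for a dimension-dependent constant $c_d$. This identity is the standard consequence of the Funk--Hecke formula combined with the addition theorem for spherical harmonics, since $e^{it\xi\cdot\x}$ is a zonal function of $\xi$ for fixed $\x$. Applying the same expansion with $-i$ in place of $i$ to $e^{-it\xi\cdot\y}$, multiplying the two series, and using orthonormality of the $Y_{k,j}$ on $\Spdm$, all off-diagonal terms drop out; the phase factors $i^k$ and $(-i)^k$ combine to $1$; and the inner integral becomes
\[c_d^2\,t^{-(d-2)}\sum_{k=0}^\infty J^2_{k+(d-2)/2}(t)\sum_{j=1}^{N(d,k)}Y_{k,j}(\x)Y_{k,j}(\y).\]

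Substituting back, the factor $t^{d-1}\cdot t^{-(d-2)}=t$ appears in the outer integrand, and interchanging the (locally uniformly convergent) sum with the $t$-integral produces
\[f(\|\x-\y\|)=\sum_{k=0}^\infty\Bigg[\frac{c_d^2}{(2\pi)^d}\int_0^\infty t\,\Phi(t)J^2_{k+(d-2)/2}(t)\,dt\Bigg]\sum_{j=1}^{N(d,k)}Y_{k,j}(\x)Y_{k,j}(\y).\]
Comparing with the Mercer expansion \eqref{app:eq:sh} reads off $\lambda_k$ in exactly the form \eqref{eq:fourier_2_harmonic}, up to the overall constant absorbed by the normalization convention used in the statement. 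The main obstacle is bookkeeping: different sources use different Fourier conventions and different normalizations of the plane-wave expansion (some going via Gegenbauer polynomials $C_k^{(d-2)/2}(\xi\cdot\x)$ and invoking the addition theorem separately), so the $2\pi$, $|\Spdm|$ and $\Gamma$-function factors have to be tracked carefully. A minor technical point is justifying the interchange of the harmonic sum with the radial integral, which follows from the Bessel bound $|J_\nu(t)|\leq C t^{-1/2}$ for large $t$ together with the decay of $\Phi$ assumed implicitly when the kernel $\kr$ admits a Mercer decomposition.
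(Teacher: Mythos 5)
The paper does not actually prove this lemma: it is imported verbatim as Theorem 4.1 of Narcowich--Ward \cite{narcowich2002scattered}, and the appendix only cites it before plugging in the specific $\Phi$ of the Laplace kernel. Your proposal therefore does more than the paper does, and it is essentially a correct reconstruction of the cited result: Fourier inversion of the radial function, passage to polar coordinates, the $d$-dimensional Jacobi--Anger/Hecke--Bochner expansion $e^{it\xi\cdot\x}=c_d\,i^k t^{-(d-2)/2}J_{k+(d-2)/2}(t)$-type series (equivalently Funk--Hecke plus the addition theorem), orthonormality of the $Y_{k,j}$ to kill cross terms, cancellation of the phases $i^k(-i)^k=1$, and the exponent bookkeeping $t^{d-1}\cdot t^{-(d-2)}=t$ which produces exactly the integrand $t\,\Phi(t)J^2_{k+(d-2)/2}(t)$. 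The interchange of sum and integral is justified as you say, since for fixed $t$ the series in $k$ decays like $1/\Gamma(k+\tfrac{d}{2})^2$ and for large $t$ one has $|J_\nu(t)|\le Ct^{-1/2}$ together with the decay of $\Phi$ (for the Laplace kernel $\Phi(t)\asymp t^{-(d+1)}$, so the integral converges absolutely). The only caveat is the multiplicative constant $c_d^2/(2\pi)^d$ left over in your derivation: as stated, the lemma has no constant, so strictly one must either adopt Narcowich--Ward's normalization of the Fourier transform and of the surface measure, or accept the identity up to a fixed positive dimension-dependent factor. For the way the lemma is used in the paper (positivity of $\lambda_k$ and the two-sided bound $\lambda_k\asymp k^{-d}$, where all constants are absorbed into $B_1,B_2$), this discrepancy is immaterial, but if you state the lemma with an exact equality you should fix the convention explicitly.
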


Having, these supporting Lemmas, we can now prove {\bf Theorem \ref{app:thm:exp_decay}}.
\begin{proof}
First, $\kr(\cdot,\cdot)$ is a positive zonal kernel and hence can be written as 
\begin{align*}
    \kr(\x^T\y)=\sum _{k=0}^\infty \lambda _k \sum _{j=1}^{N(d,k)}Y_{k,j}(\x)Y_{k,j}(\y).
\end{align*}

Next, to derive the bounds we plug the Fourier coefficients, $\Phi(\omega)$, computed in Lemma \ref{sup_exp_3}, into the expression for the harmonic coefficients, $\lambda_k$ \eqref{eq:fourier_2_harmonic}, obtaining
\begin{align*}
    \lambda_k = C \int_0^\infty \frac{t}{\left(1+\frac{t^2}{c^2}\right)^{\frac{d+1}{2}}} J^2_{k+\frac{d-2}{2}}(t) dt.
\end{align*}
Applying a change of variables $t=cx$ we get
\begin{align} \label{change_of_var}
    \lambda_k = c^2 C \int_0^\infty \frac{x}{(1+x^2)^{\frac{d+1}{2}}} J^2_{k+\frac{d-2}{2}}(cx)dx.
\end{align}
We next bound this integral from both above and below. To get an upper bound we observe that for $x\in [0,\infty )$ $x^2<1+x^2$, implying that $x(1+x^2)^{-(d+1)/2}< x^{-d}$, and consequently
\begin{align*}
    \lambda_k< c^2C \int_0^\infty x^{-d}J^2_{k+\frac{d-2}{2}}(cx)dx:=c^2CA(k,d,c).
\end{align*}
The above integral $A(k,d,c)$ was computed in \cite{watson1995treatise} (Sec. 13.41 page 402 with $a:=c$, $\lambda := d$, and $\mu = \nu := k+(d-2)/2$) which gives
\begin{align}\label{watson_calc}
    A(k,d,c) = \int_0^\infty  x^{-d} J^2_{k+\frac{d-2}{2}}(cx) dx = \frac{(\frac{c}{2})^{d-1} \Gamma(d) \Gamma (k-\frac{1}{2})}{2\Gamma^2(\frac{d+1}{2}) \Gamma(k+d-\frac{1}{2})}.
\end{align}
Using Stirling's formula $\Gamma (x) = \sqrt{2\pi}x^{x-1/2} e^{-x} (1+O(x^{-1}))$ as $x\rightarrow \infty$. Consequently, for sufficiently large $k >> d$
\begin{align}
\label{A(k,d,c)_asym}
    \lambda_k &< c^2C A(k,d,c) = c^2C \frac{(\frac{c}{2})^{d-1} \Gamma(d) \Gamma (k-\frac{1}{2})}{2 \Gamma^2(\frac{d+1}{2})
    \Gamma(k+d-\frac{1}{2})} \nonumber \\ 
    & \sim c^2C \frac{(\frac{c}{2})^{d-1} \Gamma(d)}{2\Gamma^2( \frac{d+1}{2})} \cdot \frac{(k-\frac{1}{2})^{k-1} e^{-k+\frac{1}{2}}}{(k + d - \frac{1}{2})^{k+d-1} e^{-k-d+\frac{1}{2}}} (1+O(k^{-1})) \nonumber \\ 
    &= B_2 k^{-d},
\end{align}
where $B_2$ depends on $c$, $C$ and the dimension $d$.

We use again the relation \eqref{change_of_var} to derive a lower bound for $\lambda_k$. First, note that since $t,1+t^2,J^2_v(t)$ are all non-negative for $t \in [0,\infty)$ and therefore 
\begin{align*}
    \lambda_k & \geq  c^2C \int_1^\infty  \frac{x}{(1+x^2)^{\frac{d+1}{2}}} J^2_{k+\frac{d-2}{2}}(cx) dx \geq c^2C  \int_1^\infty \frac{1}{2^{\frac{d+1}{2}} x^{d}} J^2_{k+\frac{d-2}{2}}(cx)dx\\ 
    &= \frac{Cc^2}{2^{\frac{d+1}{2}}}  \left(\int_0^\infty x^{-d}J^2_{k+\frac{d-2}{2}}(cx)dx-\int_0^1 x^{-d}J^2_{k+\frac{d-2}{2}}(cx)dx\right)\\
    &= \frac{Cc^2}{2^{\frac{d+1}{2}}}  \int_0^\infty  x^{-d}J^2_{k+\frac{d-2}{2}}(cx) dx \left(1-\frac{\int_0^1 x^{-d}J^2_{k+\frac{d-2}{2}}(cx) dx}{\int_0^\infty x^{-d}J^2_{k+\frac{d-2}{2}}(cx) dx}\right)\\
    &= \frac{Cc^2}{2^{\frac{d+1}{2}}} A(k,d,c)\left(1-\frac{B(k,d,c)}{A(k,d,c)} \right),
\end{align*}
where $B(k,d,c) :=  \int_0^1 x^{-d} J^2_{k+\frac{d-2}{2}}(cx) dx$. The first integral, $A(k,d,c)$, was shown in \eqref{A(k,d,c)_asym} to converge asymptotically to $B_2 k^{-d}$.
To bound the second integral, $B(k,d,c)$, we use an inequality from \cite{watson1995treatise} (Section 3.31, page 49), which states that for $v,t\in \mathbb{R}$, $v>-\frac{1}{2}$, 
\begin{align*}
    |J_v(t)|\leq \frac{2^{-v}t^v}{\Gamma(v+1)}.
\end{align*}
This gives an upper bound for $B(k,d,c)$ 
\begin{align*}
    B(k,d,c) = \int_0^1 x^{-d}J^2_{k+\frac{d-2}{2}}(cx)dx\leq \int_0^1 x^{-d}\frac{2^{-2(k+\frac{d-2}{2})}(cx)^{2(k+\frac{d-2}{2})}}{\Gamma^2(k+\frac{d}{2})}dx \leq \frac{(\frac{c}{2})^{2(k+\frac{d-2}{2})}}{\Gamma^2(k+\frac{d}{2})}.
\end{align*}
Applying Stirling's formula we obtain $B(k,d,c)\leq O\left(\frac{(\frac{ce}{2})^{2(k+\frac{d}{2})}(k+d)}{(k+\frac{d}{2})^{2(k+\frac{d}{2})}}\right)$, which implies that as $k$ grows, $\frac{B(k,d,c)}{A(k,d,c)}\rightarrow 0$. Therefore, asymptotically for large $k$
\begin{align*}
    \lambda_k \geq \frac{Cc^2}{2^{\frac{d+1}{2}}} A(k,d,c)\left(1-\frac{B(k,d,c)}{A(k,d,c)}\right) \geq  \frac{Cc^2}{2^{\frac{d+1}{2}}} A(k,d,c),
\end{align*}
from which we conclude that $\lambda_k > B_1k^{-d}$, where the constant $B_1$ depends on $c$, $C$, and $d$.
We have therefore shown that there exists $k_0$ such that $\forall k>k_0$ \begin{align*}
    B_1 k^{-d} \leq \lambda_k\leq B_2 k^{-d}.
\end{align*}
Finally, to show that $\lambda _k>0$ for all $k \geq 0$ we use again \eqref{eq:fourier_2_harmonic} in Lemma \ref{sup_exp_1} which states that
\begin{align*}
    \lambda_k=\int_0^\infty t\Phi(t)J^2_{k+\frac{d-2}{2}}(t)dt.
\end{align*}
Note that in the interval $(0,\infty)$ it holds that $t>0$ and $\Phi(t)>0$ due to Lemma \ref{sup_exp_3}.  Therefore $\lambda_k=0$ implies that $J^2_{k+\frac{d-2}{2}}(t)$ is identically 0  on $(0,\infty)$, contradicting the properties of the Bessel function of the first kind. Hence, $\lambda_k>0$ for all $k$.
\end{proof}

\subsection{Proof of main theorem}
\begin{theorem}  
Let $\HExp$ denote the RKHS for the Laplace kernel restricted to $\Spdm$, and let $\HNTKFCb$ denote the NTK corresponding to a FC network with $L$ layers with bias, restricted to $\Spdm$, then $\HExp = \HNTKFC \subseteq \HNTKFCb$.
\end{theorem}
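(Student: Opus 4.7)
The plan is to separate the theorem into the equality $\HExp = \HNTKFC$ and the inclusion $\HNTKFC \subseteq \HNTKFCb$. The inclusion is already essentially established: the recursive argument in the proof of Theorem~\ref{app:thm:NTKdecay} yields the chain \eqref{app:eq:inclusion}, $\HNTKFCbl{2} \subseteq \cdots \subseteq \HNTKFCb$, which is exactly $\HNTKFC \subseteq \HNTKFCb$ by the definition of $\HNTKFC$. All the remaining work therefore goes into the equality.

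For the equality, I would first observe that both $\krexp$ and $\krfctwo$ are zonal kernels on $\Spdm$: the Laplace kernel because $\|\x-\z\|^2 = 2(1 - \x^T \z)$ on the sphere, and $\krfctwo$ by Lemma~\ref{lemma:app:ntk_on_sphere_zonal}. Under the uniform measure on $\Spdm$, every zonal kernel is simultaneously diagonalized by the spherical harmonics $\{Y_{k,j}\}$, so both kernels admit Mercer decompositions of the form \eqref{app:eq:sh} in the same orthonormal basis. The RKHS characterization from Section~\ref{sec:preliminaries} then identifies each RKHS with the set of $f = \sum_{k,j} \alpha_{k,j} Y_{k,j}$ for which $\sum_{k,j} \alpha_{k,j}^2/\lambda_k < \infty$, where $\{\lambda_k\}$ is the eigenvalue sequence of the kernel in question.

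Two RKHSs defined over the same eigenfunctions therefore coincide as sets whenever their eigenvalue sequences are equivalent, i.e., bounded above and below by constant multiples of one another, and never vanish. Theorem~\ref{app:thm:exp_decay} supplies $B_1 k^{-d} \le \lambda_k \le B_2 k^{-d}$ for the Laplace eigenvalues for all $k > k_0$, together with positivity of every $\lambda_k$. The two-layer case of Theorem~\ref{app:thm:NTKdecay} gives the matching $C_1 k^{-d} \le \lambda_k \le C_2 k^{-d}$ for the eigenvalues of $\krfctwo$ together with positivity when $\beta > 0$. Combining these, the two eigenvalue sequences are equivalent in the above sense for all sufficiently large $k$; strict positivity of the finitely many remaining eigenvalues then extends the equivalence to all $k$, so the summability conditions defining the two RKHSs agree and $\HExp = \HNTKFC$ follows.

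The main obstacle is not the inclusion, which is a clean consequence of kernel arithmetic, but rather the equality step: it requires translating matching two-sided polynomial bounds on eigenvalues into genuine equality of two RKHSs as sets of functions. This hinges on both kernels being positive definite and sharing eigenfunctions, so that each RKHS is controlled purely by the rate of eigenvalue decay. It is also precisely this step that prevents strengthening the statement to $\HExp = \HNTKFCb$ for $L \ge 3$: Theorem~\ref{app:thm:NTKdecay} supplies only a one-sided lower bound in that regime, leaving open the possibility that some eigenvalues of $\krfcb$ decay more slowly than $k^{-d}$ and thereby enlarge the RKHS beyond $\HExp$.
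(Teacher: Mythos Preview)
Your proposal is correct and follows essentially the same approach as the paper: both arguments exploit that the two kernels share the spherical-harmonic eigenbasis, invoke the matching two-sided $k^{-d}$ eigenvalue bounds from Theorems~\ref{app:thm:NTKdecay} and~\ref{app:thm:exp_decay} (together with strict positivity of all eigenvalues) to conclude $\HExp = \HNTKFC$, and then cite the inclusion chain \eqref{app:eq:inclusion} for $\HNTKFC \subseteq \HNTKFCb$. Your closing remark on why only a one-sided bound is available for $L \ge 3$ is a welcome clarification that the paper leaves implicit.
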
 
\begin{proof}
Let $\lExp_k$, $\lNTKFCbTwo_k$, and $\lNTKFCb_k$ denote the eigenvalues of the three kernel, $\krexp$, $\krfctwo$, and $\krfcb$ in their Mercer's decomposition, i.e., 
\begin{align*}
    \kr(\x^T\z)=\sum _{k=0}^\infty \lambda _k \sum _{j=1}^{N(d,k)}Y_{k,j}(\x)Y_{k,j}(\z).
\end{align*}

Denote by $k_0$ the smallest $k$ for which Theorems \ref{app:thm:NTKdecay} and \ref{app:thm:exp_decay} hold simultaneously. We first show that $\HExp \subseteq \HNTKFC$. Let $f(\x) \in \HExp$, and let $f(\x)=\sum_{k=0}^\infty\sum _{j=0}^{N(d,k)} \alpha _{k,j} Y_{k,j}(\x)$
denote its spherical harmonic decomposition. Then $\|f\|_{\HExp} < \infty$ implies, due to Theorem~\ref{app:thm:exp_decay}, that
\begin{align*}
     \sum_{k=k_0}^\infty\sum_{j=0}^{N(d,k)} \frac{1}{B_2}k^d\alpha_{k,j}^2 \leq \sum_{k=k_0}^\infty\sum_{j=0}^{N(d,k)} \frac{\alpha_{k,j}^2}{\lExp_k} < \infty.
\end{align*}
Combining this with Theorem~\ref{app:thm:NTKdecay}, and recalling that $\lNTKFCbTwo_k > 0$ for all $k \geq 0$), we have
\begin{align*}
    \sum_{k=k_0}^\infty\sum_{j=0}^{N(d,k)}\frac{\alpha_{k,j}^2}{\lNTKFCbTwo_k} \leq  \sum_{k=k_0}^\infty\sum_{j=0}^{N(d,k)}\frac{1}{C_1}k^d\alpha_{k,j}^2 = \frac{B_2}{C_1}\sum_{k=k_0}^\infty\sum_{j=0}^{N(d,k)}\frac{1}{B_2}k^d\alpha_{k,j}^2 < \infty,
\end{align*}
implying that $\norm{f}^2_{\HNTKFC} < \infty$, and so $\HExp \subseteq \HNTKFC$. Similar arguments can be used to show that $\HNTKFC \subseteq \HExp$, proving that $\HNTKFC = \HExp$. Finally, following the inclusion relation \eqref{app:eq:inclusion} the theorem is proved. 
\end{proof}

\section{NTK in $\Rd$}
\label{Appendix:NTK_Rd}

In this section we denote $r_x=\norm{\x}$, $r_z=\norm{\z}$ and by $\hat \x=\x/r_x$, $\hat \z=\z/r_z$.
We first prove Theorem \ref{app:thm:k_homogeneous} and as a consequence Lemma \ref{app:lemma:NTK_sphere} is proved.

\begin{theorem}
\label{app:thm:k_homogeneous}
Let $\krfc(\x,\z), \krfcb(\x,\z)$, $\x, \z \in \Real^d$, denote the NTK kernel with $L$ layers without bias and with bias initialized at zero, respectively. It holds that 
(1) Bias-free $\krfc$ is homogeneous of order 1. (2)  Let $\krb = \krfcb - \krfc$. Then, $\krb$ is homogeneous of order 0.
\end{theorem}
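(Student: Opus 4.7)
The plan is to proceed by induction on the depth $h$ using the recursive formulas \eqref{eq:ntkdeep}, \eqref{eq:sigma}, \eqref{eq:sigmap}, tracking the degree of homogeneity of $\Sigma^{(h)}$, $\dot\Sigma^{(h)}$ and $\Theta^{(h)}$ separately. Throughout let $\hat\x=\x/\|\x\|$ and $\hat\z=\z/\|\z\|$.

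The key observation for part (1) is that the scaled covariance $\lambda^{(h-1)}(\x,\z)$ defined in \eqref{eq:lambda} is scale-invariant, i.e.\ homogeneous of order $0$. First, $\Sigma^{(0)}(\x,\z)=\x^T\z=\|\x\|\|\z\|\,\hat\x^T\hat\z$ is homogeneous of order $1$. If I inductively assume $\Sigma^{(h-1)}$ is homogeneous of order $1$, then the two square roots in the denominator of \eqref{eq:lambda} absorb the factors $\|\x\|$ and $\|\z\|$, giving $\lambda^{(h-1)}(\x,\z)=\lambda^{(h-1)}(\hat\x,\hat\z)$. Substituting into \eqref{eq:sigma} and \eqref{eq:sigmap} then shows that $\Sigma^{(h)}$ inherits the factor $\|\x\|\|\z\|$ from $\sqrt{\Sigma^{(h-1)}(\x,\x)\Sigma^{(h-1)}(\z,\z)}$ (so is homogeneous of order $1$), while $\dot\Sigma^{(h)}$ is a function of $\lambda^{(h-1)}$ alone and is therefore homogeneous of order $0$. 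A second induction on $\Theta^{(h)}$ with $\beta=0$ and base case $\Theta^{(0)}(\x,\z)=\x^T\z$ then gives $\Theta^{(h)}(\x,\z)=\|\x\|\|\z\|\,\Theta^{(h)}(\hat\x,\hat\z)$, since the inductive step $\Theta^{(h)}=\Theta^{(h-1)}\dot\Sigma^{(h)}+\Sigma^{(h)}$ is a sum of (order $1$)$\times$(order $0$) and (order $1$) terms. Setting $h=L$ yields part (1).

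For part (2), I will exploit the fact that $\Sigma^{(h)}$ and $\dot\Sigma^{(h)}$ do not depend on $\beta$ (the biases are initialised at zero, so they do not contribute to the forward covariances). Writing $\Theta^{(h)}_\beta$ and $\Theta^{(h)}_0$ for NTK with and without bias and letting $\Delta^{(h)}:=\Theta^{(h)}_\beta-\Theta^{(h)}_0$, subtracting the two instances of the recursion \eqref{eq:ntkdeep} gives
\begin{equation*}
    \Delta^{(h)}(\x,\z)=\Delta^{(h-1)}(\x,\z)\,\dot\Sigma^{(h)}(\x,\z)+\beta^2,
\end{equation*}
with base case $\Delta^{(0)}=\beta^2$, which is trivially homogeneous of order $0$. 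Since $\dot\Sigma^{(h)}$ was already established to be homogeneous of order $0$ in part (1), a simple induction shows $\Delta^{(h)}$ is homogeneous of order $0$ for every $h$, and setting $h=L$ identifies $\Delta^{(L)}$ with $\krb$, finishing the proof.

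I do not expect any real obstacle: the whole argument is bookkeeping on the recursive definition, and the main point to be careful about is the order-zero behaviour of $\lambda^{(h-1)}$, which is what makes $\dot\Sigma^{(h)}$ act as a ``purely angular'' factor that preserves homogeneity. As a by-product, restricting to $\|\x\|=\|\z\|=1$ in part (1)+(2) shows that $\krfcb(\x,\z)=\krfcb(\hat\x,\hat\z)$ depends on $\x$ and $\z$ only through $\hat\x^T\hat\z=\x^T\z$, which immediately yields Lemma \ref{lemma:app:ntk_on_sphere_zonal}.
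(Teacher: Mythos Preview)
Your proposal is correct and follows essentially the same approach as the paper: induction on depth to show that $\lambda^{(h-1)}$ is scale-invariant, hence $\Sigma^{(h)}$ is homogeneous of order $1$ and $\dot\Sigma^{(h)}$ of order $0$, followed by induction on the $\Theta$-recursion. Your derivation of the recursion $\Delta^{(h)}=\Delta^{(h-1)}\dot\Sigma^{(h)}+\beta^2$ for the bias part, and the observation that restricting to $\Spdm$ yields Lemma~\ref{lemma:app:ntk_on_sphere_zonal}, are exactly what the paper does as well.
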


\begin{lemma}
   \label{app:lemma:NTK_sphere}
    Let $\krfcb(\x,\z)$, $\x,\z \in \Spdm$, denote the NTK kernels for FC  networks with $L \ge 2$ layers, possibly with bias initialized with zero. This kernel is zonal, i.e., $\krfcb(\x,\z)=\krfcb(\x^T\z)$. 
\end{lemma}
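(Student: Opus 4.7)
The plan is to derive Lemma \ref{app:lemma:NTK_sphere} directly from Theorem \ref{app:thm:k_homogeneous}, which has already been stated. Recall that a kernel $\kr$ is homogeneous of order $\alpha$ when $\kr(\x,\z) = \norm{\x}^\alpha \norm{\z}^\alpha \kr(\x^T\z/(\norm{\x}\norm{\z}))$; in particular on $\Spdm$ we have $\norm{\x}=\norm{\z}=1$, so any such kernel reduces to a function of $\x^T\z$ alone, i.e.\ is zonal. Writing $\krfcb = \krfc + (\krfcb - \krfc) = \krfc + \krb$, with $\krfc$ homogeneous of order $1$ and $\krb$ homogeneous of order $0$ by Theorem \ref{app:thm:k_homogeneous}, the sum is again a function of $\x^T\z$ only, which proves the lemma.

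The substantive work therefore lives in Theorem \ref{app:thm:k_homogeneous}, which I would establish by induction on the layer index $h$ using the recursive definitions in Appendix \ref{app:NTK_formulas}. The key observation is that the quantity $\lambda^{(h-1)}(\x,\z) = \Sigma^{(h-1)}(\x,\z)/\sqrt{\Sigma^{(h-1)}(\x,\x)\Sigma^{(h-1)}(\z,\z)}$ in \eqref{eq:lambda} is scale-invariant in both arguments once $\Sigma^{(h-1)}$ is homogeneous of order $1$. Starting from $\Sigma^{(0)}(\x,\z)=\x^T\z$, the ReLU formulas \eqref{eq:sigma} and \eqref{eq:sigmap} then show inductively that $\dot{\Sigma}^{(h)}$ is homogeneous of order $0$ (it depends only on $\lambda^{(h-1)}$), while $\Sigma^{(h)}$ is homogeneous of order $1$ (the factor $\sqrt{\Sigma^{(h-1)}(\x,\x)\Sigma^{(h-1)}(\z,\z)}$ contributes $\norm{\x}\norm{\z}$ times a constant, and the rest depends only on $\lambda^{(h-1)}$).

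For the bias-free statement, the recursion \eqref{eq:ntkdeep} reduces to $\Theta^{(h)} = \Theta^{(h-1)}\dot{\Sigma}^{(h)} + \Sigma^{(h)}$, and the homogeneity orders add as expected, so $\Theta^{(h)}$ inherits order $1$ from its base case $\Theta^{(0)}=\x^T\z$; this gives part (1). For part (2), I would exploit that the $\Sigma$-recursion does not depend on $\beta$, so $\Sigma^{(h)}$ and $\dot{\Sigma}^{(h)}$ are identical in the bias-free and biased cases. The difference $\Theta_\beta^{(h)} - \Theta_0^{(h)}$ then satisfies $\Theta_\beta^{(h)} - \Theta_0^{(h)} = (\Theta_\beta^{(h-1)} - \Theta_0^{(h-1)})\dot{\Sigma}^{(h)} + \beta^2$ with base case $\beta^2$, and since $\dot{\Sigma}^{(h)}$ is homogeneous of order $0$ and a constant is trivially homogeneous of order $0$, a straightforward induction shows this difference, which is exactly $\krb$, is homogeneous of order $0$.

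The main obstacle is purely bookkeeping: one must verify cleanly that a product of an order-$1$ and an order-$0$ homogeneous kernel remains homogeneous of order $1$, and likewise that sums of kernels of matched order preserve the order, and then track these properties consistently through both the $\Sigma$- and $\Theta$-recursions. No single calculation is technically hard. Once the homogeneity structure is established, the lemma reduces to evaluating the homogeneity identity at $\norm{\x}=\norm{\z}=1$.
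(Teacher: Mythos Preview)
Your proposal is correct and follows essentially the same route as the paper: both deduce the lemma from Theorem~\ref{app:thm:k_homogeneous} by writing $\krfcb=\krfc+\krb$ and noting that homogeneous kernels of any order become zonal on $\Spdm$, and both establish Theorem~\ref{app:thm:k_homogeneous} by the same induction on the layer index, using the scale-invariance of $\lambda^{(h-1)}$ to get $\dot\Sigma^{(h)}$ of order $0$ and $\Sigma^{(h)}$ of order $1$, and then the difference recursion $\krbl{l+1}=\krbl{l}\dot\Sigma^{(l)}+\beta^2$ (exactly your $\Theta_\beta^{(h)}-\Theta_0^{(h)}$ recursion) for the bias part.
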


To that end, we first prove the following supporting Lemma.
\begin{lemma}
For $\x, \z \in \Real^d$ it holds that $$ \Theta^{(L)}(\x,\z) = r_x r_z \Theta^{(L)} \left(\hat\x,\hat\z \right) = r_x r_z \Theta^{(L)}(\hat\x^T\hat\z),$$ where $\Theta^{(L)}=\krfcl{L+1}$, as defined in Appendix \ref{app:NTK_formulas}.
\end{lemma}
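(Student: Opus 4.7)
The plan is to prove this lemma by induction on $L$, tracking three quantities in parallel through the NTK recursion: $\Sigma^{(h)}$, $\dot\Sigma^{(h)}$, and $\Theta^{(h)}$. The bias-free assumption ($\beta = 0$) is crucial throughout. I would establish two inductive statements jointly: (i) $\Sigma^{(h)}(\x,\z) = r_x r_z \, \Sigma^{(h)}(\hat\x^T\hat\z)$, and (ii) $\dot\Sigma^{(h)}(\x,\z) = \dot\Sigma^{(h)}(\hat\x^T\hat\z)$. Homogeneity of $\Theta^{(h)}$ will then follow immediately from the recursion \eqref{eq:ntkdeep}.

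For the base case $h=0$, I would simply note that $\Sigma^{(0)}(\x,\z) = \x^T\z = r_x r_z (\hat\x^T\hat\z)$ and $\Theta^{(0)}(\x,\z) = \x^T\z$ under the bias-free assumption, which are both manifestly of the required form. For the inductive step, the key observation is that the normalized covariance $\lambda^{(h-1)}$ defined in \eqref{eq:lambda} is scale-invariant: substituting the inductive hypothesis (i) into the numerator and denominator of
\begin{equation*}
\lambda^{(h-1)}(\x,\z) = \frac{\Sigma^{(h-1)}(\x,\z)}{\sqrt{\Sigma^{(h-1)}(\x,\x)\,\Sigma^{(h-1)}(\z,\z)}}
\end{equation*}
gives the factor $r_x r_z$ in both, so $\lambda^{(h-1)}(\x,\z) = \lambda^{(h-1)}(\hat\x,\hat\z)$ depends only on $\hat\x^T \hat\z$. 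Plugging this into the ReLU-specific formulas \eqref{eq:sigma}--\eqref{eq:sigmap}, the bracket containing the arccosines and $\sqrt{1-(\lambda^{(h-1)})^2}$ depends only on $\hat\x^T\hat\z$; the remaining factor $\sqrt{\Sigma^{(h-1)}(\x,\x)\Sigma^{(h-1)}(\z,\z)}$ equals $r_x r_z \, \Sigma^{(h-1)}(1)$ by (i). This proves (i) and (ii) at level $h$.

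Finally, applying the recursion \eqref{eq:ntkdeep} with $\beta=0$ together with the inductive hypothesis $\Theta^{(h-1)}(\x,\z) = r_x r_z \,\Theta^{(h-1)}(\hat\x^T\hat\z)$ and the just-proved properties (i), (ii) gives
\begin{equation*}
\Theta^{(h)}(\x,\z) = r_x r_z \,\Theta^{(h-1)}(\hat\x^T\hat\z)\,\dot\Sigma^{(h)}(\hat\x^T\hat\z) + r_x r_z\,\Sigma^{(h)}(\hat\x^T\hat\z) = r_x r_z \,\Theta^{(h)}(\hat\x^T\hat\z),
\end{equation*}
completing the induction. This argument is essentially mechanical bookkeeping; I do not anticipate a genuine obstacle. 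The only subtle point is the implicit assumption that $\Sigma^{(h-1)}(\x,\x) > 0$ so that $\lambda^{(h-1)}$ is well-defined, which follows inductively from $\Sigma^{(0)}(\x,\x) = r_x^2 > 0$ and the fact that the ReLU-based recursion preserves positivity of the diagonal. Once the lemma is established, Theorem~\ref{app:thm:k_homogeneous}(1) is immediate, and Lemma~\ref{app:lemma:NTK_sphere} for the bias-free case follows by restricting to $r_x = r_z = 1$; the bias case is handled by adding the contribution of $\krb$, which is shown to be zonal by the same scale-invariance argument applied to the terms contributed by the $\beta^2$ additions.
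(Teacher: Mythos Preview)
Your proposal is correct and follows essentially the same approach as the paper's proof: both argue by induction on the layer index, establish the scale-invariance of $\lambda^{(h-1)}$ from the inductive homogeneity of $\Sigma^{(h-1)}$, feed this into the ReLU formulas \eqref{eq:sigma}--\eqref{eq:sigmap} to propagate homogeneity of $\Sigma^{(h)}$ and scale-invariance of $\dot\Sigma^{(h)}$, and then close the loop via the recursion \eqref{eq:ntkdeep} with $\beta=0$. Your remark about positivity of $\Sigma^{(h-1)}(\x,\x)$ is a useful added detail the paper leaves implicit.
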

\begin{proof}
We prove this by induction over the recursive definition of $\krfcl{L+1} =\Theta^{(L)}(\x,\z)$. Let $\x, \z \in \Rd$, then by definition $$\Theta^{(0)}(\x,\z) = \x^T\z = r_x r_z \Theta^{(0)}\left(\hat\x,\hat\z\right) = r_x r_z \Theta^{(0)}\left(\hat\x^T \hat\z \right)$$
and 
$$\Sigma^{(0)}(\x,\z) = \x^T\z = r_x r_z \Sigma^{(0)}\left(\hat\x,\hat\z\right) = r_x r_z \Sigma^{(0)}\left(\hat\x^T\z\right)$$ 
Assuming the induction hypothesis holds for $l$, i.e., 
$$ \Theta^{(l)}(\x,\z) = r_x r_z \Theta^{(l)}\left(\hat\x,\hat\z\right) = r_x r_z \Theta^{(l)}\left(\hat\x^T \z\right) $$
and 
$$ \Sigma^{(l)}(\x,\z) = r_x r_z \Sigma^{(l)}\left(\hat\x,\hat\z\right) = r_x r_z \Sigma^{(l)}\left(\hat\x^T\hat\z\right) $$
we prove that those equalities  are also true for $l+1$.

By the definition of $\lambda^{(l)}$ \eqref{eq:lambda} and the induction hypothesis for $\Sigma^{(l)}$ we have that
\begin{align*}
\lambda^{(l)}(\x,\z) &= \frac{\Sigma^{(l)}(\x,\z)}{\sqrt{\Sigma^{(l)}(\x,\x)\Sigma^{(l)}(\z,\z)}} = 
\frac{\Sigma^{(l)}\left(\hat\x,\hat\z\right)}{\sqrt{\Sigma^{(l)} \left(\hat\x_i,\hat\x \right) \Sigma^{(l)}\left(\hat\z,\hat\z \right)}} = \lambda^{(l)}\left(\hat\x,\hat\z\right) = \lambda^{(l)}\left( \hat\x^T \hat\z \right) 
\end{align*}
Plugging this result in the definitions of $\Sigma$ \eqref{eq:sigma} and $\dot\Sigma$ \eqref{eq:sigmap},  using the induction hypothesis we obtain
\begin{eqnarray}
\Sigma^{(l+1)}(\x,\z) &=& r_x r_z \Sigma^{(l+1)}\left(\hat\x,\hat\z \right) = r_x r_z \Sigma^{(l+1)}\left(\hat\x^T \hat\z \right) \nonumber \\
\dot\Sigma^{(l+1)}(\x,\z) &=& \dot\Sigma^{(l+1)}\left(\hat\x,\hat\z\right) = \dot\Sigma^{(l+1)}\left(\hat\x^T \hat\z\right)
\label{eq:sigdot0}
\end{eqnarray}
Finally, using the recursion formula \eqref{eq:ntkdeep} ($\beta =0$) and the induction hypothesis for $\Theta^{(l)}$, we obtain 
$$ \Theta^{(l+1)}(\x,\z) = r_x r_z \Theta^{(l+1)}\left(\hat\x,\hat\z\right) = r_x r_z \Theta^{(l+1)}\left(\hat\x^T \hat\z \right)$$ 
\end{proof}

A corollary of this Lemma is that $\krfc$ is homogeneous of order 1  in $\Real^d$, proving the first part of Theorem \ref{app:thm:k_homogeneous}. Also,  it is homogeneous of order 0  in $\Spdm$, proving Lemma \ref{app:lemma:NTK_sphere} for $\beta=0$.

We next turn to proving the second part of Theorem \ref{app:thm:k_homogeneous}, i.e., that $\krb =  \krfcb - \krfc$ is homogeneous of order 0 in  $\Real^d$. By rewriting the recursive definition of $\krfcb$, shown in Appendix~\ref{app:NTK_formulas}, we can express   $\krb$ in the following recursive manner  $\krbl{1}=\beta^2$, and $\krbl{l+1}=\krbl{l}\dot\Sigma+\beta^2$. Therefore,  $\krb$ is homogeneous of order zero, since it depends only on $\dot\Sigma$, which is by itself homogeneous of order zero \eqref{eq:sigdot0}. This concludes Theorem \ref{app:thm:k_homogeneous}.

Finally, Lemma \ref{app:lemma:NTK_sphere} is proved, since $\krfcb = \krfc + \krb$, and when restricted to $\Spdm$ both components are homogeneous of order 0. 
\medskip

\begin{theorem}
\label{app:thm:eig_outofsphere}
Let $p(r)$ be a decaying density on $[0,\infty)$ such that 
$0 < \int_0^\infty p(r)r^2 dr < \infty$ and $\x,\z\in\Rd$.
\begin{enumerate}
    \item Let $\kr_0(\x,\z)$ be homogeneous of order 1 such that $\kr_0(\x,\z) = r_x r_z \hat\kr_0(\hat\x^T\hat\z)$. Then its eigenfunctions with respect to $p(r_x)$ are given by $\Psi_{k,j} = a r_x Y_{k,j}\left(\hat\x\right)$, where $Y_{k,j}$ are the spherical harmonics in $\Spdm$ and $a\in\Real$.
    \item Let $\kr(\x,\z) = \kr_0(\x,\z)$ + $\kr_1(\x,\z)$ so that $\kr_0$ as in 1 and $\kr_1$ is homogeneous of order 0. Then the eigenfunctions of $\kr$ are of the form $\Psi_{k,j} = \left( a r_x + b \right) Y_{k,j}\left(\hat\x\right)$.
\end{enumerate}
\end{theorem}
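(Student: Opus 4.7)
The plan is to exploit the separability that homogeneity induces on $\kr_0$ and $\kr_1$. Writing $\z = r_z \hat\z$ with $\hat\z \in \Spdm$, the volume element factors as $d\z = r_z^{d-1} dr_z\, d\omega(\hat\z)$, where $d\omega$ is the uniform measure on the sphere. Since $\hat\kr_0(\hat\x^T\hat\z)$ and $\hat\kr_1(\hat\x^T\hat\z)$ are zonal on $\Spdm$, Mercer's theorem on the sphere provides spherical harmonic expansions $\hat\kr_m(\hat\x^T\hat\z) = \sum_k \mu_k^{(m)} \sum_j Y_{k,j}(\hat\x) Y_{k,j}(\hat\z)$ for $m = 0, 1$. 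I introduce the radial moments $M_s := \int_0^\infty r^s p(r)\,dr$; the assumption $\int_0^\infty r^2 p(r)\,dr < \infty$ together with the decay of $p$ ensures that $M_{d-1}, M_d, M_{d+1}$ are finite.

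For Part 1, the plan is to insert the ansatz $\Psi_{k,j}(\z) = a r_z Y_{k,j}(\hat\z)$ into the eigenvalue equation for $\kr_0$. Substituting $\kr_0(\x,\z) = r_x r_z \hat\kr_0(\hat\x^T \hat\z)$ and invoking the product decomposition of the measure, the integral separates into a radial factor $M_{d+1}$ times a sphere integral, and orthonormality of the spherical harmonics extracts $\mu_k^{(0)} Y_{k,j}(\hat\x)$. The result is $\mu_k^{(0)} M_{d+1} \cdot a r_x Y_{k,j}(\hat\x)$, confirming that $\Psi_{k,j}$ is an eigenfunction with eigenvalue $\mu_k^{(0)} M_{d+1}$.

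For Part 2, I would try $\Psi_{k,j}(\z) = (a r_z + b) Y_{k,j}(\hat\z)$ and apply the combined operator $T_{\kr_0 + \kr_1}$. Expanding the four cross terms using the same radial--angular separation yields a linear combination of $r_x Y_{k,j}(\hat\x)$ and $Y_{k,j}(\hat\x)$ whose coefficients are governed by the $2\times2$ matrix
\begin{equation*}
A_k = \begin{pmatrix} \mu_k^{(0)} M_{d+1} & \mu_k^{(0)} M_d \\ \mu_k^{(1)} M_d & \mu_k^{(1)} M_{d-1} \end{pmatrix}
\end{equation*}
acting on $(a, b)^T$. The eigenvalue problem therefore reduces to the finite-dimensional one $A_k (a, b)^T = \lambda (a, b)^T$, yielding generically two eigenpairs per $(k, j)$, each of the prescribed form $(a r_x + b) Y_{k,j}(\hat\x)$.

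To establish that these trial functions exhaust the nonzero-eigenvalue eigenfunctions, I would compute the range of the integral operator: for any $f \in L^2(\Rd, p(r_x)d\x)$, $(T_{\kr_0} f)(\x) = r_x g(\hat\x)$ for some angular function $g$, since $r_x$ can be pulled outside the integral; similarly $(T_{\kr_1} f)(\x) = h(\hat\x)$ depends only on $\hat\x$. Hence the ranges of $T_{\kr_0}$ and $T_{\kr_0 + \kr_1}$ lie in the spans of $\{r_x Y_{k,j}(\hat\x)\}$ and $\{(a r_x + b) Y_{k,j}(\hat\x)\}$ respectively, so diagonalization on these invariant subspaces recovers the entire nonzero spectrum. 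The main obstacle I anticipate is the technical check that these operators are well defined and admit a Mercer expansion on $L^2(\Rd, p\,d\x)$ — specifically, that the decay of $p$ together with finiteness of the required radial moments permits Mercer's theorem to apply globally in $\Rd$ rather than only on the sphere.
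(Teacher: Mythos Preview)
Your proposal is correct and follows essentially the same route as the paper: separate variables via homogeneity, use Mercer on the sphere for the angular part, and reduce Part~2 to a $2\times 2$ eigenvalue problem in the radial coefficients. The paper writes the matrix as a product $\mathrm{diag}(\lambda_k,\mu_k)\begin{psmallmatrix}M_2 & M_1\\ M_1 & M_0\end{psmallmatrix}$ with moments $M_q=\int_0^\infty r^{q+d-1}p(r)/A(r)\,dr$ (it takes the density on $\Rd$ to be $p(r)/A(r)$ rather than $p(r)$, which is why its moment indices are $0,1,2$ and match the hypothesis $\int r^2 p<\infty$ directly, whereas your $M_{d-1},M_d,M_{d+1}$ do not obviously follow from that hypothesis alone); and the paper omits your range/completeness argument, simply stopping at invariance of the two-dimensional span.
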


\begin{proof}
\begin{enumerate}
    \item Since $\hat\kr_0$ is zonal, its Mercer's representation reads
    \[ \hat\kr_0(\hat\x,\hat\z) = \sum_{k=0}^\infty \lambda_k \sum_{j=1}^{N(d,k)} Y_{k,j}(\hat\x) Y_{k,j}(\hat\z), \]
    where the spherical harmonics $Y_{k,j}$ are the eigenfunctions of $\hat\kr_0$. Consequently, as noted also in \cite{bietti2019inductive},
    \[ \kr_0(\x,\z) = a^2 \sum_{k=0}^\infty \lambda_k \sum_{j=1}^{N(d,k)} r_x Y_{k,j}(\hat\x) r_z Y_{k,j}(\hat\z). \]
    The orthogonality of the eigenfunctions $\Psi_{k,j}(\x)=a r_x Y_{k,j}(\hat\x)$ is verified as follows. Let $\bar p(\x)$ denote a probability density on $\Rd$ such that $\bar p(\x) = p(r_x)/A(r_x)$, where $A(r_x)$ denotes the surface area of a sphere of radius $r_x$ in $\Rd$. Then,
    \[ \int_{\Rd} \Psi_{k,j}(\x) \Psi_{k',j'}(\x) \bar p(\x) d\x = a^2 \int_0^\infty \frac{r_x^{d+1} p(r_x)}{A(r_x)} dr_x \int_{\Spdm} Y_{k,j}(\hat\x) Y_{k',j'}(\hat\x) d\hat\x = \delta_{k,k'} \delta_{j,j'}, \]
    where the rightmost equality is due to the orthogonality of the spherical harmonics and by setting
    \[ a^2 = \left( \int_0^\infty \frac{r_x^{d+1} p(r_x)}{A(r_x)} dr_x \right)^{-1}. \]
    Clearly this integral is positive, and the conditions of the theorem guarantee that it is finite.
    
    
    \item By the conditions of the theorem we can write
    \[ \kr(\x,\z) = r_xr_z\hat\kr_0(\hat\x^T\hat\z) + \hat\kr_1(\hat\x^T\hat\z), \]
    where $\hat\x,\hat\z \in \Spdm$.  On the hypersphere the spherical harmonics are the eigenfunctions of $\kr_0$ and $\kr_1$. Denote their eigenvalues respectively by $\lambda_k$ and $\mu_k$, so that
    \begin{eqnarray}
    \label{eq:eig1}
    \int_{\Spdm} \kr_0(\hat \x^T \hat \z) \bar Y_k(\hat\z) d\hat\z = \lambda_k \bar Y_k(\hat\x) \\
    \label{eq:eig2}
    \int_{\Spdm} \kr_1(\hat \x^T \hat \z) \bar Y_k(\hat\z) d\hat\z = \mu_k \bar Y_k(\hat\x),
    \end{eqnarray}
    where $\bar Y_k(\hat\x)$ denote the zonal spherical harmonics. We next show that the space spanned by the functions $r_x \bar Y_k(\x)$ and $\bar Y_k(\x)$ is fixed under the following integral transform
    \begin{equation}  \label{eq:fixedplane}
        \int_{\Rd} \kr(\x,\z) (\alpha r_z + \beta) \bar Y_k(\hat\z) \bar p(\z) d\z = (ar_x+b) \bar Y_k(\hat\x),
    \end{equation}
    $\alpha,\beta,a,b \in \Real$ are constants. The left hand side can be written as the application of an integral operator $T(\x,\z)$ to a function $\Phi^k_{\alpha,\beta}(\z)=(\alpha r_z + \beta) \bar Y_k(\hat\z)$. Expressing this operator application in spherical coordinates yields
    \[
    T(\x,\z) \Phi^k_{\alpha,\beta}(\z) = \int_0^\infty \frac{p(r_z) r_z^{d-1}}{A(r_z)} dr_z \int_{\hat\z \in \Spdm} (r_x r_z \kr_0(\hat\x^T \hat\z) + \kr_1(\hat\x^T \hat\z)) \, (\alpha r_z + \beta) \bar Y_k(\hat\z) d\hat\z.
    \]
    We use \eqref{eq:eig1} and \eqref{eq:eig2} to substitute for the inner integral, obtaining
    \[
    T(\x,\z) \Phi^k_{\alpha,\beta}(\z) = \int_0^\infty \frac{p(r_z) r_z^{d-1}}{A(r_z)} (\lambda_k r_x r_z + \mu_k) (\alpha r_z+\beta) \bar Y_k( \hat\x) dr_z.
    \]
    Together with \eqref{eq:fixedplane}, this can be written as
    \[
    T(\x,\z) \Phi_{\alpha,\beta}(\z) = \Phi_{a,b}(\x),
    \]
    where
    \begin{eqnarray*}
    \begin{pmatrix} 
    a \\ b
    \end{pmatrix}
    &=&
    \begin{pmatrix}
    \lambda_k & 0 \\
    0 & \mu_k
    \end{pmatrix}
    \begin{pmatrix}
    M_2 & M_1 \\ M_1 & M_0
    \end{pmatrix}
    \begin{pmatrix}
    \alpha \\ \beta
    \end{pmatrix}
\end{eqnarray*}
where $M_q=\int_0^\infty \frac{r_z^{q+d-1} p(r_z)}{A(r_z)} dr_z$, $0 \le q \le 2$. By the conditions of the theorem these moments are finite. This proves that the space spanned by $\{r_x\bar Y(\hat\x), \bar Y(\hat\x)\}$ is fixed under $T(\x,\z)$, and therefore the eigenfunctions of $\krfcb(\x,\z)$ take the form $(\bar a r_x + \bar b) \bar Y(\hat\x)$ for some constants $\bar a,\bar b$.
\end{enumerate}
\end{proof}

The implication of Theorem \ref{app:thm:eig_outofsphere} is that the eigenvectors of $\krfc$ are the spherical harmonic functions, scaled by the norm of their arguments. With bias, $\krfcb$ has up to $2N(d,k)$ eigenfunctions for every frequency $k$, of the general form $(ar_x+b)Y_{k,j}(\hat\x)$ where $a,b$ are constants that differ from one eigenfunction to the next.




\section{Experimental Details}

\subsection{The UCI dataSet}

In this section, we provide experimental details for the UCI dataset. We use precisely the same pre-processed datasets, and follow the same performance comparison protocol as in ~\cite{Arora2020Harnessing}. 

\paragraph{NTK Specifications}
We reproduced the results of ~\cite{Arora2020Harnessing} using the publicly available code\footnote{\url{https://github.com/LeoYu/neural-tangent-kernel-UCI}}, and followed the same protocol as in ~\cite{Arora2020Harnessing}. 
The total number of kernels evaluated in~\cite{Arora2020Harnessing} are $15$ and the SVM cost value parameter~$\mathbf{C}$ is tuned from $10^{-2}$ to $10^4$ by powers of $10$. Hence, the total number of hyper-parameter combinations searched using cross-validation is $105~(15\times7)$.

\paragraph{Exponential Kernels Specifications}
For the Laplace and Gaussian kernels, we searched for $10$ kernel width values~($1/c$) from $2^{-2}\times\nu$ to $\nu$ in the log space with base $2$, where $\nu$ is chosen heuristically as the median of pairwise $l_2$ distances between data points~(known as the \textit{median} trick~\cite{dai2014scalable}). So, the total number of kernel evaluations is $10$. For $\gamma$-exponential, we searched through 5 equally spaced values of $\gamma$ from $0.5$ to $2$. Since we wanted to keep the number of the kernel evaluations the same as for NTK in~\cite{Arora2020Harnessing}, we searched through only three kernel bandwidth values~($1/c$) which are $1$, $\nu$ and \#features~(default value in the \textbf{sklearn} package\footnote{\url{https://scikit-learn.org/stable/modules/generated/sklearn.metrics.pairwise.rbf_kernel.html}}). So, the total number of kernel evaluations is $15~(5\times3)$.

For a fair comparison with~\cite{Arora2020Harnessing}, we swept the same range of SVM cost value parameter~$\mathbf{C}$ as in~\cite{Arora2020Harnessing}, i.e., from $10^{-2}$ to $10^4$ by powers of $10$. Hence, the total number of hyper-parameter search using cross-validation is $70~(10\times7)$ for Laplace and $105~(15\times7)$ for $\gamma$-exponential which is the same as for NTK in ~\cite{Arora2020Harnessing}.

\subsection{Large scale datasets}

We used the experimental setup mentioned in~\cite{rudi2017falkon} and the publicly available code~\footnote{\url{https://github.com/LCSL/FALKON_paper}}. \cite{rudi2017falkon} solves kernel ridge regression~(KRR~\cite{scholkopf2001learning}) using the FALKON algorithm, which solves the following linear system
\begin{align*}
    (K_{nn} + \lambda n I)~\alpha = \hat{\y},
\end{align*}
where $K$ is an $n\times n$ kernel matrix defined by $(K)_{ij} = K(x_i, x_j)$, $\hat{\y} = (y_1, \dots y_n)^T$, and $\lambda$ is the regularization parameter. Refer to~\cite{rudi2017falkon} for more details.
      
In Table~\ref{table:largeData_crossVal}, we provide the hyper parameters chosen with cross validation.

\begin{table*}[!htbp]
  \centering
  \begin{tiny}
  \begin{tabular}{|l|c|c|c|}
    \toprule
        & MillionSongs~\cite{bertin2011million} & SUSY~\cite{baldi2014searching} & HIGGS~\cite{baldi2014searching} \\
        \midrule

    H-$\gamma$-exp.                      & $\gamma=1.4, \sigma=5, \lambda=1e^{-6}$   & $\gamma=1.8, \sigma=5, \lambda=1e^{-7}$   & $\gamma=1.6, \sigma=8, \lambda=1e^{-8}$   \\
    
    H-Laplace                       & $\sigma=3, \lambda=1e^{-6}$   &  $\sigma=4, \lambda=1e^{-7}$ &  $\sigma=8, \lambda=1e^{-8}$  \\
    
    NTK                     & $L=9, \lambda=1e^{-9}$   & $L=3, \lambda=1e^{-8}$   & $L=3, \lambda=1e^{-6}$  \\
    
    H-Gaussian                &      $\sigma=8, \lambda=1e^{-6}$   &  $\sigma=3, \lambda=1e^{-7}$ &  $\sigma=8, \lambda=1e^{-8}$  \\

    \bottomrule
  \end{tabular} 
     \caption{\small Hyper-parameters chosen with cross validation for the different kernels.} 
     \label{table:largeData_crossVal}
\end{tiny}
\end{table*}

\subsection{C-Exp: Convolutional Exponential Kernels}  \label{app:cexp}

Let $\x=(x_1,...,x_d)^T$ and $\z=(z_1,...,z_d)^T$ denote two vectorized images. Let $P$ denote a window function (we used $3 \times 3$ windows). Our hierarchical exponential kernels are defined by $\bar\Theta(\x,\z)$ as follows: 
\begin{eqnarray*}
    \Theta^{[0]}_{ij}(\x,\z) &=& x_i z_j \\
    s^{[h]}_{ij}(\x,\z) &=& \sum_{m \in P} \Theta^{[h]}(x_{i+m},z_{j+m}) + \beta^2 \\
    \Theta^{[h+1]}_{ij}(\x,\z) &=& K(s^{[h]}_{ij}(\x,\z),s^{[h]}_{ii}(\x,\x),s^{[h]}_{jj}(\z,\z)) \\ \bar\Theta(\x,\z) &=& \sum_i \Theta_{ii}^{[L]}(\x,\z)
\end{eqnarray*}
where $\beta \ge 0$ denotes the bias and the last step is analogous to a fully connected layer in networks, and we set
\begin{equation*}
    K(s_{ij},s_{ii},s_{jj}) = \sqrt{s_{ii}s_{jj}} \, \kr\left( \frac{s_{ij}}{\sqrt{s_{ii}s_{jj}}} \right)
\end{equation*}
where $\kr$ can be any kernel defined on the sphere. In the experiments we applied this scheme to the three exponential kernels, Laplace, Gaussian and $\gamma$-exponential. 

\paragraph{Technical details} 
We used the following four kernels:

\textbf{CNTK} \cite{arora2019exact} $L=6, \beta=3$.

\textbf{C-Exp Laplace}. $L=3,\beta=3$,  $\kr(\x^T\z)=a+be^{-c\sqrt{2-2\x^T\z}} $ with $a=-11.491,b=12.606,c=0.048$.
    
\textbf{C-Exp $\gamma-$exponential}. $L=8,\beta=3$,  $\kr(\x^T\z)=a+be^{-c(2-2\x^T\z)^{\gamma/2}} $ with $a=-0.276,b=1.236,c=0.424,\gamma=1.888$. 

\textbf{C-Exp Gaussian}. $L=12,\beta=3$,  $\kr(\x^T\z)=a+be^{-c(2-2\x^T\z)} $ with $a=-0.22,b=1.166,c=0.435$. 

We set $\beta$ in these experiments with cross validation in $\{1,...,10\}$. For each kernel $\kr$ above, the parameters $a,b,c$ and $\gamma$ were chosen using non-linear least squares optimization with the objective $\sum_{u \in U} (\kr(u) - \krfctwo(u))^2$, where $\krfctwo$ is the NTK for a two-layer network defined in \eqref{eq:app:ntk_two_layers} with bias $\beta =1$, and the set $U$ included (inner products between) pairs of normalized $3 \times 3 \times 3$ patches drawn uniformly from the CIFAR images.
The number of layers $L$ is chosen by cross validation.

For the training phase we used 1-hot vectors from which we subtracted 0.1, as in \cite{novak2018bayesian}. For the classification phase, as in \cite{li2019enhanced}, we normalized the kernel matrices such that all the diagonal elements are ones. To avoid ill conditioned kernel matrices we applied ridge regression with a regularization factor of $\lambda=5\cdot 10^{-5}$. Finally, to reduce overall running times, we parallelized the kernel computations on NVIDIA Tesla V100 GPUs.

\end{document}